\def\eqref#1{equation~\ref{#1}}
\def\1{\bm{1}}
\def\vv{{\bm{v}}}
\DeclareMathAlphabet{\mathsfit}{\encodingdefault}{\sfdefault}{m}{sl}
\SetMathAlphabet{\mathsfit}{bold}{\encodingdefault}{\sfdefault}{bx}{n}
\DeclareMathOperator*{\argmin}{arg\,min}
\def\g{\mathbf{g}}
\def\h{{h}}
\def\s{{s}}
\def\supp{\text{supp}}
\def\w{\mathbf{w}}
\def\x{{x}}
\def\A{{A}}
\def\B{{B}}
\def\F{\mathbf{F}}
\def\G{\mathbf{G}}
\def\OPM{\mathbf{S}}
\def\P{{P}}
\def\U{\mathbf{U}}
\def\V{\mathbf{V}}
\def\X{{X}}
\DeclareMathOperator{\rank}{rank}
\def\0{\boldsymbol{0}}
\def\1{\boldsymbol{1}}
\def\id{\mu_{\textup{id}}}
\def\ood{\mu_{\textup{ood}}}
\theoremstyle{plain}
\newtheorem{theorem}{Theorem}[section]
\newtheorem*{theorem*}{Theorem}
\newtheorem{example}[theorem]{Example}
\newtheoremstyle{uprightremark}
  {3pt}
  {3pt}
  {\normalfont}
  {}
  {\bfseries}
  {.}
  {.5em}
  {}
\theoremstyle{uprightremark}
\newtheorem{remark}[theorem]{Remark}
\title{GradPCA: Leveraging NTK Alignment \\ for Reliable Out-of-Distribution Detection}
\author{%
  Mariia Seleznova$^1$\thanks{Correspondence to: Mariia Seleznova (\texttt{selez@math.lmu.de}).}\hspace{1.2em}Hung-Hsu Chou$^2$\hspace{0.8em}Claudio Mayrink Verdun$^3$\hspace{0.8em}Gitta Kutyniok$^{1,4,5,6}$\\
$^1$Ludwig-Maximilians-Universität München\hspace{1.2em}$^2$University of Pittsburgh\hspace{1.2em}$^3$Harvard University\\
$^4$University of Tromso\hspace{0.4em}$^5$DLR-German Aerospace Center\hspace{0.4em}$^6$Munich Center for Machine Learning
}
\begin{document}

\maketitle

\begin{abstract}
We introduce \textbf{GradPCA}, an Out-of-Distribution (OOD) detection method that exploits the low-rank structure of neural network gradients induced by Neural Tangent Kernel (NTK) alignment. GradPCA applies Principal Component Analysis (PCA) to gradient class-means, achieving more consistent performance than existing methods across standard image classification benchmarks. We provide a theoretical perspective on spectral OOD detection in neural networks to support GradPCA, highlighting feature-space properties that enable effective detection and naturally emerge from NTK alignment. Our analysis further reveals that feature quality—particularly the use of pretrained versus non-pretrained representations—plays a crucial role in determining which detectors will succeed. Extensive experiments validate the strong performance of GradPCA, and our theoretical framework offers guidance for designing more principled spectral OOD detectors. 
\end{abstract}

\section{Introduction}
In modern deep learning, models often produce confident but incorrect predictions when presented with inputs outside their training distribution \citep{goodfellow2015adversarial,kurakin2017adversarial,nguyen2015fooled}. Out-of-Distribution (OOD) detection provides a mechanism for models to \textit{know when they don’t know}, enabling them to reject inputs beyond their domain of competence. As deep learning systems become integrated into critical decision-making processes, robust OOD detection is essential for ensuring safety and effective human oversight \citep{amodei2016safety}.

Although designed to improve the reliability of deep learning, OOD detection methods have themselves proven unreliable \citep{tajwar2021no, szyc2023out}. Their performance often hinges on subtle assumptions about the model, data, and training procedure, with little guidance as to when these assumptions hold. As a result, OOD detection effectiveness is assessed through empirical validation alone, making the performance difficult to predict and dependent on ad hoc tuning. In response to these challenges, we propose a new OOD detector that combines classical spectral analysis with recent insights into deep learning theory, offering a principled and interpretable approach.

Our main contributions are as follows:
\begin{itemize}
    \item \textbf{GradPCA:} Our method leverages the low-rank structure of gradients in well-trained neural networks (NNs), induced by the \textit{Neural Tangent Kernel (NTK) alignment} phenomenon \citep{atanasov_2022_kernel_alignment_phases, baratin_2021_ntk_feature_alignment_implicit_regularization, seleznova_2022_kernel_alignment_empirical_ntk_structure, shan_2021_kernel_alignment}. Under NTK alignment, gradients of in-distribution (ID) inputs concentrate within stable, low-dimensional subspaces spanned by class-specific directions. GradPCA performs PCA on gradient class-means to efficiently model this subspace and flag inputs whose gradients fall outside it. GradPCA is the first OOD detector to exploit NTK alignment, and its principled design ensures robust performance across realistic detection scenarios.

    \item \textbf{Revisiting spectral OOD detection:} To explain GradPCA’s empirical success, we present a theoretical framework for spectral OOD detection in neural networks (NNs). Our framework extends the principles of classical and kernel PCA \citep{hotelling1933PCA, scholkopf1998nonlinear} and allows to derive \emph{one-sided, per-sample OOD certificates} for spectral detectors---a theoretical guarantee rarely found in the predominantly empirical OOD detection literature.

    \item \textbf{Feature quality matters for OOD detection:} We show that the performance of OOD detectors critically depends on the quality of feature representations—specifically, whether they come from general-purpose (pretrained) or task-specific (non-pretrained) models. To demonstrate this, we evaluate existing OOD detectors on a carefully designed benchmark comprising pretrained and non-pretrained models with matched ID accuracy. Notably, methods that detect abnormal model behavior (e.g., confidence-based scores) often \emph{worsen} with improved feature quality, while approaches that exploit geometric regularities in feature space \emph{improve}. This underexplored factor helps reconcile inconsistencies in prior works and highlights the need to account for feature quality when designing future OOD detectors.

    \item \textbf{Empirical validation and open-source implementation:} We evaluate GradPCA exclusively on \emph{publicly available models} and \emph{community-released datasets}, avoiding manual subset selection or ad hoc model training. This principled setup mitigates common evaluation biases and supports fair comparison across diverse settings. We benchmark against widely used competitive baselines spanning diverse methodological approaches, including recent gradient-based detectors, and find that GradPCA delivers the most consistent performance, achieving near state-of-the-art results across all benchmarks. All code and experimental configurations are available at our \href{https://github.com/mselezniova/GradPCA}{GitHub repository}.

\end{itemize}
\paragraph{Notation and outline.}
We denote vectors and matrices by bold lowercase and uppercase letters, respectively (e.g., $\w$ for a vector and $\F$ for a matrix). Bold numerals such as $\0$ and $\1$ represent vectors of zeros and ones. The indicator function $\mathbbm{1}_A(x)$ equals 1 if $x \in A$ and 0 otherwise. We denote the in-distribution and out-distribution probability density functions by $d\id$ and $d\ood$, with corresponding measures $\id$ and $\ood$. The support of $\id$ is defined as $\text{supp}(\id) := \{x : d\id(x) > 0\}$.

We present the OOD detection problem setup in Section~\ref{sec:background}, GradPCA and key empirical insights in Section~\ref{sec:gradPCA}, theory in Section~\ref{sec:theory}, experimental setup in Section~\ref{sec:experiments}, and conclude with Section~\ref{sec:Conclusions and Broad Impact}.

\section{OOD Detection: Problem Setup and Methods}
\label{sec:background}
Before presenting our method, we formulate the OOD detection problem and introduce our categorization of existing OOD detection approaches.

\vspace{-1ex}
\paragraph{Problem formulation.}
The goal of OOD detection is to construct a detector
$D : \mathcal{X} \to \{0,1\}$ that returns $D(x) = 0$ if $x \sim \id$ and $D(x) = 1$ if $x \sim \ood$, for arbitrary $\ood$. However, this problem is fundamentally ill-posed: no detector is universally optimal across all possible $\ood$ (see Appendix~\ref{subsec:No-free-lunch detector}).
To make the problem well-posed, we reformulate it as \textit{support recovery}: identifying the support of $\id$, denoted $\text{supp}(\id)$. In this setting, the optimal detector is $D^* := \mathbbm{1}_{\mathcal{X} \setminus \text{supp}(\id)}$, which flags any input outside the support of $\id$ as OOD. We adopt this support-based perspective throughout and refer to any $x \notin \text{supp}(\id)$ as \textit{guaranteed} OOD---independent of $\ood$. This view aligns with practice: an input is effectively ID only if the model can correctly classify it, reinforcing that OOD detection should focus on the boundaries of reliable model behavior rather than distributional likelihoods.

\vspace{-1ex}
\paragraph{Taxonomy of OOD detectors.}
The vast body of work on OOD detection makes it challenging to precisely categorize existing approaches. However, for the purposes of this paper, we find it helpful to broadly classify existing methods into two categories based on their main \textit{assumptions}:
\begin{itemize}
    \item \textbf{Regularity-based:} These methods assume that ID data occupies a compact or structured region in a suitable \textit{feature space}. A sample is flagged as OOD if it deviates significantly from this ``regular'' region. Classical approaches such as density estimation and (kernel) PCA fall into this category, as do neural OOD detection methods such as  Mahalanobis~\citep{lee2018simple} and KNN~\citep{sun2022out}. Our method, {GradPCA}, also belongs to this class.
    
    \item \textbf{Abnormality-based:} These methods instead focus on identifying ``abnormality''---patterns or behaviors that are characteristic of OOD inputs. Rather than relying on proximity to ID samples in any well-defined space, they exploit cues such as predictive uncertainty (e.g., ODIN~\citep{liang2018reliability}, Energy~\citep{liu2020energy}) or atypical activation patterns (e.g., ReAct~\citep{sun2021react}, GAIA~\citep{chen2023gaia}).
    
\end{itemize}
Our experiments reveal that each class of methods performs best under different conditions (see Section~\ref{sec:performance}). In addition to this taxonomy, we include a detailed review of related work in Appendix~\ref{sec:related_works}.

\vspace{-1ex}
\paragraph{Spectral methods: PCA and Kernel PCA.} We focus on \emph{spectral} OOD detectors—a subset of the regularity-based category—since {GradPCA} belongs to this class. A canonical example of spectral methods is \emph{kernel PCA}, which operates in two steps:
\begin{enumerate}
    \item Choose a \textit{feature map} $h:\mathcal{X} \to \mathcal{H}$ that embeds the data into a suitable Hilbert space $\mathcal{H}$.
    \item Perform standard PCA in the space $\mathcal{H}$.
\end{enumerate}
The challenge lies in step (1): selecting an appropriate $h$ such that ID and OOD samples become separable in the induced space. Classical kernel PCA specifies a kernel function rather than an explicit feature map, but this merely shifts the difficulty to kernel design, retaining the full complexity of representation selection. Prior work on kernel PCA for OOD detection largely relied on generic or ad hoc kernels applied to network activations~\citep{guan2023revisit, fang2024kernel,hoffmann2007kernel,xiao2013l1}. Several recent works have also applied variants of spectral methods to gradients~\citep{wu2024low,behpour2023gradorth}. GradPCA adds to this line of works by employing the Neural Tangent Kernel (NTK)---a {task- and model-specific} kernel---and introducing an efficient procedure for performing PCA in gradient space. In the next section, we formalize GradPCA and show that it can be viewed as an approximate kernel PCA method using the NTK kernel.

\section{GradPCA}
\label{sec:gradPCA}
The core idea of {GradPCA} is to apply classical PCA in the space of neural network's gradients. While conceptually simple, this approach is not directly feasible for modern NNs due to the prohibitive size of the gradient covariance matrix. Consider a NN with output function $f: \mathcal{X} \to \mathbb{R}$ and parameters $\w \in \mathbb{R}^P$. For a dataset $\{(x_i, y_i)\}_{i=1}^N$, the empirical covariance matrix of gradients is given by:
\begin{equation}
    \widehat{\OPM} := \F \F^\top \in \mathbb{R}^{P \times P}, 
    \quad \text{where} \quad \F_{ij} := \nabla_{\w_i} f(x_j).
\end{equation}
In PCA, the standard approach for handling high-dimensional features is to compute the eigendecomposition of the smaller \textit{dual matrix} $\F^\top \F \in \mathbb{R}^{N \times N}$. However, in modern deep networks, both $P$ (number of parameters) and $N$ (dataset size) are typically too large for this to be practical. To address this, {GradPCA} exploits the inherent low-rank structure of the gradients induced by \textit{NTK alignment}.

\subsection{NTK Alignment}

A central observation in our approach is that the dual matrix $\F^\top \F \in \mathbb{R}^{N \times N}$ introduced above is exactly the empirical {Neural Tangent Kernel (NTK)} evaluated on the dataset:
\begin{equation}
    \widehat{\Theta} = \F^\top \F, \quad \text{where} \quad \widehat{\Theta}_{ij} := \bigl\langle \nabla_{\w} f(x_i),  \nabla_{\w} f(x_j) \bigr\rangle .
\end{equation}
This implies that PCA in gradient space hinges on spectral analysis of the NTK—the kernel that captures correlations between input samples in network's training dynamics.

\begin{wrapfigure}[14]{r}{0.45\textwidth}
\vspace{-1.4\baselineskip}
\centering
\includegraphics[width=\linewidth]{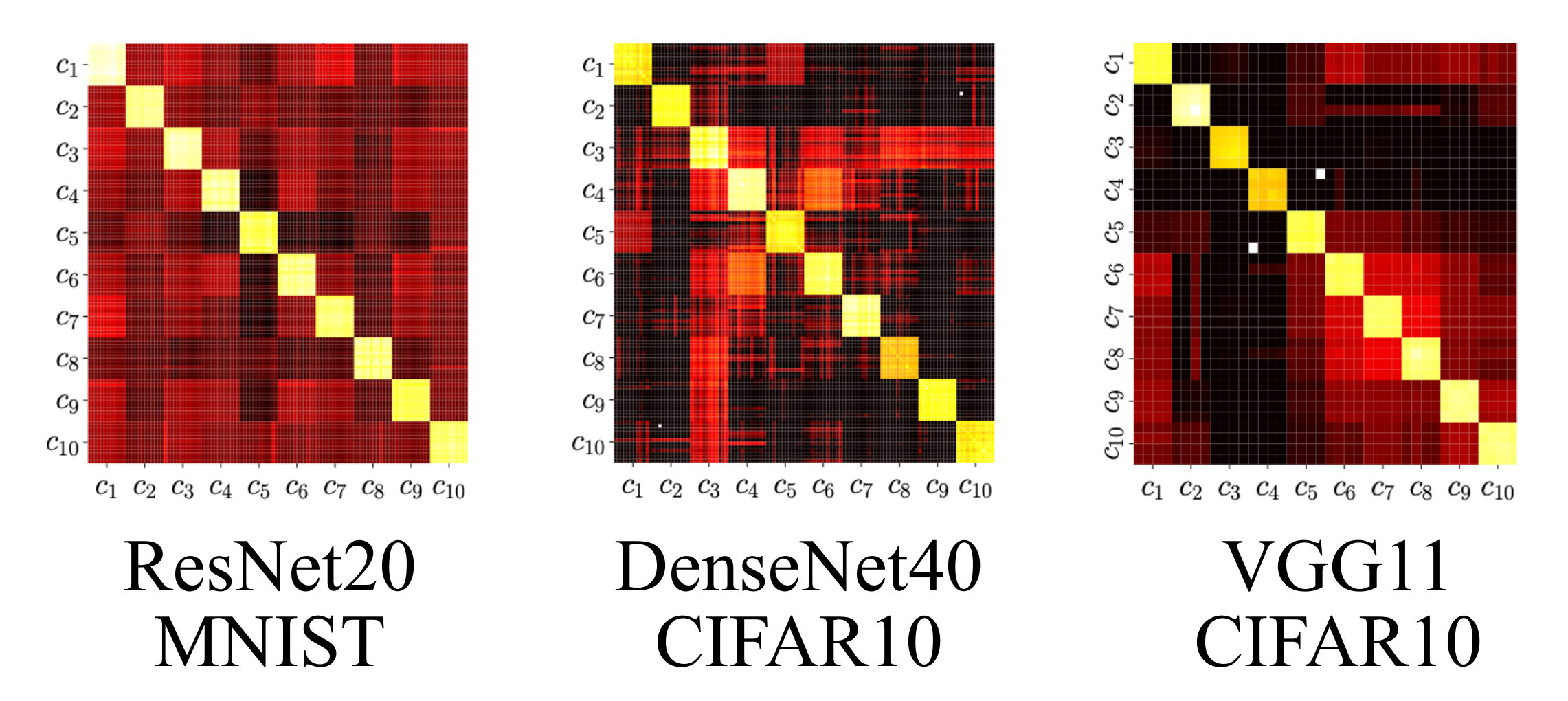}
\caption{\textbf{Block-diagonal NTK structure.} Heatmaps show the NTK matrix on ID data subset. Lighter colors indicate larger entries. Diagonal blocks correspond to samples from the same class. A detailed description is provided in Appendix~\ref{sec:ntk_block_structure}.}
\label{fig:block_structure}
\vspace{-1.4\baselineskip}
\end{wrapfigure}

A growing line of recent research shows that, during training, the empirical NTK of well-performing neural networks progressively aligns with the structure of the learning task—a phenomenon termed \textit{NTK alignment} \citep{atanasov_2022_kernel_alignment_phases, baratin_2021_ntk_feature_alignment_implicit_regularization,shan_2021_kernel_alignment}. This alignment reshapes the kernel spectrum in a task-relevant manner and is associated with stable training and strong generalization \citep{chen2020label,atanasov_2022_kernel_alignment_phases, seleznova2023ntkc}. This echoes classical kernel methods theory, where kernel–target alignment is a known predictor of generalization \citep{cristianini2001kernel, gonen2011multiple, canatar2021spectral}.

In classification problems, NTK alignment manifests as an approximate \textit{block-diagonal structure} in the NTK matrix: inputs from the same class exhibit strong correlations, while cross-class interactions are weak~\citep{seleznova2023ntkc}. This behavior is deeply related to the principle of \textit{local elasticity} \citep{He_2020_loacl_elasticity}---where, during training, each input primarily influences inputs belonging to the same class. As a result, gradients concentrate in a low-dimensional subspace \textit{spanned by class-specific directions}, inducing a pronounced low-rank structure. This structure is visualized for several neural network classifiers in Figure~\ref{fig:block_structure}. More formally, for a classification problem with $C$ balanced classes, each containing $m := N/C$ samples, the NTK takes the following form:
\begin{equation}\label{eq:block_structure}
    \widehat\Theta = \G^\top \G \otimes \1_m\1_m^\top + \bm{\xi}, \quad \G := [\g^1, \dots, \g^C], \quad \g^k:= \dfrac{1}{m}\sum_{x_i \in \textup{class } k} \nabla_\w f(x_i),
\end{equation}
where $\otimes$ denotes the Kronecker product. The leading term is rank-$C$ (typically $C \ll N, P$), and $\bm{\xi}$ captures deviations from the perfect alignment. When alignment is strong, $\|\bm{\xi}\| \leq \epsilon$ for some small $\epsilon$, and the NTK spectrum is dominated by the block-structured term. This approximation underpins our method, and we analyze its robustness with respect to the residual term $\bm{\xi}$ in Section~\ref{sec:pca_connection}.

\subsection{Algorithm}

The final observation underlying {GradPCA} is that the eigendecomposition of the block-structured matrix $\G^\top \G \otimes \1_m\1_m^\top$ depends solely on the class-mean gradients $\g^1,\dots,\g^C$. As a result, there is no need to store or process the full dataset (or even a large batch) to approximate the principal components of the gradient covariance. It suffices to compute—or approximate—the $C$ class-mean gradient vectors, which greatly reduces the computational cost. The resulting method reduces to performing PCA on a modestly-sized matrix and is summarized in Algorithm~\ref{alg:gradPCA}.

\begin{algorithm}[ht]
\caption{\textsc{GradPCA}}
\label{alg:gradPCA}
\SetAlgoNlRelativeSize{-1}
\SetAlCapSkip{0.5em}
\SetKwInOut{Input}{Input}
\SetKwInOut{Output}{Output}
\Input{ Test input $\x$, output function $f:\mathcal{X}\to\mathbb{R}$, ID dataset $\mathcal{D}$, threshold $\delta$, number of PCs $k$}
\Output{ Detector decision $D(x) \in \{0,1\}$}
\SetKwBlock{TrainPhase}{\textbf{Offline Stage (Training)}}{}
\SetKwBlock{InferPhase}{\textbf{Online Stage (Inference)}}{}
\TrainPhase{
    Compute class-mean gradients $\g^1, \dots, \g^C$ and global mean $\bar\g$\;
    Form centered matrix $\bar\G := [\g^1 - \bar\g, \dots, \g^C - \bar\g]$\; 
    Compute eigendecomposition of $\bar{\Theta} := \bar\G^\top \bar\G \in\mathbb{R}^{C\times C}$ via $\bar\Theta = \V \Sigma \V^\top$\;
    Compute PCs of $\bar{\OPM} := \bar\G \bar\G^\top\in\mathbb{R}^{P\times P}$ via $\U_k = \bar\G\V_k\Sigma_k^{-1/2}$\;
    Construct projection matrix $\mathcal{P}:= \U_k \U_k^\top$ onto the span of PCs of $\bar{\OPM}$\;
}
\InferPhase{
    Compute centered gradient $\bar\g(x) := \nabla_\w f(x) - \bar\g$\;
    Compute score $\s(\x) := \|\mathcal{P} \bar\g(\x)\| / \| \bar\g(\x)\|$\;
    \Return{$D(x):=\mathbbm{1}_{[0,\delta)}(\s(\x))$}
}
\end{algorithm}
\begin{remark}[Computation of PCs]
Since $\bar\Theta\in\mathbb{R}^{C\times C}$ and $\bar{\OPM}\in\mathbb{R}^{P\times P}$ share the same nonzero eigenvalues, PCA can be performed in the smaller $C$-dimensional space. We compute the eigendecomposition
\begin{equation}
\bar\Theta = \V \Sigma \V^\top,
\end{equation}
where $\V=[v_1,\dots,v_C]$ and $\Sigma=\mathrm{diag}(\sigma_1,\dots,\sigma_C)$ with $\sigma_1\ge\dots\ge\sigma_C\ge 0$, and keep the top~$k$ eigenpairs by setting $\V_k=[v_1,\dots,v_k]$ and $\Sigma_k=\mathrm{diag}(\sigma_1,\dots,\sigma_k)$. The lifted PCs and the associated projection matrix are then given by
\begin{equation}
\U_k := \bar\G\V_k\Sigma_k^{-1/2},
\qquad
\mathcal{P} := \U_k \U_k^\top.
\end{equation}
The resulting columns of $\U_k$ are precisely the top $k$ principcal components of $\bar{\OPM}$.
\end{remark}
\begin{remark}[Score function]
We use the fraction of the centered gradient norm preserved by the principal subspace as the OOD detector's score $s(x) := {\|\mathcal{P}\,\bar{\g}(x)\|}/{\|\bar{\g}(x)\|}$, where ID inputs typically yield larger values. A classical PCA-based detector would instead use the reconstruction error 
$\|\bar{\g}(x) - \mathcal{P}\bar{\g}(x)\|$.  
However, recent work~\citep{guan2023revisit} has shown that the \emph{angle} between a vector and its 
projection is often more informative for OOD detection than the magnitude of the residual.  
Our score is precisely an angle-based measure, since
$s(x) = \cos\!\big(\angle(\bar{\g}(x),\,\mathcal{P}\bar{\g}(x))\big)$. We provide a formal justification for why scores of this type, as well as spectral methods more broadly, are principled and effective for OOD detection in Section~\ref{sec:theory}.
\end{remark}
\begin{remark}[Aggregation] An important practical consideration is that our method is formulated for scalar-valued output functions, whereas standard classifiers produce vector-valued outputs in $\mathbb{R}^C$. As a result, GradPCA requires an \textit{aggregation} step to combine information across output dimensions. In the default version of our implementation, we use the maximum of the network's logits as the scalar output, i.e., $f(x) = \max_{c\in\{1,\dots,C\}} f^c(x)$, where $f^c(x)$, $c\in\{1,\dots,C\}$ are the logits on input~$x$.
\end{remark}

\begin{remark}[Using a subset of parameters.]
To further reduce computational cost, we compute gradients with respect to only a subset of parameters—particularly useful when the number of classes $C$ or parameters $P$ is large. Importantly, GradPCA is flexible: it can operate on arbitrary parameter subsets and is not tied to any specific layer or parameter group.
Moreover, our ablations in Appendix~\ref{sec:param_set_ablation} show that different parameter subsets may be optimal for different models, likely reflecting which layers carry the most information for OOD detection.
Additional implementation details are provided in Section~\ref{sec:experiments} and Appendix~\ref{sec:implementation}.
\end{remark}

\subsection{GradPCA Performance: Consistency and Effects of Feature Quality}\label{sec:performance}
Before detailing our experimental setup in Section~\ref{sec:experiments}, we first highlight the main empirical insights.

\vspace{-1ex}
\paragraph{Consistency.} A central observation in our experiments is that OOD detectors' performance can be \emph{highly inconsistent}—even across models with the same architecture and ID dataset—an issue noted in prior work~\citep{tajwar2021no, szyc2023out}. To address this, our evaluation emphasizes consistency by comparing methods across diverse benchmarks designed to reveal both strengths and weaknesses of each approach. As shown in Figure~\ref{fig:average_scores}, {GradPCA} achieves the highest average performance among competitive baselines, ranking within the top three methods in virtually every setting. This demonstrates stability of GradPCA's performance, in contrast to the high variability observed for many baselines. We attribute this robustness to the principled foundation of our approach: since NTK alignment is pervasive in well-trained networks, and is tied to strong performance \citep{chen2020label,shan_2021_kernel_alignment,seleznova2023ntkc}, {GradPCA} is expected to generalize well across a broad range of realistic settings. Beyond consistency with respect to architectural and dataset choices, prior work has shown that OOD detection can also be surprisingly sensitive to the random seed used during training~\citep{fang2024revisiting}. We examine this factor in Appendix~\ref{sec:random_seed} and find that GradPCA remains stable across multiple training runs with different seeds.
\begin{figure}[ht]
    \centering
    \includegraphics[width=\linewidth]{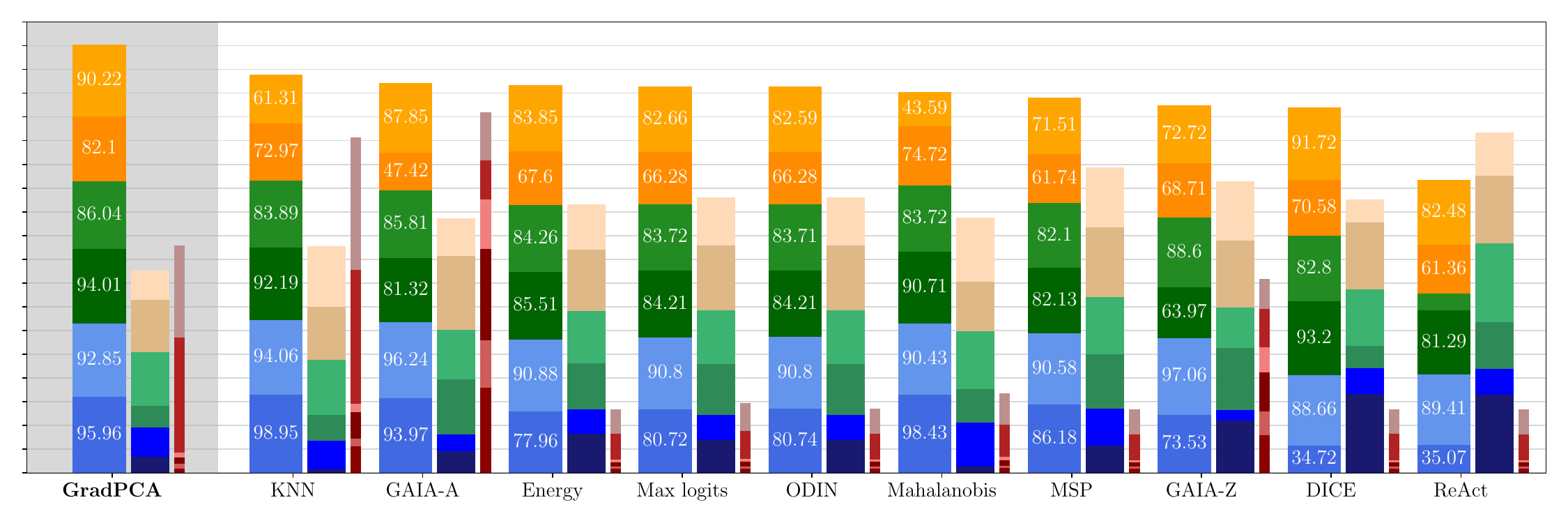}
    \caption{Performance comparison of OOD detection methods across multiple settings. For each method, the left bar shows the stacked average \textbf{AUC scores} $\uparrow$ on $6$ benchmarks described in Section~\ref{sec:experiments} (in order from bottom to top): (1) \textcolor{RoyalBlue}{CIFAR-10} BiT-M (pretrained, Table~\ref{table:cifar10}), (2) \textcolor{RoyalBlue}{CIFAR-10} TIMM (Table~\ref{table:cifar10}), (3) \textcolor{ForestGreen}{CIFAR-100} BiT-M (pretrained, Table~\ref{table:cifar100}), (4) \textcolor{ForestGreen}{CIFAR-100} TIMM (Table~\ref{table:cifar100}), (5) \textcolor{BurntOrange}{ImageNet} BiT-M (pretrained, Table~\ref{table:imagenet}), (6) \textcolor{BurntOrange}{ImageNet} BiT-S (Table~\ref{table:imagenet}). The middle bar shows the stacked average \textbf{FPR95 scores} $\downarrow$ for each method. The right bar shows the stacked \textbf{runtime} per sample estimates (Table \ref{table:runtime}). The methods are ordered left to right by the average AUC score.}
    \label{fig:average_scores}
\end{figure}

\vspace{-2ex}
\paragraph{Effects of feature quality.} While prior work has shown that OOD detection performance can vary significantly due to subtle factors such as minor architecture details, random seeds, or dataset splits~\citep{szyc2023out}, there is little guidance on how to choose or adapt methods accordingly. We advance this discussion by identifying \textit{feature quality}---whether features come from general-purpose (pretrained) or task-specific (non-pretrained) models---as a key factor for OOD detection performance. We further explain its role through our categorization of OOD detectors into \textit{regularity-based} and \textit{abnormality-based} (see Section~\ref{sec:background}). Regularity-based methods perform best with general-purpose features of pretrained models, while abnormality-based methods are more effective when models are trained from scratch, likely because general-purpose features suppress the irregularities these methods aim to detect. Our benchmarks are designed to reflect this: for each ID dataset, we evaluate on two models with similar architectures and ID accuracy but differing feature quality---one pretrained on a large general dataset and fine-tuned on the ID dataset, the other trained directly on the ID dataset. Our results (Figure~\ref{fig:average_scores}) show that regularity-based methods (GradPCA, KNN \citep{sun2022out}, Mahalanobis \citep{lee2018simple}) tend to excel on pretrained models, while abnormality-based methods (GAIA \citep{chen2023gaia}, ODIN \citep{liang2018reliability}, Energy~\citep{liu2020energy}) are closer to state-of-the-art in the non-pretrained settings. This observation offers practical guidance: regularity-based methods are preferable when strong, pretrained features are available, while abnormality-based methods may be more suitable in lower-quality or non-pretrained regimes.

\vspace{-1ex}
\paragraph{Computational cost.} GradPCA achieves competitive inference-time efficiency through its parallelized implementation and support for batch evaluation, performing on par with fast logits-based methods such as MSP and ODIN on CIFAR (see Figure~\ref{fig:average_scores}). As with other regularity-based methods, it incurs two additional computational costs: memory usage (requiring storage of $O(C)$ vectors) and a training phase. On modern hardware, GradPCA remains practical on ImageNet, processing over 100 samples per second in our setup (see Appendix~\ref{sec:compute}), and provides favorable trade-offs in applications prioritizing robustness and interpretability.

\section{Spectral OOD Detection for Neural Networks}
\label{sec:theory}
%
Spectral methods for OOD detection apply PCA to representations defined by a feature map $h\colon~\mathcal{X}~\to~\mathcal{H}$, which embeds inputs into a Hilbert space where ID data exhibit meaningful spectral structure. The effectiveness of such methods depends critically on the choice of $h$. In this section, we formalize the principles underlying spectral OOD detection.

\subsection{Sufficient Condition for Spectral OOD Detection}

We begin with a simple yet broadly applicable result showing how the structure of the covariance matrix can be leveraged for OOD detection. Specifically, the following theorem (proven in Appendix~\ref{sec:OOD_proof}) establishes that any point with a component outside the range of the covariance matrix is \textit{guaranteed} to be OOD. This yields \emph{one-sided, per-sample OOD certificates} for spectral detectors.
\begin{theorem}[Sufficient condition for spectral OOD detection]\label{theorem:OOD_general}
    Let $\X\sim\id$ and $\h:\mathcal{X}\to\mathbb{R}^\P$ be any function in $L^2(\id)$. Consider the covariance matrix
    \begin{equation}\label{def:OPM}
        \OPM(\h):=\mathbb{E}[\h(\X)\h(\X)^\top].
    \end{equation}
    Let $\mathcal{P}\h(\x)$ be the orthogonal projection of $\h(\x)$ onto the range of $\OPM(\h)$. For any $\x\in\mathcal{X}$, if $\|\mathcal{P}\h(\x)\|_2<\|\h(\x)\|_2$ and $\h$ is continuous at $\x$, then $\x$ is OOD.
\end{theorem}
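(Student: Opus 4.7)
The plan is to argue by contrapositive: assume $x \in \supp(\id)$ and deduce that every direction in which $h(x)$ has a component must lie in the range of $\OPM(\h)$, so that $\|\mathcal{P}\h(x)\|_2 = \|\h(x)\|_2$.

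First, I would exploit the decomposition $\h(x) = \mathcal{P}\h(x) + \mathcal{P}^\perp \h(x)$, where $\mathcal{P}^\perp$ projects onto the kernel of $\OPM(\h)$ (since $\OPM(\h)$ is symmetric positive semidefinite, $\mathbb{R}^\P$ splits orthogonally into $\mathrm{range}(\OPM(\h)) \oplus \ker(\OPM(\h))$). The hypothesis $\|\mathcal{P}\h(x)\|_2 < \|\h(x)\|_2$ is equivalent to $\mathcal{P}^\perp \h(x) \neq 0$, so I can set
\begin{equation}
    \v := \frac{\mathcal{P}^\perp \h(x)}{\|\mathcal{P}^\perp \h(x)\|_2} \in \ker(\OPM(\h)), \qquad \v^\top \h(x) = \|\mathcal{P}^\perp \h(x)\|_2 > 0 .
\end{equation}

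Next I would translate membership of $\v$ in $\ker(\OPM(\h))$ into an almost-sure vanishing statement: since $\OPM(\h)\v = 0$, we get
\begin{equation}
    0 = \v^\top \OPM(\h)\v = \mathbb{E}\bigl[(\v^\top \h(\X))^2\bigr],
\end{equation}
so $\v^\top \h(\X) = 0$ for $\id$-almost every $\X$, i.e., the set $N := \{y \in \mathcal{X} : \v^\top \h(y) \neq 0\}$ satisfies $\id(N) = 0$.

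Then I would use continuity of $\h$ at $x$ to propagate the strict inequality $\v^\top \h(x) > 0$ to an open neighborhood $U$ of $x$ on which $\v^\top \h > 0$; thus $U \subseteq N$. The final step is to derive a contradiction with $x \in \supp(\id)$: under the paper's pointwise definition $\supp(\id) = \{y : d\id(y) > 0\}$, together with the mild regularity implicit in working with a density (e.g., lower semicontinuity, or positivity of $d\id$ on a neighborhood of $x$), the neighborhood $U$ must carry positive $\id$-mass, contradicting $\id(N) = 0$. Hence $x \notin \supp(\id)$, which by the paper's definition means $x$ is guaranteed OOD.

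The only delicate point is the last step: the paper defines the support pointwise via the density, so the argument must justify that positivity of the density at $x$ implies positivity of $\id$-mass on some neighborhood of $x$. I expect this to be handled either by an implicit regularity assumption on $d\id$ or by noting that the conclusion can be stated as ``$x \notin \supp(\id)$ up to a $\id$-null set,'' which suffices for the detection interpretation. The remaining parts—splitting into range/kernel, translating kernel membership into an a.s. identity, and propagating by continuity—are routine once the contrapositive framing is set up.
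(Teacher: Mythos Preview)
Your proposal is correct and follows essentially the same route as the paper: pick a vector $\v$ in $\ker(\OPM(h))$ proportional to $h(x)-\mathcal{P}h(x)$, use $\v^\top\OPM(h)\v=0$ to conclude $\v^\top h(X)=0$ $\id$-a.e., then use continuity at $x$ to find a neighborhood where $\v^\top h>0$ and derive a contradiction from the positive $\id$-mass of that neighborhood. The paper simply asserts ``since $x\in\supp(\id)$, $\id(B_\epsilon(x))>0$'' without further comment, so your flagging of the density/regularity issue is, if anything, more careful than the original.
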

Although this result resembles the elementary linear algebra fact that ranges of $\A$ and $\A\A^\top$ are equal, which is used in some classical PCA works \citep{scholkopf1998nonlinear,hastie2009elements}, our version introduces a probabilistic perspective that, to our knowledge, has not been formally stated in the OOD detection literature. In our view, this provides foundational motivation for the following detector:
\begin{equation}\label{eq:detector_def}
    D_h(x) := \mathbbm{1}_{[0,\delta)}(\s(\x)), \quad 
    s(x) := \dfrac{\|\mathcal{P}h(x)\|^2}{\|h(x)\|^2},
\end{equation}
where $s:\mathcal{X}\to[0,1]$ is a \textit{score function} and $\delta\in(0,1]$ is a threshold. We next show how this connects to the practical use of PCA for OOD detection.

\subsection{Connection to PCA and GradPCA}\label{sec:pca_connection}

We now show how Theorem~\ref{theorem:OOD_general} yields a practical OOD certificate for PCA-based detectors when the empirical covariance matrix approximates a low-rank population covariance $\OPM(h)$.
\begin{theorem}[Robust OOD certificate for PCA]\label{theorem:pca_connection}
    Consider PCA applied to a matrix $\hat\OPM\in\mathbb{R}^{P\times P}$ (e.g., estimated from a noisy sample). Let $h$ and $\OPM(h)$ be as in Theorem~\ref{theorem:OOD_general}, and assume the following:
    \begin{equation}
        \|\OPM(h) - \hat\OPM\|_2 \leq \epsilon, \quad \rank (\OPM(h)) = C,
    \end{equation}
i.e., $\hat\OPM$ approximates a rank-$C$ covariance matrix $\OPM(h)$. Let $\hat{\mathcal{P}}_C$ denote the orthogonal projector onto the top $C$ eigenvectors of $\hat{\OPM}$, and let $\lambda_C$ be the $C$-th largest eigenvalue of $\OPM(h)$. Then, for any input $x \in \mathcal{X}$, the following condition is sufficient to guarantee that $x$ is OOD:
\begin{equation}\label{eq:PCA_sufficient_cond}
    s_{\textup{PCA}}(x) < 1 - \dfrac{2\epsilon}{\lambda_C - \epsilon}, \quad s_{\textup{PCA}}(x):= \dfrac{\|\hat{\mathcal{P}}_C h(x)\|^2}{\|h(x)\|^2}.
\end{equation}
\end{theorem}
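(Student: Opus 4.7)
The plan is to combine Theorem~\ref{theorem:OOD_general} (which characterizes ID points as lying in the range of $\OPM(h)$) with a spectral perturbation argument (Davis--Kahan plus Weyl) that compares the noisy projector $\hat{\mathcal{P}}_C$ to the true projector $\mathcal{P}$ onto $\mathrm{range}(\OPM(h))$. Since $\rank(\OPM(h))=C$, the range of $\OPM(h)$ is precisely the span of its top $C$ eigenvectors, so $\mathcal{P}$ coincides with the exact top-$C$ projector, call it $\mathcal{P}_C$. Theorem~\ref{theorem:OOD_general} (in contrapositive form) then says that for any ID input $x$ with $h$ continuous at $x$, $\mathcal{P}_C h(x)=h(x)$; this is the only fact about ID points we will need.

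Next, I would control $\|\mathcal{P}_C-\hat{\mathcal{P}}_C\|_2$. By Weyl's inequality applied to $\OPM(h)$ and $\hat\OPM$, the $(C{+}1)$-th eigenvalue of $\hat\OPM$ satisfies $\hat\lambda_{C+1}\leq\lambda_{C+1}+\epsilon=\epsilon$, while $\hat\lambda_C\geq\lambda_C-\epsilon$. The spectral gap separating the top-$C$ eigenvalues of $\OPM(h)$ from the bottom tail of $\hat\OPM$ (or vice versa) is therefore at least $\lambda_C-\epsilon$. Applying the Davis--Kahan $\sin\Theta$ theorem in the form
\begin{equation}
\|\mathcal{P}_C-\hat{\mathcal{P}}_C\|_2 \;\leq\; \frac{\|\OPM(h)-\hat\OPM\|_2}{\lambda_C-\hat\lambda_{C+1}} \;\leq\; \frac{\epsilon}{\lambda_C-\epsilon}
\end{equation}
(valid whenever $\lambda_C>\epsilon$ so the denominator is positive) yields the operator-norm closeness of the two projectors.

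From here the argument is short. For any ID point $x$ (with $h$ continuous at $x$), using $\mathcal{P}_C h(x)=h(x)$ and the reverse triangle inequality,
\begin{equation}
\|\hat{\mathcal{P}}_C h(x)\|_2 \;\geq\; \|\mathcal{P}_C h(x)\|_2 - \|(\hat{\mathcal{P}}_C-\mathcal{P}_C)h(x)\|_2 \;\geq\; \bigl(1-\tfrac{\epsilon}{\lambda_C-\epsilon}\bigr)\|h(x)\|_2.
\end{equation}
Squaring and using $(1-\alpha)^2\geq 1-2\alpha$ for $\alpha\geq 0$ gives $s_{\textup{PCA}}(x)\geq 1-\tfrac{2\epsilon}{\lambda_C-\epsilon}$ for every ID $x$. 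Contraposing: any $x$ with $s_{\textup{PCA}}(x)<1-\tfrac{2\epsilon}{\lambda_C-\epsilon}$ must be OOD, which is exactly \eqref{eq:PCA_sufficient_cond}.

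The only real obstacle is picking the right variant of Davis--Kahan so the denominator comes out as $\lambda_C-\epsilon$ rather than $\lambda_C-2\epsilon$ (which would arise from using the gap between $\hat\lambda_C$ and $\hat\lambda_{C+1}$); this is a minor bookkeeping issue resolved by comparing eigenvalues of $\OPM(h)$ with those of $\hat\OPM$ rather than staying entirely inside the spectrum of $\hat\OPM$. A secondary caveat is that the statement as written implicitly inherits the continuity hypothesis on $h$ at $x$ from Theorem~\ref{theorem:OOD_general}; I would either flag this assumption explicitly or note that in the intended applications $h$ is a neural-network gradient, which is continuous almost everywhere.
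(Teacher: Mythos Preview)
Your proposal is correct and follows essentially the same route as the paper: Weyl to lower-bound the spectral gap by $\lambda_C-\epsilon$, Davis--Kahan to control $\|\mathcal{P}_C-\hat{\mathcal{P}}_C\|_2$, and Theorem~\ref{theorem:OOD_general} to conclude. The only cosmetic difference is in the final step: the paper bounds $|s(x)-s_{\textup{PCA}}(x)|$ directly via the Rayleigh-quotient identity $\|\mathcal{P}h\|^2-\|\hat{\mathcal{P}}_Ch\|^2=h^\top(\mathcal{P}-\hat{\mathcal{P}}_C)h$ (using idempotency of projections) together with the factor-$2$ form of Davis--Kahan, whereas you use the sharper $\sin\Theta$ bound and then reintroduce the factor of $2$ when squaring via $(1-\alpha)^2\geq 1-2\alpha$.
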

The proof, based on standard projection bounds and the Davis–Kahan theorem \citep{davis1970perturbation}, is provided in Appendix~\ref{sec:PCA_connection_proof}. This result characterizes the \emph{robustness} of spectral OOD detection to perturbations in the covariance structure—arising from noise, finite-sample effects, or deviations from an ideal low-rank model (e.g., the residual term $\bm{\xi}$ in ~\eqref{eq:block_structure}). This framework aligns naturally with GradPCA, where we approximate the empirical (potentially full-rank) gradient covariance using a low-rank surrogate constructed from class-mean gradient vectors. 


\subsection{Necessary Condition for Spectral OOD Detection}

The results discussed so far provide only \textit{sufficient}, but \textit{not necessary}, conditions for identifying a point $x\in\mathcal{X}$ as OOD. Indeed, depending on the choice of $h$, Theorem~\ref{theorem:OOD_general} may allow to identify all OOD points or, in the worst case, none at all. We illustrate this by the following two examples.
\begin{example}[Best case]
Let $\h:\mathcal{X}\to\mathbb{R}^\P$ be an indicator function of $\id$ support, defined as $\h(\x)= \vv \cdot \mathbbm{1}_{\textup{supp}(\id)}(\x)$
for some fixed nonzero vector $\vv \in \mathbb{R}^P$. Then, $x \in \mathcal{X}$ is OOD if and only if $\|\mathcal{P} \h(x)\|_2 < \|\h(x)\|_2$, and detector $D_h$ (defined in \eqref{eq:detector_def}) identifies all OOD points. 
\end{example}
\begin{example}[Worst case] Let $\h:\mathcal{X}\to\mathbb{R}^\P$ be a function such that $\rank( \OPM(h) ) = P$.
Then $\mathcal{P}\h(\x)=\h(\x)$ for all $\x\in\mathcal{X}$, and no OOD points can be detected by $D_h$.    
\end{example}
As illustrated by the worst-case example, the following condition is necessary for spectral OOD detection to be effective. A formal proof is provided in Appendix~\ref{proof:necessary_condition}.
\begin{theorem}[Necessary condition for spectral OOD detection]\label{theorem:OOD_necessary_condition}
    Consider the same setting as in Theorem \ref{theorem:OOD_general}, and let $D_h$ be the spectral OOD detector defined in \eqref{eq:detector_def}, with an arbitrary threshold $\delta \in (0,1]$. Then, the following condition is necessary for $D_h$ to be effective (i.e., have non-zero sensitivity):
    \begin{equation}
        \rank(\OPM(h)) < \dim\left(\{h(x): x \in \mathcal{X}\}\right).
    \end{equation}
\end{theorem}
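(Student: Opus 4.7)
The plan is to prove the contrapositive: suppose $\rank(\OPM(h)) \geq \dim(\mathrm{span}\{h(x): x \in \mathcal{X}\})$, and deduce that $D_h$ has zero sensitivity for every admissible threshold $\delta \in (0,1]$. Since $\OPM(h) v = \mathbb{E}[h(X)(h(X)^\top v)]$ is an integral of scalar multiples of $h(X)$, its range is always contained in the linear span of $\{h(x): x \in \supp(\id)\}$, which in turn sits inside $V := \mathrm{span}\{h(x): x \in \mathcal{X}\}$; hence $\rank(\OPM(h)) \leq \dim V$ automatically, and the hypothesized inequality collapses to an equality.

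The proof then splits into three short steps. First, I would formalize the inclusion $\mathrm{range}(\OPM(h)) \subseteq V$: in finite dimensions this is a standard approximation argument, writing $\OPM(h) v$ as a limit of finite averages of vectors of the form $h(x_i)(h(x_i)^\top v)$, each lying in $V$. Second, under the dimension equality, both inclusions $\mathrm{range}(\OPM(h)) \subseteq \mathrm{span}\{h(x): x \in \supp(\id)\} \subseteq V$ must be equalities, so $h(x) \in \mathrm{range}(\OPM(h))$ for every $x \in \mathcal{X}$. Consequently, the orthogonal projector $\mathcal{P}$ onto this range fixes $h(x)$, giving
\begin{equation*}
s(x) \;=\; \frac{\|\mathcal{P} h(x)\|^2}{\|h(x)\|^2} \;=\; 1
\end{equation*}
whenever $h(x) \neq 0$. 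Third, since $\delta \in (0,1]$, the strict inequality $s(x) < \delta$ never holds, so $D_h(x) = 0$ for all $x \in \mathcal{X}$, proving zero sensitivity.

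The argument is mostly elementary linear algebra wrapped around the probabilistic interpretation of $\mathrm{range}(\OPM(h))$ already used in Theorem~\ref{theorem:OOD_general}. The only subtle point is the first step—characterizing the range of $\OPM(h)$ as exactly $\mathrm{span}\{h(x): x \in \supp(\id)\}$ in the $L^2(\id)$ setting—which rests on the standard observation that the null space of a PSD second-moment matrix is precisely the orthogonal complement of this span (if $v^\top \OPM(h) v = \mathbb{E}[(v^\top h(X))^2] = 0$, then $v^\top h = 0$ $\id$-a.e.). The edge case $h(x) = 0$ is benign: the score is then $0/0$, such inputs can be assigned $D_h(x) = 0$ by convention, and they cannot contribute to sensitivity.
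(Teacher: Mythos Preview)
Your proposal is correct and follows essentially the same approach as the paper: argue the contrapositive by noting that $\mathrm{range}(\OPM(h)) \subseteq \mathrm{span}\{h(x):x\in\mathcal{X}\}$, deduce equality of these subspaces under the rank hypothesis, and conclude $\mathcal{P}h(x)=h(x)$ so that $s(x)=1$ never falls in $[0,\delta)$. Your write-up is simply more detailed than the paper's version—the intermediate subspace $\mathrm{span}\{h(x):x\in\supp(\id)\}$, the null-space characterization, and the $h(x)=0$ edge case are all extra care that the paper omits but that do not change the underlying argument.
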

In other words, the image of the OOD data must not lie entirely within the image of the ID data. The ideal setting occurs when the feature map $h$ embeds all ID points into a low-rank subspace while mapping OOD points outside of it---precisely the structure exhibited in the best-case example. In the absence of any knowledge about the distribution or the structure of $h$, it is generally intractable to formulate necessary and sufficient conditions for a given point $x\in\mathcal{X}$ to be OOD. 

%
\subsection{Choice of Feature Map for Spectral OOD Detection in Neural Networks}
%
Based on the previous discussion, an effective feature map $h$ for spectral OOD detection should satisfy two key principles consistent with standard PCA and kernel PCA intuition: (1) The image of ID data $\{h(x) : x \in \supp(\id)\}$ should concentrate in a low-rank subspace; (2) the image of OOD data $\{h(x) : x \not\in \supp(\id)\}$ should fall outside this subspace. We now evaluate the following feature spaces commonly used in OOD detection methods against these criteria: \textit{logits}, \textit{hidden activations} (hidden-layer features), and \textit{gradients}. 

\textbf{Logits:}
In classification tasks, the logits are a vector in $\mathbb{R}^C$, where $C$ is the number of classes. Since logits are trained to approximate one-hot vectors of class labels, the image of ID data $\{h(x) : x\in\supp(\id)\}$ spans the whole $\mathbb{R}^C$. As a result, the logits space lacks the low-rank structure necessary for effective spectral OOD detection. Consistent with this, competitive logits-based OOD detectors (e.g., ODIN \citep{liang2018reliability}, Energy \citep{liu2020energy}) do not rely on spectral methods.

\textbf{Hidden activations:} The spectral structure of hidden-layer features depends heavily on the choice of layer. When $h$ maps to the \textit{penultimate layer} features, the \textit{Neural Collapse} (NC) phenomenon~\citep{papyan2020prevalence} shows that the ID data $\{h(x) : x\in\supp(\id)\}$ indeed often concentrates in a subspace of rank~$C$,  much smaller than the ambient dimension. However, this structure degrades in earlier layers~\citep{parker2023neural}. Several regularity-based OOD detectors (e.g. Mahalanobis \citep{lee2018simple} or KNN  \citep{sun2022out}) operate in the activations space, suggesting that it often retains sufficient structure in practice. In addition, a recent work proposed an OOD detector explicitly based on the NC phenomenon~\citep{liu2025detecting}. That said, it remains unclear whether OOD data $\{h(x) : x\notin\supp(\id)\}$ consistently deviates from the ID subspace.  In fact, results from the \textit{adversarial examples} literature \citep{carlini2017adversarial} shows that OOD inputs can be crafted to effectively mimic ID features in hidden layers, which limits effectiveness of spectral detectors in hidden activations space.

\textbf{Gradients:} As discussed in Section~\ref{sec:gradPCA}, the NTK alignment phenomenon implies that the ID gradients $\{h(x) : x\in\supp(\id)\}$ approximately lie in a low-dimensional subspace of rank $C$. Empirically, this low-rank structure is robust across architectures and datasets~\citep{baratin_2021_ntk_feature_alignment_implicit_regularization,atanasov_2022_kernel_alignment_phases,shan_2021_kernel_alignment,seleznova2023ntkc}. Notably, NTK alignment has been observed to happen more often than NC~\citep{seleznova2023ntkc}, and is not limited to the final stage of training \citep{fort2020deep}. Unlike forward-pass features, gradients belong to a much higher-dimensional space, making the aligned subspace significantly harder to mimic---e.g., through adversarial examples. This high ambient dimensionality amplifies spectral separation, reducing the likelihood that OOD samples project strongly onto the principal components of the ID-induced subspace.

\section{Experiments}
\label{sec:experiments}

In this section, we outline the technical setup of our experiments. We evaluate on three ID datasets: CIFAR-10 (Table~\ref{table:cifar10}), CIFAR-100 (Table~\ref{table:cifar100}), and ImageNet-1k (Table~\ref{table:imagenet}, Table~\ref{table:vit_imagenet}). Aggregated results across the main benchmarks are summarized in Figure~\ref{fig:average_scores}. To ensure reproducibility and control for variability, we use only \textit{publicly available} datasets and models. Benchmarks for each dataset include at least two models to assess the impact of representations quality (see Section~\ref{sec:performance}).

\vspace{-1ex}
\paragraph{Implementation of GradPCA.}
Our implementation of GradPCA follows Algorithm~\ref{alg:gradPCA}, with two key extensions: (1) a configurable \textit{aggregation scheme} over output heads, and (2) the option to compute gradients with respect to a \textit{subset of trainable parameters}, improving scalability. To truncate the spectrum, we use a configurable threshold $\epsilon$ (default: $0.99$), denoting the fraction of the trace retained by the principal components. The default variant, referred to as \textbf{GradPCA}, operates on gradients with respect to the last hidden layer parameters, averages logits across output dimensions before PCA application, and computes gradient class-means sequentially over the training data—making it scalable to large datasets. Computational costs are described in Appendix~\ref{sec:compute}.

In addition, we implement several \textit{variants} of GradPCA: (1) \textbf{GradPCA+DICE}, which incorporates the DICE method~\citep{sun2022dice} by sparsifying gradient class-means before PCA; (2) \textbf{GradPCA-Batch}, which computes the empirical NTK on a batch instead of relying on class-means; (3) \textbf{GradPCA-Vec}, which omits output aggregation and computes GradPCA separately for each of the $C$ output heads, producing $C$ scores that can be aggregated post hoc. While these variants can be advantageous in certain scenarios, our main evaluation (Figure~\ref{fig:average_scores}) focuses on the base GradPCA variant due to its efficiency and \textit{lack of hyperparameter tuning}---enabling fair comparison across models and datasets. We include several GradPCA variants in Tables \ref{table:cifar100} and \ref{table:imagenet}, and extensive ablations in Appendix~\ref{sec:additional_experiments}. We release implementation of GradPCA in JAX~\citep{jax2018github}.

\vspace{-1ex}
 \paragraph{Baselines.} We compare GradPCA against a range of established methods that have demonstrated effectiveness across diverse benchmarks. In particular, we include regularity-based methods such as Mahalanobis~\citep{lee2018simple} and Deep $k$-Nearest Neighbors (KNN)~\citep{sun2022out}, which are methodologically aligned with GradPCA and serve as natural points of comparison. We also incorporated a range of confidence-based scores (MSP~\citep{hendrycks2017baseline}, ODIN~\citep{liang2018reliability}, Energy~\citep{liu2020energy}), as well as DICE~\citep{sun2022dice}, which leverages sparsity, and ReAct~\citep{sun2021react}, which relies on abnormal activation patterns. Collectively, these baselines span a broad spectrum of approaches commonly found in the literature. Since our method operates in the gradient space, we include comparisons with three state-of-the-art gradient-based OOD detectors: GAIA~\citep{chen2023gaia}, GradOrth~\citep{behpour2023gradorth}, and the Projected Gradients method of~\citet{wu2024low}. We additionally compare against two recent PCA-based approaches: the Revisited PCA method of~\citet{guan2023revisit} and Kernel PCA (CoRP)~\citep{fang2024kernel}. Finally, we compare against the recent Neural Collapse Inspired (NCI) detector~\citep{liu2025detecting}, which similarly exploits low-rank structure (in penultimate features rather than gradients). Implementation and experimental setup details for all baselines are provided in Appendix~\ref{sec:implementation}.

\subsection{CIFAR Benchmark}

\begin{table*}[h]
\centering
\resizebox{0.95\textwidth}{!}{
\begin{tabular}{|l|l|c|cc|cc|cc|cc|cc|cc|cc|}
\toprule
\parbox[t]{5mm}{\multirow{20}{*}{\rotatebox[origin=c]{90}{ResNetV2-50 (BiT-M) }}}
& \multicolumn{1}{c|}{\multirow{3}{*}{\textbf{Methods}}}
& \multirow{3}{*}{\textbf{Labels}}
& \multicolumn{2}{c|}{\textbf{SVHN}}
& \multicolumn{2}{c|}{\textbf{Places}}
& \multicolumn{2}{c|}{\textbf{LSUN-c}}
& \multicolumn{2}{c|}{\textbf{LSUN-r}}
& \multicolumn{2}{c|}{\textbf{iSUN}}
& \multicolumn{2}{c|}{\textbf{Textures}}
& \multicolumn{2}{c|}{\textbf{Average}}  \\
\cline{4-17}
&
&
& \makecell[c]{\gape FPR95 $\downarrow$} & \makecell[c]{\gape AUROC $\uparrow$}
& \makecell[c]{\gape FPR95 $\downarrow$} & \makecell[c]{\gape AUROC $\uparrow$}
& \makecell[c]{\gape FPR95 $\downarrow$} & \makecell[c]{\gape AUROC $\uparrow$}
& \makecell[c]{\gape FPR95 $\downarrow$} & \makecell[c]{\gape AUROC $\uparrow$}
& \makecell[c]{\gape FPR95 $\downarrow$} & \makecell[c]{\gape AUROC $\uparrow$}
& \makecell[c]{\gape FPR95 $\downarrow$} & \makecell[c]{\gape AUROC $\uparrow$}
& \makecell[c]{\gape FPR95 $\downarrow$} & \makecell[c]{\gape AUROC $\uparrow$}
\\
\cline{2-17}
& Max logits 
& 
 & {65.17}& {87.17}
 & {59.08}& {84.43}
 & {68.80}& {84.11}
 & {67.41}& {81.34}
 & {68.50}& {80.76}
 & {54.79}& {87.44}
 & {63.96}& {84.21}
 \\ 
& MSP
& 
 & {68.42}& {85.87}
 & {67.05}& {81.20}
 & {72.56}& {82.93}
 & {70.21}& {79.95}
 & {71.32}& {79.47}
 & {64.33}& {83.35}
 & {68.98}& {82.13}
 \\ 
& ODIN
& 
 & {65.18}& {87.17}
 & {59.11}& {84.43}
 & {68.86}& {84.11}
 & {67.43}& {81.34}
 & {68.51}& {80.76}
 & {54.79}& {87.44}
 & {63.98}& {84.21}
 \\ 
& Energy
& 
 & {63.74}& {87.44}
 & {47.17}& {87.05}
 & {65.29}& {84.41}
 & {67.19}& {81.92}
 & {66.94}& {81.39}
 & {40.15}& {90.83}
 & {58.41}& {85.51}
 \\ 
& DICE
& 
 & {\underline{13.92}}& {\underline{97.50}}
 & {30.01}& {93.07}
 & {9.29}& {97.91}
 & {59.06}& {85.47}
 & {\underline{55.54}}& {85.75}
 & {\underline{1.97}}& {\underline{99.50}}
 & {\underline{28.30}}& {93.20}
 \\ 
& Mahalanobis
& \checkmark
 & {38.07}& {93.74}
 & {\underline{27.26}}& {\underline{94.00}}
 & {27.12}& {95.81}
 & {81.08}& {82.04}
 & {82.31}& {78.80}
 & {\underline{0.41}}& {\underline{99.88}}
 & {42.71}& {90.71}
 \\ 
& KNN
& 
 & {\underline{14.01}}${\scriptstyle \pm0.22}$& {\underline{96.80}}${\scriptstyle \pm0.07}$
 & {42.31}${\scriptstyle \pm1.19}$& {88.51}${\scriptstyle \pm0.12}$
 & {13.09}${\scriptstyle \pm0.75}$& {97.55}${\scriptstyle \pm0.08}$
 & {59.95}${\scriptstyle \pm1.21}$& {87.32}${\scriptstyle \pm0.48}$
 & {63.82}${\scriptstyle \pm1.15}$& {83.81}${\scriptstyle \pm0.71}$
 & {3.46}${\scriptstyle \pm0.32}$& {99.17}${\scriptstyle \pm0.05}$
 & {32.77}${\scriptstyle \pm0.81}$& {92.19}${\scriptstyle \pm0.25}$
 \\ 
& ReAct
& 
 & {44.77}& {89.03}
 & {69.72}& {76.87}
 & {52.67}& {85.66}
 & {86.61}& {71.04}
 & {87.91}& {67.77}
 & {11.54}& {97.36}
 & {58.87}& {81.29}
\\
& GAIA-A
& 
 & {52.74}& {89.12}
 & {68.06}& {79.66}
 & {67.25}& {83.95}
 & {82.79}& {76.53}
 & {85.82}& {73.55}
 & {62.11}& {85.13}
 & {69.80}& {81.32}
 \\ 
& GAIA-Z
& 
 & {79.76}& {81.78}
 & {98.01}& {46.36}
 & {65.26}& {78.59}
 & {99.63}& {37.74}
 & {97.32}& {46.73}
 & {32.19}& {92.62}
 & {78.69}& {63.97}
 \\ 
& GradOrth
&
 & 26.974 & 92.97
 & 18.520 & \underline{96.53}
 & 28.636 & 93.86
 & 56.030 & 86.86
 & \underline{52.66} & \underline{88.79}
 & 7.901 & 98.32
 & 31.79 & 92.89
 \\
& Revisited PCA
& 
 & {25.62}& {94.85}
 & {\underline{26.67}}& {93.47}
 & {35.90}& {91.42}
 & {\textbf{39.74}}& {\textbf{90.04}}
 & {\textbf{47.83}}& {\textbf{89.39}}
 & {9.09}& {96.24}
 & {30.14}& {92.57}
 \\
& Proj. Grads 
& \checkmark
 & {36.78}& {93.01}
 & {56.38}& {85.66}
 & {58.94}& {87.88}
 & {58.89}& {85.84}
 & {61.49}& {84.28}
 & {38.81}& {90.40}
 & {51.88}& {87.85}
 \\
& Kernel PCA (CoRP)
& 
 & {\textbf{6.17}}& {\textbf{98.70}}
 & {\textbf{13.29}}& {\textbf{97.31}}
 & {\textbf{5.07}}& {\textbf{98.91}}
 & {55.99}& {\underline{89.45}}
 & {58.07}& {{87.32}}
 & {\textbf{0.12}}& {\textbf{99.96}}
 & {\textbf{23.12}}& {\textbf{95.38}}
 \\
 & NCI (w/o filter)
& 
 & 76.36 & 87.54
 & 72.98 & 82.73
 & 62.82 & 85.78
 & 76.92 & 82.61
 & 88.41 & 80.55
 & 40.91 & 91.32
 & 69.73 & 85.09
 \\ 
& \cellcolor{gray!25} \textbf{GradPCA}
& \cellcolor{gray!25} \checkmark
 & \cellcolor{gray!25} {17.20}& \cellcolor{gray!25} {96.58}
 & \cellcolor{gray!25} {29.64}& \cellcolor{gray!25} {{93.56}}
 & \cellcolor{gray!25} {\underline{8.28}}& \cellcolor{gray!25} {\underline{98.42}}
 & \cellcolor{gray!25} {\underline{51.75}}& \cellcolor{gray!25} {\underline{88.93}}
 & \cellcolor{gray!25} {{56.93}}& \cellcolor{gray!25} {\underline{87.34}}
 & \cellcolor{gray!25} {3.41}& \cellcolor{gray!25} {99.24}
 & \cellcolor{gray!25} {\underline{27.87}}& \cellcolor{gray!25} {\underline{94.01}}
 \\ 
& \cellcolor{gray!25} \textbf{GradPCA+DICE}
& \cellcolor{gray!25} \checkmark
 & \cellcolor{gray!25} {18.11}& \cellcolor{gray!25} {96.57}
 & \cellcolor{gray!25} {30.77}& \cellcolor{gray!25} {93.34}
 & \cellcolor{gray!25} {\underline{8.12}}& \cellcolor{gray!25} {\underline{98.46}}
 & \cellcolor{gray!25} {\underline{55.72}}& \cellcolor{gray!25} {87.43}
 & \cellcolor{gray!25} {59.53}& \cellcolor{gray!25} {86.27}
 & \cellcolor{gray!25} {3.59}& \cellcolor{gray!25} {99.20}
 & \cellcolor{gray!25} {29.31}& \cellcolor{gray!25} {\underline{93.54}}
 \\ 
\bottomrule

\toprule
\parbox[t]{5mm}{\multirow{20}{*}{\rotatebox[origin=c]{90}{ResNet-34 (TIMM) }}}
& \multicolumn{1}{c|}{\multirow{3}{*}{\textbf{Methods}}}
& \textbf{Labels}
& \multicolumn{2}{c|}{\textbf{SVHN}}
& \multicolumn{2}{c|}{\textbf{Places}}
& \multicolumn{2}{c|}{\textbf{LSUN-c}}
& \multicolumn{2}{c|}{\textbf{LSUN-r}}
& \multicolumn{2}{c|}{\textbf{iSUN}}
& \multicolumn{2}{c|}{\textbf{Textures}}
& \multicolumn{2}{c|}{\textbf{Average}}  \\
\cline{4-17}
&
&
& \makecell[c]{\gape FPR95 $\downarrow$} & \makecell[c]{\gape AUROC $\uparrow$}
& \makecell[c]{\gape FPR95 $\downarrow$} & \makecell[c]{\gape AUROC $\uparrow$}
& \makecell[c]{\gape FPR95 $\downarrow$} & \makecell[c]{\gape AUROC $\uparrow$}
& \makecell[c]{\gape FPR95 $\downarrow$} & \makecell[c]{\gape AUROC $\uparrow$}
& \makecell[c]{\gape FPR95 $\downarrow$} & \makecell[c]{\gape AUROC $\uparrow$}
& \makecell[c]{\gape FPR95 $\downarrow$} & \makecell[c]{\gape AUROC $\uparrow$}
& \makecell[c]{\gape FPR95 $\downarrow$} & \makecell[c]{\gape AUROC $\uparrow$}
\\
\cline{2-17}
& Max logits
& 
 & {71.38}& {83.83}
 & {69.98}& {83.47}
 & {64.04}& {85.00}
 & {75.26}& {79.34}
 & {62.14}& {86.20}
 & {66.08}& {84.45}
 & {68.15}& {83.72}
 \\ 
& MSP
& 
 & {71.47}& {83.40}
 & {68.21}& {83.60}
 & {74.64}& {79.78}
 & {68.92}& {83.35}
 & {71.69}& {81.83}
 & {75.44}& {80.66}
 & {71.73}& {82.10}
 \\ 
& ODIN
& 
 & {71.38}& {83.83}
 & {64.04}& {85.00}
 & {75.26}& {79.34}
 & {62.14}& {86.20}
 & {66.08}& {84.45}
 & {69.98}& {83.47}
 & {68.15}& {83.71}
 \\ 
& Energy
& 
 & {73.15}& {83.79}
 & {61.66}& {85.50}
 & {77.32}& {79.18}
 & {\underline{56.32}}& {\underline{87.24}}
 & {\underline{61.57}}& {{85.38}}
 & {65.70}& {84.44}
 & {65.95}& {84.26}
 \\ 
& DICE
& 
 & {83.35}& {81.41}
 & {67.55}& {84.00}
 & {88.91}& {75.29}
 & {58.77}& {{86.82}}
 & {64.40}& {84.78}
 & {64.70}& {84.53}
 & {71.28}& {82.80}
 \\ 
& Mahalanobis
& \checkmark
 & {64.76}& {87.75}
 & {72.30}& {84.83}
 & {63.01}& {86.09}
 & {80.09}& {81.99}
 & {79.99}& {81.33}
 & {78.75}& {80.31}
 & {73.15}& {83.72}
 \\ 
& KNN
& 
 & {64.73}${\scriptstyle \pm0.35}$& {87.26}${\scriptstyle \pm0.07}$
 & {67.77}${\scriptstyle \pm0.13}$& {85.20}${\scriptstyle \pm0.01}$
 & {67.76}${\scriptstyle \pm0.11}$& {84.55}${\scriptstyle \pm0.06}$
 & {70.51}${\scriptstyle \pm0.24}$& {83.46}${\scriptstyle \pm0.03}$
 & {72.46}${\scriptstyle \pm0.14}$& {82.33}${\scriptstyle \pm0.02}$
 & {74.74}${\scriptstyle \pm0.21}$& {80.54}${\scriptstyle \pm0.11}$
 & {69.66}${\scriptstyle \pm0.20}$& {83.89}${\scriptstyle \pm0.05}$
 \\ 
& ReAct
& 
 & {100.00}& {17.69}
 & {99.94}& {20.66}
 & {99.37}& {18.59}
 & {100.00}& {21.24}
 & {100.00}& {21.66}
 & {99.79}& {25.87}
 & {99.85}& {20.95}
 \\ 
& GAIA-A
& 
 & {\underline{49.73}}& {\underline{92.52}}
 & {64.28}& {86.00}
 & {\underline{61.22}}& {\textbf{89.39}}
 & {63.86}& {84.69}
 & {67.30}& {83.32}
 & {69.07}& {78.94}
 & {62.58}& {85.81}
 \\ 
& GAIA-Z
& 
 & {\textbf{13.33}}& {\textbf{97.34}}
 & {\textbf{13.44}}& {\textbf{97.34}}
 & {{85.66}}& {76.45}
 & {\textbf{28.51}}& {\textbf{94.66}}
 & {88.11}& {80.87}
 & {78.50}& {84.94}
 & {\textbf{51.26}}& {\textbf{88.60}}
 \\ 
& GradOrth
& 
 & 81.79 & 80.57
 & 63.20 & 86.82
 & 70.88 & 83.66
 & 69.73 & 86.82
 & 66.97 & \underline{87.16}
 & \textbf{60.48} & \textbf{87.98}
 & 68.84 & 85.50
 \\
& Revisited PCA
&
 & {65.01}& {84.27}
 & {\underline{55.08}}& {\underline{88.20}}
 & {\underline{56.25}}& {86.79}
 & {66.46}& {82.87}
 & {\underline{58.29}}& {{86.33}}
 & {\underline{61.28}}& \underline{85.10}
 & {\underline{60.40}}& {85.60}
 \\
& Proj. Grads 
& \checkmark
 & {73.53}& {76.70}
 & {76.01}& {77.03}
 & {70.51}& {80.08}
 & {73.00}& {80.28}
 & {77.13}& {73.01}
 & {78.15}& {72.19}
 & {74.72}& {76.55}
 \\
& Kernel PCA (CoRP)
&
 & {\underline{59.17}}& {87.36}
 & {\underline{55.72}}& {\underline{88.04}}
 & {\textbf{52.61}}& {\underline{89.18}}
 & {\underline{62.77}}& {87.09}
 & {\textbf{54.69}}& {\textbf{88.60}}
 & {\underline{57.77}}& {{87.63}}
 & {\underline{57.78}}& {\underline{87.65}}
 \\
  & NCI (w/o filter)
& 
 & 63.05 & 85.46
 & 68.07 & 86.37
 & 65.39 & 87.79
 & 82.05 & 76.91
 & 81.16 & 77.13
 & 84.09 & 76.87
 & 73.97 & 81.76
 \\ 
& \cellcolor{gray!25} \textbf{GradPCA}
& \cellcolor{gray!25} \checkmark
 & \cellcolor{gray!25} {61.22}& \cellcolor{gray!25} {\underline{89.10}}
 & \cellcolor{gray!25} {72.30}& \cellcolor{gray!25} {84.63}
 & \cellcolor{gray!25} {62.71}& \cellcolor{gray!25} {{87.31}}
 & \cellcolor{gray!25} {63.45}& \cellcolor{gray!25} {\underline{87.11}}
 & \cellcolor{gray!25} {73.97}& \cellcolor{gray!25} {84.25}
 & \cellcolor{gray!25} {73.01}& \cellcolor{gray!25} {83.85}
 & \cellcolor{gray!25} {67.78}& \cellcolor{gray!25} {86.04}
 \\
& \cellcolor{gray!25} \textbf{GradPCA+DICE}
& \cellcolor{gray!25} \checkmark
 & \cellcolor{gray!25} {61.21}& \cellcolor{gray!25} {89.10}
 & \cellcolor{gray!25} {70.24}& \cellcolor{gray!25} {86.10}
 & \cellcolor{gray!25} {\underline{61.27}}& \cellcolor{gray!25} {\underline{87.84}}
 & \cellcolor{gray!25} {64.81}& \cellcolor{gray!25} {86.62}
 & \cellcolor{gray!25} {70.59}& \cellcolor{gray!25} {\underline{86.40}}
 & \cellcolor{gray!25} {69.95}& \cellcolor{gray!25} {\underline{85.79}}
 & \cellcolor{gray!25} {66.34}& \cellcolor{gray!25} {\underline{86.98}}
 \\ 
\bottomrule

\end{tabular}
}
\caption{\textbf{CIFAR-100.} For each architecture and evaluation metric (FPR95, AUROC), the best-performing method is shown in \textbf{bold}, and the second and third best methods are \underline{underlined}. The \textbf{Labels} column indicates whether the method requires access to ID data labels during training (\checkmark).}
\label{table:cifar100}
\end{table*}

\textbf{Datasets:} For CIFAR-10 and CIFAR-100, we use the standard training splits during the training phase and the corresponding test splits as ID data during the detection phase. We evaluate against a range of commonly used OOD datasets: SVHN~\citep{netzer2011reading}, Places~\citep{zhou2017places}, LSUN~\citep{yu2015lsun}, iSUN~\citep{xu2015turkergaze}, and Textures~\citep{cimpoi2014describing}. SVHN (test split), Places (validation), and Textures (full dataset) are taken directly from the predefined splits in the TensorFlow Datasets collection~\citep{TFDS}. For LSUN and iSUN, we rely on curated subsets for OOD detection benchmarks introduced in the ODIN paper~\citep{liang2018reliability}.

\textbf{Models:} We consider two publicly available models: (1) ResNetV2-50 from Big Transfer (BiT) repository \citep{beyer2022knowledge} pretrained on ImageNet-21k (BiT-M) and fine-tuned on CIFAR, (2) ResNet-34 trained directly on CIFAR from PyTorch Image Models (TIMM) \citep{rw2019timm}.

\textbf{Results:} As shown in Table~\ref{table:cifar100}, PCA-based methods (including GradPCA, CoRP, and Revisited PCA) as well as other regularity-based approaches such as Mahalanobis and KNN, achieve strong performance on CIFAR-100 when using general-purpose (pretrained) features. However, as discussed in Section~\ref{sec:performance}, the landscape shifts substantially in the task-specific (non-pretrained) setting: here, the abnormality-based GAIA method excels, while confidence-based detectors (e.g., ODIN, MSP) also approach state-of-the-art performance. In contrast, most regularity-based methods degrade noticeably in this scenario, a decline not observed among abnormality-based approaches. An additional observation, also visible in Figure~\ref{fig:average_scores}, is that methods that excel under certain conditions may fail entirely in others due to violations of their core assumptions. Examples in this benchmark include ReAct and GAIA-Z each underperforming in one of the settings.

\subsection{ImageNet Benchmark}

\begin{table*}[h]
\centering
\resizebox{0.99\textwidth}{!}{
\begin{tabular}{|l|l|cc|cc|cc|cc|cc|l|cc|cc|cc|cc|cc|}
\hline
\parbox[t]{3mm}{\multirow{20}{*}{\rotatebox[origin=c]{90}{ResNetV2-101 (BiT-M)}}} 
& \multirow{3}{*}{\textbf{Methods}} 
& \multicolumn{2}{|c|}{\textbf{Places}}   
& \multicolumn{2}{|c|}{\textbf{SUN}}  
& \multicolumn{2}{|c|}{\textbf{iNaturalist}} 
& \multicolumn{2}{|c|}{\textbf{Textures}}    
& \multicolumn{2}{|c|}{\textbf{Average}}  
& \parbox[t]{3mm}{\multirow{20}{*}{\rotatebox[origin=c]{90}{ResNetV2-50 (BiT-S)}}} 
& \multicolumn{2}{|c|}{\textbf{Places}}   
& \multicolumn{2}{|c|}{\textbf{SUN}}  
& \multicolumn{2}{|c|}{\textbf{iNaturalist}} 
& \multicolumn{2}{|c|}{\textbf{Textures}}    
& \multicolumn{2}{|c|}{\textbf{Average}} \\

\cline{3-12} \cline{14-23}
& 
& \makecell[c]{\gape FPR95 $\downarrow$} & \makecell[c]{\gape AUROC $\uparrow$}
& \makecell[c]{\gape FPR95 $\downarrow$} & \makecell[c]{\gape AUROC $\uparrow$}
& \makecell[c]{\gape FPR95 $\downarrow$} & \makecell[c]{\gape AUROC $\uparrow$}
& \makecell[c]{\gape FPR95 $\downarrow$} & \makecell[c]{\gape AUROC $\uparrow$}
& \makecell[c]{\gape FPR95 $\downarrow$} & \makecell[c]{\gape AUROC $\uparrow$}
& 
& \makecell[c]{\gape FPR95 $\downarrow$} & \makecell[c]{\gape AUROC $\uparrow$}
& \makecell[c]{\gape FPR95 $\downarrow$} & \makecell[c]{\gape AUROC $\uparrow$}
& \makecell[c]{\gape FPR95 $\downarrow$} & \makecell[c]{\gape AUROC $\uparrow$}
& \makecell[c]{\gape FPR95 $\downarrow$} & \makecell[c]{\gape AUROC $\uparrow$}
& \makecell[c]{\gape FPR95 $\downarrow$} & \makecell[c]{\gape AUROC $\uparrow$} \\
\cline{2-12} \cline{14-23}

& Max logits
& 89.36 & 59.98
& 87.94 & 61.69
& 55.73 & 84.78
& 94.85 & 58.66
& 81.97 & 66.28
& 
& 73.73 & 77.77
& 69.78 & 79.92
& 29.76 & 93.48
& 68.96 & 79.47
& 60.56 & 82.66
\\

& MSP
& 92.84 & 57.05
& 92.97 & 56.72
& 71.28 & 76.50
& 95.44 & 56.71
& 88.13 & 61.74
& 
& 84.40 & 67.28
& 82.79 & 67.94
& 52.62 & 83.94
& 83.04 & 66.88
& 75.71 & 71.51
\\ 

& ODIN
& 89.37 & 59.98
& 87.95 & 61.68
& 55.74 & 84.78
& 94.85 & 58.66
& 81.98 & 66.28
& 
& 73.86 & 77.70
& 70.03 & 79.84
& 29.98 & 93.43
& 69.18 & 79.40
& 60.76 & 82.59
\\ 

& Energy
& 86.94 & 60.96
& 83.94 & 63.36
& 46.07 & 87.17
& 94.26 & 58.91
& 77.80 & 67.60
& 
& 70.79 & 78.91
& 66.36 & 81.28
& 25.21 & 94.38
& 64.20 & 80.84
& 56.64 & 83.85
\\ 

& DICE
& 88.34 & 65.87
& 81.67 & 71.39
& 71.53 & 87.40
& 96.00 & 57.65
& 84.38 & 70.58
& 
& \textbf{38.55} & \textbf{89.18}
& \textbf{29.67} & \textbf{92.15}
& \textbf{8.46} & \textbf{97.73}
& 40.77 & 87.81
& \textbf{29.36} & \textbf{91.72}
\\

& Mahalanobis
& 91.82 & 56.75
& 92.03 & 57.58
& \underline{33.01} & 92.07
& \underline{31.00} & 92.48
& \underline{61.97} & 74.72
& 
& 97.70 & 30.61
& 97.56 & 28.53
& 99.09 & 23.30
& 30.21 & 91.93
& 81.14 & 43.59
\\ 

& KNN
& 90.33 & 56.29
& 92.14 & 57.22
& 37.95 & 90.85
& 47.49 & 89.78
& 67.15 & 72.97
& 
& 92.68 & 51.01
& 92.96 & 51.78
& 93.69 & 48.81
& 24.12 & 93.64
& 75.86 & 61.31
\\ 

& ReAct
& 92.86 & 53.85
& 91.21 & 55.73
& 62.24 & 83.02
& 96.44 & 52.85
& 85.68 & 61.36
& 
& 69.42 & 76.41
& 63.95 & 78.68
& 24.83 & 93.45
& 58.81 & 81.36
& 54.25 & 82.48
\\ 

& GAIA-A
& 95.77 & 47.23
& 94.65 & 51.41
& 85.31 & 60.07
& 98.95 & 30.96
& 93.67 & 47.42
& 
& \underline{53.54} & \underline{85.66}
& \underline{46.05} & \underline{89.33}
& 21.69 & 95.33
& 69.78 & 81.07
& 47.77 & 87.85
\\ 

& GAIA-Z
& 97.81 & 56.09
& 98.77 & 57.51
& 79.55 & 79.15
& 62.39 & 82.08
& 84.63 & 68.71
& 
& 94.77 & 58.63
& 93.69 & 62.28
& 85.23 & 75.82
& 26.44 & 94.17
& 75.03 & 72.72
\\ 

& GradOrth
& 95.45 & 66.006        
& 93.52 & 71.32         
& 96.68 & 67.57        
& 97.857 & 59.44        
& 95.88 & 66.08         
&
& 55.24 & 82.914        
& 43.67 & 88.28         
& 19.85 & 95.122        
& 21.112 & 95.03       
& \underline{34.97} & \underline{90.34}         
\\ 

& Revisited PCA
& \underline{80.76} & 64.12
& \underline{75.78} & 67.98
& \underline{27.89} & \underline{92.11}
& 73.30 & 74.69
& 64.43 & 74.89
& 
& 64.42 & 80.54
& 57.32 & 83.68
& 20.05 & 95.33
& 35.55 & 89.91
& 44.83 & 87.36
\\ 

& Proj. Grads
& 84.02 & \underline{72.48}
& 82.31 & \underline{75.40}
& 57.51 & 83.69
& 91.32 & 62.63
& 78.29 & 73.55
& 
& 71.04 & 75.73
& 66.35 & 78.56
& 32.52 & 89.71
& 73.20 & 68.10
& 60.78 & 78.03
\\ 

& Kernel PCA (CoRP)
& 86.61 & 60.37
& 83.81 & 64.63
& \textbf{25.03} & \textbf{94.43}
& \textbf{28.82} & \textbf{93.65}
& \underline{56.07} & \underline{78.77}
& 
& 81.83 & 65.62
& 76.97 & 71.21
& 66.30 & 81.31
& \textbf{15.37} & \textbf{96.48}
& 60.11 & 78.65
\\ 

& NCI (w/o filter)
& 92.41 & 63.75
& 92.41 & 69.69
& 46.84 & 91.50
& 86.36 & 72.52
& 79.50 & 74.37
& 
& 91.14 & 68.52
& 88.61 & 72.93
& 69.62 & 84.63
& 65.91 & 82.90
& 78.82 & 77.25
\\ 

& \cellcolor{gray!25} \textbf{GradPCA}
& \cellcolor{gray!25} \underline{83.12} & \cellcolor{gray!25} \underline{71.55}
& \cellcolor{gray!25} \underline{75.16} & \cellcolor{gray!25} \underline{78.68}
& \cellcolor{gray!25} 45.84 & \cellcolor{gray!25} 91.69
& \cellcolor{gray!25} 60.74 & \cellcolor{gray!25} 86.49
& \cellcolor{gray!25} 66.22 & \cellcolor{gray!25} \underline{82.10}
& 
& \cellcolor{gray!25} \underline{61.27} & \cellcolor{gray!25} \underline{81.75}
& \cellcolor{gray!25} \underline{49.35} & \cellcolor{gray!25} \underline{87.20}
& \cellcolor{gray!25} \underline{17.78} & \cellcolor{gray!25} \underline{95.80}
& \cellcolor{gray!25} \underline{18.08} & \cellcolor{gray!25} \underline{96.14}
& \cellcolor{gray!25} \underline{36.62} & \cellcolor{gray!25} \underline{90.22}
\\ 
 
& \cellcolor{gray!25} \textbf{GradPCA+DICE}
& \cellcolor{gray!25} \textbf{76.97} & \cellcolor{gray!25} \textbf{75.51}
& \cellcolor{gray!25} \textbf{64.82} & \cellcolor{gray!25} \textbf{82.95}
& \cellcolor{gray!25} 37.68 & \cellcolor{gray!25} \underline{93.10}
& \cellcolor{gray!25} \underline{44.07} & \cellcolor{gray!25} \underline{90.40}
& \cellcolor{gray!25} \textbf{55.88} & \cellcolor{gray!25} \textbf{85.49}
& 
& \cellcolor{gray!25} 65.07 & \cellcolor{gray!25} 80.10
& \cellcolor{gray!25} 53.16 & \cellcolor{gray!25} 85.85
& \cellcolor{gray!25} \underline{19.25} & \cellcolor{gray!25} \underline{95.48}
& \cellcolor{gray!25} \underline{18.16} & \cellcolor{gray!25} \underline{96.20}
& \cellcolor{gray!25} {38.91} & \cellcolor{gray!25} {89.41}
\\ 

\bottomrule
\end{tabular}
}
\caption{\textbf{ImageNet.} For each architecture and evaluation metric (FPR95, AUROC), the best-performing method is shown in \textbf{bold}, and the second and third best methods are \underline{underlined}.}
\label{table:imagenet}
\end{table*}

\textbf{Datasets:}
We use the full ImageNet-1k training split in the training phase. While a subset could be used, our method is scalable enough to handle the entire dataset efficiently. For the detection phase, we treat ImageNetV2~\citep{recht2019imagenet} as the ID dataset. As OOD datasets, we use curated subsets of Places~\citep{zhou2017places}, SUN~\citep{xiao2010sun}, and iNaturalist~\citep{van2018inaturalist}, released by the MOS paper~\citep{huang2021mos} to ensure minimal category overlap with ImageNet. This control is especially important given that these datasets contain many overlapping classes. We additionally include the full Textures dataset~\citep{cimpoi2014describing} as an OOD source.

\textbf{Models:} We consider two BiT models \citep{beyer2022knowledge}: (1) ResNetV2-101 pretrained on ImageNet-21k (BiT-M) and fine-tuned on ImageNet-1k, (2) ResNetV2-50 trained directly on ImageNet-1k (BiT-S). In addition, we include a benchmark for (3) Vision Transformer (ViT-B/16) \citep{dosovitskiy2020vit} pretrained on ImageNet-21k and fine-tuned on ImageNet-1k in Appendix \ref{appendix:vit}.

\textbf{Results:} Consistent with trends observed on CIFAR, GradPCA achieves competitive performance on ImageNet—ranking first in the pretrained setting and third in the non-pretrained setting, where DICE performs best. GradPCA also ranks first on the ViT-B/16 benchmark (Appendix, Table~\ref{table:vit_imagenet}). Most other regularity-based methods lag behind the state of the art on ImageNet benchmarks, with the notable exception of GradORTH, which performs strongly on the BiT-S model but underperforms on the BiT-M model. As with CIFAR, method rankings vary substantially between the two models. Notably, the pretrained ImageNet BiT benchmark is substantially more challenging than the non-pretrained variant for all detectors, despite the models having similar architectures and ID accuracy. This underscores the sensitivity of OOD detection to subtle differences in evaluation setup. We report results for GradPCA+DICE ($p = 0.8$) in Tables~\ref{table:imagenet} and~\ref{table:cifar100}, along with ablations and additional GradPCA variants in Appendix~\ref{sec:additional_experiments}. While certain tuned variants yield modest gains in specific cases, their performance is inconsistent across settings; for this reason, we focus on the default tuning-free GradPCA in our main discussion.

\section{Conclusions and Broad Impact}
\label{sec:Conclusions and Broad Impact}
This paper introduces GradPCA, an OOD detection method based on the low-rank structure of gradients in well-trained NNs, induced by NTK alignment. Our approach bridges recent advances in OOD detection with classical PCA-based intuition, clarifying when spectral methods are applicable in modern deep learning. In addition, our theoretical analysis and the observation about {feature quality} as a critical factor for OOD detectors' performance offer practical guidance for selecting appropriate detectors based on the training regime. Such guidance remains rare in the OOD detection literature. Finally, our work establishes a novel connection between the NTK---a common topic in deep learning theory---and OOD detection, a largely empirical field with limited theoretical grounding.

\paragraph{Limitations.}
GradPCA is built on the assumption that the NTK structure effectively separates ID and OOD points in gradient space. While this may not hold universally, the assumption is both explicit and empirically testable—unlike many heuristic or implicit assumptions used in prior works. Importantly, this assumption has been found to hold in a wide range of practical settings, including publicly available, well-trained models commonly used in OOD evaluation. A second limitation is memory scalability with respect to the number of classes: GradPCA stores up to $O(C)$ gradient vectors, which can be costly for large $C$. This requirement is common to many regularity-based approaches and can be mitigated through (1) retained variance threshold $\epsilon$ (see Appendix~\ref{sec:eps_memory}) and (2) parameter subset selection. Our method shares two other limitations with Mahalanobis and prototype-based detectors (see Appendix~\ref{sec:related_works}). First, it requires ID labels during training, though this can be mitigated using pseudo-labels or clustering~\citep{sehwag2021ssd}. Second, it is inherently tied to classification tasks, where NTK alignment induces the low-rank structure our method relies on.

\paragraph{Future work.}
By establishing a connection between NTK alignment and OOD detection, our work enables future advances in NTK theory to directly inform and improve methods like GradPCA. As the study of feature learning dynamics and NTK structure remains an active area of theoretical research, we anticipate that forthcoming insights in this domain could be leveraged to design more principled and effective OOD detectors. On the other hand, our results may motivate future works on the relationship between NTK alignment and generalization. On the practical side, several directions remain open, including scalability to very large models and extensions beyond classification setting.

\newpage

\paragraph{Reproducibility statement.}
We provide full details of our experimental setup in the main text (Section~\ref{sec:experiments}) and supplementary materials (Appendix~\ref{sec:implementation}), and include anonymized source code in the supplementary submission. The code will be made publicly available upon acceptance. All experiments are conducted using publicly available models and datasets. Where applicable, we use official or community-released implementations of baseline methods. Dataset splits follow standard OOD benchmarks, and we provide references or instructions for obtaining each one. All theoretical results are stated with explicit assumptions and are accompanied by complete proofs in Appendix~\ref{sec:theory}.

\paragraph{Acknowledgements.} M. Seleznova acknowledges travel funding from the LMU Postdoc Support Fund for presenting this work at ICLR. G. Kutyniok acknowledges support from the gAIn project, which is funded by the Bavarian Ministry of Science and the Arts (StMWK Bayern) and the Saxon Ministry for Science, Culture and Tourism (SMWK Sachsen).

\bibliography{iclr2026_conference}
\bibliographystyle{iclr2026_conference}

\newpage
\appendix
%
\begin{center}
    \textbf{\LARGE Supplementary Material}
\end{center}

This supplementary material is organized as follows:
\begin{itemize}
    \item \textbf{Section~\ref{sec:related_works}} provides a detailed discussion of related work, expanding on the brief overview given in the main paper.
    
    \item \textbf{Section~\ref{sec:proofs}} presents additional theoretical results and complete proofs of the main propositions and theorems introduced in the paper.
    
    \item \textbf{Section~\ref{sec:implementation}} describes the implementation details of GradPCA algorithm and the baseline methods used in our experiments.
    
    \item \textbf{Section~\ref{sec:compute}} discusses the computational cost considerations, including a breakdown of the components involved, empirical runtime comparisons, and an estimate of total compute usage.
    
    \item \textbf{Section~\ref{sec:additional_experiments}} reports additional experimental results, including CIFAR-10 benchmark and a set of ablation studies for GradPCA variants.
\end{itemize}

\paragraph{Use of LLMs.} Large language models (LLMs) were used to assist with editing and polishing the writing of this paper. All substantive contributions—including theoretical results, empirical findings, and experimental design—were developed entirely by the authors without the aid of LLMs.

\section{Related Works}
\label{sec:related_works}

The field of out-of-distribution detection has witnessed rapid growth in recent years, with over 15,000 publications since 2021 referencing either \emph{``OOD detection''} and \emph{``out-of-distribution detection''}. This surge in research activity has made it increasingly challenging to maintain a comprehensive view of methodological advances and theoretical developments. Still, this section complements the overview provided in Section~\ref{sec:background} by offering a broader survey of related work. We organize existing OOD detection methods according to our proposed taxonomy of \emph{regularity-based} and \emph{abnormality-based} approaches, which provides a unified lens for understanding a wide range of techniques in the literature. 

\subsection{Surveys and Background on OOD Detection}
Out-of-distribution detection has become a central topic in machine learning, particularly in the context of safety and robustness. For a broad overview of the field, \citet{huy2023oodawesome} maintain an extensive and regularly updated repository of OOD detection research. The foundational work by \citet{hendrycks2017baseline} introduced one of the first widely adopted baselines for OOD detection. More recent surveys, such as those by \citet{yang2021generalized} and \citet{salehi2022unified}, provide systematic categorizations of the growing variety of techniques developed in this rapidly evolving area.

\subsection{Regularity-Based OOD Detection}
Many OOD detection methods rely on the structured nature of in-distribution (ID) data in a given feature space. These \emph{regularity-based} methods model the geometry or statistics of ID samples—using representations, decision boundaries, or statistical criteria—to flag inputs that deviate from this structure. Due to the nature of this class, regularity-based methods usually require a \textit{training stage} to learn the properties of the ID data. While Section~\ref{sec:background} provides a unified overview, here we expand on specific approaches.

\paragraph{Proximity-based approaches.} 
A wide range of methods rely on the distance of test points to a certain learned regular region in the feature space. The Mahalanobis detector~\citep{lee2018simple} fits class-conditional Gaussians and scores inputs by their distance to the closest class mean. KNN-based detectors~\citep{sun2022out} operate in the penultimate-layer space, identifying OOD inputs by their distance to nearest neighbors on the ID feature manifold.

\textit{Prototype-based detectors}~\citep{amersfoort2020uncertainty, du2022vos} compute per-class feature centers—prototype vectors, similar to the class means used in GradPCA or Mahalanobis—and measure similarity between a test input and the nearest prototype. Since these methods often include additional regularization (e.g., energy shaping or uncertainty constraints) to improve ID–OOD separation, they can be viewed as hybrids that combine regularity-based modeling with abnormality-oriented scoring techniques.

Gram-matrix consistency~\citep{sastry2020detecting} and likelihood-ratio scoring~\citep{ren2019likelihood} offer alternative strategies for exploiting structural regularities in ID data. In particular, the relationship between the activation Gram matrix and OOD detection explored in~\citep{sastry2020detecting} resonates with our theoretical discussion about the covariance matrix, which provides principled guarantees for spectral detection.

\paragraph{Spectral approaches.} 
Spectral methods constitute a key class of regularity-based OOD detectors. A canonical example is kernel PCA, introduced by \citet{scholkopf1998nonlinear} and later applied to novelty detection \citet{hoffmann2007kernel}, who measured novelty via the squared distance to a learned principal subspace in feature space.

More recently, \citet{guan2023revisit} revisited PCA-based OOD detection, identifying the limitations of conventional PCA and proposing a regularized reconstruction error, further enhanced by combining it with energy-based scoring. Building on this and~\citep{sun2022out}, \citet{fang2024kernel} introduced a kernel PCA detector using generic nonlinear kernels (e.g., RBF, plynomial) to improve separation between ID and OOD inputs within the principal component subspace.

Other recent methods apply spectral principles in novel, non-covariance-based ways. \citet{li2024setar} proposed SeTAR, a training-free, post-hoc detector that applies spectral analysis directly to pretrained model weights, arguing that the low-rank directions preserve ID-relevant structure while suppressing OOD-sensitive responses. \citet{wu2024pursuing} took a complementary approach, enforcing neural collapse during training to align ID features with a principal subspace and constrain OOD features to lie in the orthogonal complement. This hybrid method combines regularity modeling with supervised feature separation.

\paragraph{Gradient-based approaches.} 
While the majority of gradient-based OOD detectors align more closely with the abnormality-based category, a few recent methods employ spectral regularity explicitly in the gradient space. In particular, GradOrth~\citep{behpour2023gradorth} operates in loss-gradient space and applies SVD to last-layer activations to estimate principal directions and tranfer them to the gradient space. In contrast, \citet{wu2024low} project input gradients onto class-mean directions, avoiding SVD altogether for efficiency.

Although these methods bear superficial similarity to GradPCA in that they use spectral reasoning and gradient space representations, they lack the theoretical grounding of our approach. In GradOrth, it is unclear why the SVD of activation covariances should align with meaningful directions in the gradient space—there is no formal justification for this connection. In contrast, GradPCA performs SVD on the empirical NTK, which is the true dual of the gradient covariance matrix and thus guaranteed to share its spectrum. Moreover, GradOrth’s reliance on large batches to estimate this structure limits its scalability to settings with many classes.

The approach of \citet{wu2024low}, while more scalable, makes a strong assumption that class-mean gradients themselves represent principal directions. However, these means are generally neither orthogonal nor aligned with the dominant eigenvectors of the gradient covariance. GradPCA addresses this by using class means as a compressed basis to approximate the NTK's leading eigenspace, rather than assuming they are eigenvectors themselves.

Another regularity-based method leveraging gradients was introduced by \citet{sun2022gradient}, who use class-conditional gradient distributions and a Mahalanobis-distance score for OOD detection. A related recent approach is GROOD \citep{elaraby2025grood}, which computes gradients with respect to a synthetic OOD prototype and uses nearest-neighbor distances in gradient space as the detection score. In contrast to GradPCA, these methods do not use spectral decomposition and thus do not fall3 within the spectral framework developed in this paper.

\subsection{Abnormality-Based OOD Detection}

A complementary line of research focuses on detecting OOD inputs by monitoring deviations in model behavior, rather than modeling the structure of ID data. As discussed in Section~\ref{sec:background}, we refer to these as abnormality-based methods. They evaluate model responses to unfamiliar inputs by tracking indicators such as uncertainty, confidence degradation, or anomalous activation patterns. Unlike regularity-based approaches, which characterize the ID data structure in a well-defined feature space, these methods rely on behavioral cues that arise when the model encounters OOD samples—giving rise to a broad range of detection strategies.

\textbf{Confidence-based approaches.} 
Many abnormality-based OOD detection methods define scoring functions on the model's output logits to estimate prediction confidence. These scores—such as softmax maximum, energy, or temperature-scaled outputs—do not correspond to distances in any well-defined feature space, as they lack a reference point or metric structure. Moreover, these methods are post-hoc: they do not estimate or rely on the structure of ID data, but rather assume that OOD samples induce abnormal confidence patterns. For these reasons, confidence-based methods are best categorized as abnormality-based.

A foundational method is Maximum Softmax Probability (MSP)~\citep{hendrycks2017baseline}, which uses the model’s top predicted class probability as an OOD score. ODIN~\citep{liang2018reliability} improves upon MSP by applying temperature scaling and gradient-based input perturbations to enhance confidence separation between ID and OOD samples. Energy-based detection~\citep{liu2020energy} generalizes these ideas by scoring inputs based on the log-sum-exp of logits.

Several methods further refine confidence signals. ReAct~\citep{sun2021react} clips abnormally high activations in the penultimate layer to reduce overconfidence of OOD inputs, aiming to improve the scores like MSP or ODIN. DML~\citep{zhang2023decoupling} decouples the max-logit score into cosine similarity and norm components, learning a weighted combination to better separate ID and OOD examples. Finally, DICE~\citep{sun2022dice} identifies a sparse, high-confidence subnetwork by pruning low-magnitude weights, aiming to sharpen the model’s output distribution and improve OOD discrimination.

\vspace{-0.6ex}
\paragraph{Gradient-based approaches.}
A growing number of abnormality-based methods leverage gradient information to detect OOD samples by assuming that such inputs trigger distinct or unstable model responses. These approaches do not model in-distribution structure explicitly but instead define OOD based on deviations in gradient norms, patterns, or attribution behavior.

\citet{lee2020gradients} introduced the \emph{feature-extraction hypothesis}, proposing that the norm of the loss gradient can indicate how well the model understands a given input. This and a follow up work~\citep{lee2022gradient} showed that OOD or corrupted samples often yield larger or more erratic gradients, making gradient magnitude a useful proxy for abnormality. GradNorm~\citep{huang2021importance} builds on this idea by computing the $\ell_1$-norm of gradients from the KL divergence between the model output and a uniform distribution, observing that ID inputs typically produce stronger gradient signals. \citet{kwon2020backpropagated} proposed a Fisher score for anomaly detection in autoencoders, interpreting reconstruction gradients as features and arguing that anomalous inputs require more complex updates.

Other methods analyze gradient-derived attribution maps. In particular, GAIA~\citep{chen2023gaia} detects OOD inputs by analyzing disruptions in gradient-based attribution maps such as saliency and integrated gradients. These are \emph{input gradients}—computed with respect to the input, not model parameters as in GradPCA. GAIA method is post-hoc and model-agnostic, relying on irregularities in explanation behavior rather than ID modeling.

Some approaches incorporate gradient behavior during training. Sharifi et al.~\citep{sharifi2024gradient} introduce a regularization scheme to enforce locally consistent OOD scores, combining it with an energy-based sampling strategy to expose the model to diverse outlier patterns.

\vspace{-1ex}
\subsection{Neural Tangent Kernel Alignment}
GradPCA is inspired by insights from the Neural Tangent Kernel (NTK) framework, originally introduced by Jacot et al.~\citep{jacot2018ntk}. While classical NTK theory considers the infinite-width limit where the kernel remains fixed during training, recent work has shifted toward studying the empirical NTK and its dynamic alignment to the target function—termed \emph{NTK alignment}~\citep{baratin_2021_ntk_feature_alignment_implicit_regularization, shan_2021_kernel_alignment, atanasov_2022_kernel_alignment_phases}.

NTK alignment has been linked to improved generalization~\citep{chen2020label} and builds on earlier work in kernel methods, where alignment between kernel and target label structures was formally studied~\citep{cristianini2001kernel, gonen2011multiple}. Empirical studies have shown that this alignment emerges early in training~\citep{fort2020deep}, with recent works identifying multi-phase alignment dynamics~\citep{shan_2021_kernel_alignment, atanasov_2022_kernel_alignment_phases}, implicit regularization effects~\citep{baratin_2021_ntk_feature_alignment_implicit_regularization}, and block-structured NTKs aligned with class semantics~\citep{seleznova_2022_kernel_alignment_empirical_ntk_structure, seleznova2023ntkc}.

For GradPCA, this block structure is particularly important: it enables an efficient approximation of the NTK using only class-mean gradients. This insight underpins the core design of GradPCA, linking NTK alignment theory with a practical OOD detection mechanism.

\section{Additional Theory and Proofs}\label{sec:proofs}

\subsection{Support-Based OOD Detection}
\label{subsec:No-free-lunch detector}
In this section, we provide further motivation for the support-based formulation of the OOD detection problem introduced in Section~\ref{sec:background}.

\paragraph{No free lunch for OOD detection.} Most of the OOD detection literature adopts a probabilistic view, where a detector $D : \mathcal{X} \to \{0,1\}$ must decide whether a sample $x \in \mathcal{X}$ is \textit{more likely} to be drawn from the in-distribution $\id$ or some out-distribution $\ood$. However, $\ood$ is unknown in practice. Typically, we only have access to samples from $\id$, and aim to construct a detector that performs well across a range of possible, unseen $\ood$ distributions  (e.g., different OOD datasets).

This objective is fundamentally ill-posed: without assumptions on $\ood$, no detector can guarantee reliable performance. The following result formalizes this observation.
\begin{theorem}(No free lunch)\label{theorem:No-free-lunch detector}
    For any OOD detector $D$ and fixed in-distribution $\id$, there exists an out-distribution $\ood$ such that the performance of $D$ is no better than random guessing.
\end{theorem}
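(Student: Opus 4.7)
My plan is to construct, for any given detector $D$, an adversarial out-distribution $\ood$ whose induced performance on $D$ exactly matches random guessing. I would first partition $\mathcal{X}$ according to $D$ into $A := D^{-1}(0)$ and $B := D^{-1}(1)$, and set $p := \id(B)$, the false positive rate of $D$ on $\id$. The key observation is that a coin-flip detector that outputs $1$ with probability $p$ has true positive rate equal to $p$ under any distribution, so its balanced accuracy (and AUROC) equals exactly $\tfrac{1}{2}$. It therefore suffices to exhibit an $\ood$ with $\ood(B) = p$, since this forces the true positive rate of $D$ under $\ood$ to coincide with its false positive rate, yielding a detector whose operating point lies on the diagonal of the ROC plane.

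The construction is direct. I would choose any probability measures $\nu_A$ and $\nu_B$ supported on $A$ and $B$ respectively, and set
\begin{equation*}
    \ood := (1-p)\,\nu_A + p\,\nu_B.
\end{equation*}
Then $\ood(B) = p$, and the balanced accuracy of $D$ evaluated against $(\id, \ood)$ is $\tfrac{1}{2}(1-p) + \tfrac{1}{2}\,p = \tfrac{1}{2}$, matching the random baseline; the same identity TPR $=$ FPR also implies $\mathrm{AUROC} = \tfrac{1}{2}$. Degenerate cases where $A = \emptyset$ or $B = \emptyset$ are handled separately and trivially: $D$ is then constant, so $(\mathrm{TPR}, \mathrm{FPR}) \in \{(0,0), (1,1)\}$ for any $\ood$, giving balanced accuracy $\tfrac{1}{2}$ automatically.

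The main subtlety is ensuring the adversarial $\ood$ qualifies as a genuine out-distribution, i.e., $\supp(\ood) \not\subseteq \supp(\id)$, so that the statement is non-vacuous. I would resolve this by choosing $\nu_A$ and $\nu_B$ to be supported on $A \cap \supp(\id)^c$ and $B \cap \supp(\id)^c$, which is possible under mild richness assumptions on $\mathcal{X}$. In the extreme case where both $A \setminus \supp(\id)$ and $B \setminus \supp(\id)$ are empty, $D$ must already coincide with the support-optimal detector $D^*$ outside $\supp(\id)$, and the adversarial failure instead manifests within $\supp(\id)$ via distributions $\ood$ that mimic $\id$ on the partition $(A,B)$ but differ from it inside each cell---precisely the pathology that motivates the paper's shift from the probabilistic formulation to the support-based one in Section~\ref{sec:background}.
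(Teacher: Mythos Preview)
Your proposal is correct and follows essentially the same approach as the paper: both partition $\mathcal{X}$ by the detector's decision into $A_0 = D^{-1}(0)$ and $A_1 = D^{-1}(1)$, write the error (equivalently, one minus balanced accuracy) as $\tfrac{1}{2}(1 + \id(A_1) - \ood(A_1))$, and then choose $\ood$ so that $\ood(A_1) \le \id(A_1)$. The paper merely asserts such an $\ood$ exists, whereas you give the explicit mixture construction $\ood = (1-p)\nu_A + p\nu_B$ achieving equality and additionally address the degenerate and support-disjointness edge cases the paper leaves implicit.
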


\begin{proof}
    Let $D:\mathcal{X}\to\{0,1\}$ to be an OOD detector. Let $A_0=\{x\in\mathcal{X}:D(x)=0\}$ and $A_1=\{x\in\mathcal{X}:D(x)=1\}$. Denote $Y$ a random variable such that $Y=1$ if $X\sim\ood$ and $Y=0$ if $X\sim\id$. Then we have the following for the detector error probability:
    \begin{equation}
        \mathbb{E}[D(X) \neq Y] = P[A_1 | Y=0]P[Y=0] + P[A_0 | Y=1]P[Y=1].
    \end{equation}
    Moreover, noticing that $P[A_1 | Y=0] =\id(A_1)$ and  $P[A_0 | Y=1] =\ood(A_0)$, and assuming that the samples are sampled from in- and -out-distribution with equal probability, i.e. $P[Y=0]=P[Y=1]=1/2$, we can write:
    \begin{equation}
         \mathbb{E}[D(X) \neq Y] = \dfrac{1}{2}(\id(A_1)+\ood(A_0)) = \dfrac{1}{2}(1 + \id(A_1)-\ood(A_1)).
    \end{equation}
    If $\id$ and $D$ are fixed but $\ood$ is arbitrary, one can always construct a distribution $\ood$ such that $\ood(A_1) \leq \id(A_1)$, causing the detector to perform no better than random guessing.
\end{proof}

\paragraph{Likelihood vs. support.} In the probabilistic view of OOD detection, the optimal detector for a given pair $(\id, \ood)$ naturally depends on likelihoods under both distributions. Specifically, as follows from the proof of Theorem~\ref{theorem:No-free-lunch detector}, the set of inputs flagged as OOD by the ideal detector should satisfy:
\begin{equation}
    A_1^* \in \argmin_{A_1}  \id(A_1)-\ood(A_1),
\end{equation}
which yields the likelihood-based optimal detector:
\begin{equation}
    D^*(x) := \begin{cases}
                1, & \id(x) < \ood(x)\\
                0, & \id(x) \geq \ood(x).
              \end{cases}
\end{equation}
However, this formulation leads to a conceptual issue: the classification of a fixed input $x \in \mathcal{X}$ as ID or OOD may vary depending on the choice of $\ood$, even though $\id(x)$ remains constant. This is counterintuitive in practical OOD detection, where the goal is to determine whether a model can reliably predict the label of $x$. Since the choice of $\ood$ does not influence the model’s prediction for $x$, it should not determine whether $x$ is considered OOD.

In contrast, the support-based perspective avoids this issue by providing an optimal detector that is independent of $\ood$. Here, we define $\id$ as the distribution over inputs the model can correctly classify, and any input outside $\supp(\id)$ is considered beyond the model’s competence. This aligns better with the practical objective of OOD detection: identifying inputs where the model is likely to fail.

\subsection{Sufficient Condition for Spectral OOD Detection (Theorem \ref{theorem:OOD_general})}
\label{sec:OOD_proof}

\begin{theorem*}[Sufficient condition for spectral OOD detection]
    Let $\X\sim\id$ and $\h:\mathcal{X}\to\mathbb{R}^\P$ be any function from $L^2(\id)$. Consider the covariance matrix
    \begin{equation}
        \OPM(\h):=\mathbb{E}[\h(\X)\h(\X)^\top].
    \end{equation}
    Let $\mathcal{P}\h(\x)$ be the orthogonal projection of $\h(\x)$ onto the range of $\OPM(\h)$. For any $\x\in\mathcal{X}$, if $\|\mathcal{P}\h(\x)\|_2<\|\h(\x)\|_2$ and $\h$ is continuous at $\x$, then $\x$ is OOD.
\end{theorem*}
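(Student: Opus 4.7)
The plan is to argue by contradiction. Suppose $\|\mathcal{P}\h(\x_0)\|_2 < \|\h(\x_0)\|_2$ and $\h$ is continuous at $\x_0$, yet $\x_0 \in \supp(\id)$; I will derive a contradiction by showing that $\h(\X)$ would then fail to lie in $\text{range}(\OPM(\h))$ on a set of positive $\id$-measure, violating a basic covariance fact.

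The first ingredient is the standard covariance-range lemma: $\h(\X) \in \text{range}(\OPM(\h))$ holds $\id$-almost surely. Since $\OPM(\h)$ is symmetric and positive semidefinite, $\text{range}(\OPM(\h)) = \ker(\OPM(\h))^\perp$. For any $\v \in \ker(\OPM(\h))$,
\begin{equation*}
\v^\top \OPM(\h)\, \v \;=\; \mathbb{E}\bigl[\langle \v, \h(\X)\rangle^2\bigr] \;=\; 0,
\end{equation*}
so $\langle \v, \h(\X)\rangle = 0$ $\id$-a.s. Running $\v$ through a finite orthonormal basis of $\ker(\OPM(\h))$ and taking the union of the resulting null sets gives the claim. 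Because $\mathcal{P}$ is the orthogonal projector onto $\text{range}(\OPM(\h))$, the set $A := \{\x : \|\mathcal{P}\h(\x)\|_2 < \|\h(\x)\|_2\}$ coincides with $\{\x : \h(\x) \notin \text{range}(\OPM(\h))\}$, and therefore $\id(A) = 0$.

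Next I would upgrade this $\id$-a.s. statement to a pointwise obstruction using continuity. Since $\mathcal{P}$ is a fixed continuous linear map, the function $\Phi(\x) := \|\h(\x)\|_2^2 - \|\mathcal{P}\h(\x)\|_2^2$ is continuous at $\x_0$ whenever $\h$ is. By hypothesis $\Phi(\x_0) > 0$, so $\Phi > 0$ on some open neighborhood $U \ni \x_0$, giving $U \subseteq A$ and hence $\id(U) = 0$. But $\x_0 \in \supp(\id)$ forces $\id(U) > 0$ for every open neighborhood $U$ of $\x_0$---the desired contradiction.

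The main obstacle is a subtle mismatch between the paper's density-based definition $\supp(\id) := \{\x : d\id(\x) > 0\}$ and the neighborhood property invoked in the final step. Under the topological reading of $\supp(\id)$ (the complement of the largest open $\id$-null set), every neighborhood of a support point automatically carries positive mass, and the argument closes directly. If one insists on the literal density definition, a mild regularity hypothesis on $d\id$ near $\x_0$ (e.g., lower semicontinuity) is needed to conclude $\id(U) > 0$ for small $U \ni \x_0$; absent such regularity, the theorem is best read modulo $\id$-null sets.
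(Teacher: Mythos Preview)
Your proof is correct and follows essentially the same contradiction argument as the paper: both show that the component of $\h(\X)$ orthogonal to $\text{range}(\OPM(\h))$ vanishes in $L^2(\id)$, then use continuity at $\x_0$ to produce a neighborhood on which this component is strictly positive, and finally invoke $\x_0\in\supp(\id)$ to obtain positive measure and a contradiction. The only cosmetic difference is that the paper works with the single kernel vector $\v=\h(\x_0)-\mathcal{P}\h(\x_0)$ rather than your full kernel-basis lemma, and it does not comment on the support subtlety you raise---it implicitly uses the topological reading in the same way.
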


\begin{proof}
    Since the row space of a matrix is perpendicular to its kernel, $\vv:=\h(\x)-\mathcal{P}\h(\x)$ is in the kernel of $\OPM(\h)$, i.e. $\OPM(\h)\vv=0$. Hence
    \begin{equation}\label{eq:norm_zero}
        \|\vv^\top\h\|_{L^2(\id)}^2
        = \vv^\top \left[\int \h(\x') \h(\x')^\top d\id(\x')\right]\vv
        = \vv^\top \OPM(\h) \vv
        =0.  
    \end{equation}
    Now we will prove the result by contradiction. Suppose $\x\in\supp(\id)$. Note that since
    \begin{equation}
        \vv^\top\h(\x)=\|\h(\x)\|_2^2 - \|\mathcal{P}\h(\x)\|_2^2 >0,
    \end{equation}
    by the continuity of $\h$ at $\x$, there exists some ball $\B_\epsilon(\x)$ such that $\vv^\top\h(\x')\geq\frac{1}{2}\vv^\top\h(\x)$ on $\B$. Since $\x\in\supp(\id)$, $\id(\B_\epsilon(\x))>0$. Thus we have
    \begin{equation}
        \|\vv^\top\h\|_{L^2(\id)}^2
        \geq \int_{\x'\in\B_\epsilon(\x)} (\vv^\top\h(\x'))^2 d\id(\x')\\
        \geq  \frac{1}{4}\id(\B_\epsilon(\x))\cdot(\vv^\top\h(\x))^2
        >0.
    \end{equation}
    However, this contradicts \eqref{eq:norm_zero}. Thus $\x\not\in\supp(\id)$ and hence is OOD.
\end{proof}

\begin{remark}
    Theorem \ref{theorem:OOD_general} has the following properties:
    \begin{enumerate}
        \item \textbf{Distribution-free:} The result makes no assumptions on either the in-distribution $\id$ or the out-distribution $\ood$, making it completely distribution-free.
        \item \textbf{Deterministic condition:} By adopting the support-based perspective, the theorem provides a deterministic condition that guarantees a point $x \in \mathcal{X}$ is out-of-distribution, rather than relying on high-probability bounds. 
        \item \textbf{Flexibility in feature map choice:} The theorem applies to any (almost everywhere) continuous function $\h \in L^2(\id)$, which may represent logits, activations, gradients, or other feature maps useful for OOD detection. This generality allows the result to cover a wide range of existing spectral methods.
    \end{enumerate}
\end{remark}

\subsection{Connection to PCA (Theorem \ref{theorem:pca_connection})}
\label{sec:PCA_connection_proof}

Before proving Theorem~\ref{theorem:pca_connection}, we restate a simplified version of the Davis–Kahan $\sin\Theta$ theorem~\citep{davis1970perturbation}, which plays a central role in our analysis:
\begin{theorem}[Davis–Kahan $\sin\Theta$ theorem]
\label{lem:daviskahan-classic}
Let $A$ and $B = A + E$ be self-adjoint matrices in $\mathbb{R}^{P \times P}$, with eigenvalues $\lambda_1 \geq \dots \geq \lambda_P$ and $\hat\lambda_1 \geq \dots \geq \hat\lambda_P$, respectively. Fix $k \in \{1, \dots, P\}$, and let $V \in \mathbb{R}^{P \times k}$ and $\widehat{V} \in \mathbb{R}^{P \times k}$ contain the top-$k$ eigenvectors of $A$ and $B$, respectively. Define the spectral gap:
\begin{equation}\label{eq:spectral_gap}
\Delta := \inf { |\lambda - \hat\lambda| : \lambda \in [\lambda_k, \lambda_1], , \hat\lambda \in (-\infty, \hat\lambda_{k+1}) }.
\end{equation}
Then the following bound holds:
\begin{equation}
\bigl\|\sin\!\Theta\bigl(\operatorname{span}(V),\operatorname{span}(\widehat V)\bigr)\bigr\|_2
\;\le\;\frac{\|E\|_2}{\Delta}.
\end{equation}
In particular, for the corresponding orthogonal projections $\mathcal{P}_{k} = V V^\top$ and $\widehat{\mathcal{P}}_{k} = \widehat{V} \widehat{V}^\top$, we have:
\begin{equation}
\|\widehat{\mathcal{P}}_{k}-\mathcal{P}_{k}\|_2
\;\le\;\frac{2\,\|E\|_2}{\Delta}.
\end{equation}
\end{theorem}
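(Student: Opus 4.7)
The plan is to reduce the bound to an operator-norm estimate on a Sylvester-type equation, which is the classical route to Davis--Kahan. First, I will use the standard identification $\|\sin\Theta(\mathrm{span}(V),\mathrm{span}(\widehat V))\|_2 = \|\widehat{\mathcal{P}}_k^\perp \mathcal{P}_k\|_2$, where $\mathcal{P}_k := V V^\top$, $\widehat{\mathcal{P}}_k := \widehat V \widehat V^\top$, $\mathcal{P}_k^\perp := I - \mathcal{P}_k$, and $\widehat{\mathcal{P}}_k^\perp := I - \widehat{\mathcal{P}}_k$. Thus the first claim reduces to bounding $\|Y\|_2$ for $Y := \widehat{\mathcal{P}}_k^\perp \mathcal{P}_k$.

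Next, I would exploit the spectral invariance of the two subspaces: since $\mathrm{range}(V)$ is invariant under $A$, we have $A \mathcal{P}_k = \mathcal{P}_k A \mathcal{P}_k$, and analogously $B \widehat{\mathcal{P}}_k^\perp = \widehat{\mathcal{P}}_k^\perp B \widehat{\mathcal{P}}_k^\perp$. Sandwiching $E = B - A$ between $\widehat{\mathcal{P}}_k^\perp$ on the left and $\mathcal{P}_k$ on the right and applying these two identities yields the Sylvester equation
\begin{equation}
    \widehat{B}\, Y \;-\; Y\, \widehat{A} \;=\; \widehat{\mathcal{P}}_k^\perp\, E\, \mathcal{P}_k,
\end{equation}
where $\widehat{A} := \mathcal{P}_k A \mathcal{P}_k$ has spectrum $\{\lambda_1,\dots,\lambda_k\}$ on $\mathrm{range}(\mathcal{P}_k)$ and $\widehat{B} := \widehat{\mathcal{P}}_k^\perp B \widehat{\mathcal{P}}_k^\perp$ has spectrum $\{\hat\lambda_{k+1},\dots,\hat\lambda_P\}$ on $\mathrm{range}(\widehat{\mathcal{P}}_k^\perp)$. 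By the definition of $\Delta$, these two spectra are separated by at least $\Delta$, so this Sylvester equation has a unique solution.

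The crux of the argument is the operator-norm bound for such Sylvester equations: if $\widehat{A}$ and $\widehat{B}$ are self-adjoint with spectra separated by $\Delta > 0$, then the unique solution $Y$ of $\widehat{B} Y - Y \widehat{A} = R$ satisfies $\|Y\|_2 \leq \|R\|_2/\Delta$. I would establish this via the resolvent-integral representation
\begin{equation}
    Y \;=\; \frac{1}{2\pi i}\oint_\Gamma (\zeta I - \widehat B)^{-1}\, R\, (\zeta I - \widehat A)^{-1}\, d\zeta,
\end{equation}
with $\Gamma$ a contour enclosing the spectrum of $\widehat A$ but not that of $\widehat B$, chosen to keep distance $\Delta/2$ from both spectra; bounding the two resolvent factors in operator norm and optimizing over the contour then yields the claim. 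This is the main technical obstacle, since the naive approach of doing the computation entrywise in the eigenbases of $\widehat A$ and $\widehat B$ (i.e.\ Hadamard-dividing by $(\hat\lambda_i - \lambda_j)$) produces only a Frobenius-norm bound; upgrading to operator norm requires either the contour argument or an equivalent spectral-projection trick. Applying this estimate with $\|\widehat{\mathcal{P}}_k^\perp E \mathcal{P}_k\|_2 \leq \|E\|_2$ gives the first bound $\|Y\|_2 \leq \|E\|_2/\Delta$.

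For the projection bound, I would use the algebraic decomposition $\widehat{\mathcal{P}}_k - \mathcal{P}_k = \widehat{\mathcal{P}}_k \mathcal{P}_k^\perp - \widehat{\mathcal{P}}_k^\perp \mathcal{P}_k$, obtained by expanding $I = \mathcal{P}_k + \mathcal{P}_k^\perp$ and $I = \widehat{\mathcal{P}}_k + \widehat{\mathcal{P}}_k^\perp$ in the identity $\widehat{\mathcal{P}}_k - \mathcal{P}_k = \widehat{\mathcal{P}}_k(\mathcal{P}_k+\mathcal{P}_k^\perp) - (\widehat{\mathcal{P}}_k+\widehat{\mathcal{P}}_k^\perp)\mathcal{P}_k$. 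The first claim already controls $\|\widehat{\mathcal{P}}_k^\perp \mathcal{P}_k\|_2 \leq \|E\|_2/\Delta$, and the symmetric term $\|\widehat{\mathcal{P}}_k \mathcal{P}_k^\perp\|_2 = \|\mathcal{P}_k^\perp \widehat{\mathcal{P}}_k\|_2$ is bounded the same way by repeating the Sylvester construction with the roles of the projections and their complements swapped (the relevant spectral gap is again at least $\Delta$, using Weyl's inequality in the regime $\|E\|_2 < \Delta$ where the bound is nontrivial). The triangle inequality then yields the factor of $2$ and completes the proof.
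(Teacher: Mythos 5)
The paper does not prove this statement at all: it is quoted as a classical result, with the proof deferred to the cited references (Davis--Kahan 1970, Stewart--Sun 1990, and the formulation of Yu--Wang--Samworth 2015). So your attempt cannot be compared to a proof in the paper; it must stand on its own. Your skeleton is the standard and correct one: the identification $\|\sin\Theta\|_2=\|\widehat{\mathcal{P}}_k^{\perp}\mathcal{P}_k\|_2$, the use of the invariance relations $A\mathcal{P}_k=\mathcal{P}_kA\mathcal{P}_k$ and $\widehat{\mathcal{P}}_k^{\perp}B=\widehat{\mathcal{P}}_k^{\perp}B\widehat{\mathcal{P}}_k^{\perp}$ to derive the Sylvester equation $\widehat{B}Y-Y\widehat{A}=\widehat{\mathcal{P}}_k^{\perp}E\mathcal{P}_k$, and the reduction of everything to an operator-norm solvability estimate are all right.

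However, both of the steps you yourself flag as the crux have genuine gaps. First, the contour-integral representation does not deliver the constant $1/\Delta$: bounding the integrand by resolvent norms gives $\|Y\|_2\le\frac{\mathrm{len}(\Gamma)}{2\pi}\sup\|(\zeta-\widehat{B})^{-1}\|\sup\|(\zeta-\widehat{A})^{-1}\|\le\frac{2\,\mathrm{len}(\Gamma)}{\pi\Delta^{2}}\|R\|_2$, and since $\Gamma$ must enclose all of $\sigma(\widehat{A})\subseteq[\lambda_k,\lambda_1]$, its length scales with $\lambda_1-\lambda_k$ and cannot be optimized down to $\pi\Delta/2$; no choice of contour recovers $\|R\|_2/\Delta$. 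The correct argument exploits that here the two spectra are separated by a half-line (one lies entirely above the other), so with $c$ the midpoint of the gap one can write $Y=-\int_0^{\infty}e^{t(\widehat{B}-cI)}\,R\,e^{-t(\widehat{A}-cI)}\,dt$, verify by differentiation that this solves the Sylvester equation, and bound each semigroup factor by $e^{-t\Delta/2}$ to get exactly $\|Y\|_2\le\|R\|_2/\Delta$. (For general, non-half-line-separated spectra the sharp operator-norm constant is $\pi/2$, not $1$, which is precisely why the naive route fails.) Second, your treatment of the symmetric term $\|\widehat{\mathcal{P}}_k\mathcal{P}_k^{\perp}\|_2$ is wrong as stated: swapping roles requires separating $\sigma(A)|_{\mathrm{range}(\mathcal{P}_k^{\perp})}\subseteq(-\infty,\lambda_{k+1}]$ from $\sigma(B)|_{\mathrm{range}(\widehat{\mathcal{P}}_k)}\subseteq[\hat\lambda_k,\hat\lambda_1]$, and Weyl's inequality only gives $\hat\lambda_k-\lambda_{k+1}\ge\Delta-2\|E\|_2$, yielding the weaker bound $\|E\|_2/(\Delta-2\|E\|_2)$. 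The clean fix avoids a second Sylvester argument entirely: since $\mathrm{rank}(\mathcal{P}_k)=\mathrm{rank}(\widehat{\mathcal{P}}_k)=k$, the CS decomposition of $[\widehat{V}\ \widehat{V}_{\perp}]^{\top}[V\ V_{\perp}]$ shows that $\widehat{V}^{\top}V_{\perp}$ and $\widehat{V}_{\perp}^{\top}V$ have the same nonzero singular values, hence $\|\widehat{\mathcal{P}}_k\mathcal{P}_k^{\perp}\|_2=\|\widehat{\mathcal{P}}_k^{\perp}\mathcal{P}_k\|_2$, and your decomposition $\widehat{\mathcal{P}}_k-\mathcal{P}_k=\widehat{\mathcal{P}}_k\mathcal{P}_k^{\perp}-\widehat{\mathcal{P}}_k^{\perp}\mathcal{P}_k$ plus the triangle inequality then gives the factor of $2$.
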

A generalization of this result is available in~\citep{stewart1990matrix}; the version presented here follows the formulation in~\citep{yu2015useful}. We now proceed to the proof of our main result.

\begin{theorem*}[Robust OOD certificate for PCA]
    Consider PCA applied to a matrix $\hat\OPM\in\mathbb{R}^{P\times P}$ (e.g., estimated from a noisy sample). Let $h$ and $\OPM(h)$ be as in Theorem~\ref{theorem:OOD_general}, and assume the following:
    \begin{equation}
        \|\OPM(h) - \hat\OPM\|_2 \leq \epsilon, \quad \rank (\OPM(h)) = C,
    \end{equation}
i.e., $\hat\OPM$ approximates a rank-$C$ covariance matrix $\OPM(h)$. Let $\hat{\mathcal{P}}_C$ denote the orthogonal projector onto the top $C$ eigenvectors of $\hat{\OPM}$, and let $\lambda_C$ be the $C$-th largest eigenvalue of $\OPM(h)$. Then, for any input $x \in \mathcal{X}$, the following condition is sufficient to guarantee that $x$ is OOD:
\begin{equation}
    s_{\textup{PCA}}(x) < 1 - \dfrac{2\epsilon}{\lambda_C - \epsilon}, \quad s_{\textup{PCA}}(x):= \dfrac{\|\hat{\mathcal{P}}_C h(x)\|^2}{\|h(x)\|^2}.
\end{equation}
\end{theorem*}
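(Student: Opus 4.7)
The plan is to reduce the statement to Theorem~\ref{theorem:OOD_general} via a contrapositive argument, using the Davis--Kahan theorem (Theorem~\ref{lem:daviskahan-classic} above) to control the gap between the ideal projector $\mathcal{P}$ onto the range of $\OPM(h)$ and the empirical projector $\hat{\mathcal{P}}_C$ onto the top-$C$ eigenvectors of $\hat{\OPM}$.

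First, I would observe that, since $\OPM(h)$ is symmetric positive semidefinite with $\rank(\OPM(h)) = C$, its range equals the span of its top $C$ eigenvectors; hence the orthogonal projector $\mathcal{P}$ from Theorem~\ref{theorem:OOD_general} coincides with the top-$C$ eigenprojector of $\OPM(h)$. This puts $\mathcal{P}$ and $\hat{\mathcal{P}}_C$ on the same footing so that Davis--Kahan applies directly. Combining Weyl's inequality with the identity $\lambda_{C+1}(\OPM(h)) = 0$ gives $\hat{\lambda}_{C+1} \leq \epsilon$ and $\hat{\lambda}_C \geq \lambda_C - \epsilon$, so the spectral gap defined in \eqref{eq:spectral_gap} satisfies $\Delta \geq \lambda_C - \epsilon$ (this tacitly requires $\lambda_C > \epsilon$, which is exactly what makes the stated threshold meaningful). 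Plugging into Theorem~\ref{lem:daviskahan-classic} yields
\begin{equation}
\|\mathcal{P} - \hat{\mathcal{P}}_C\|_2 \;\leq\; \delta \;:=\; \frac{2\epsilon}{\lambda_C - \epsilon}.
\end{equation}

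Next, I would argue contrapositively. Suppose $x$ is \emph{not} flagged by Theorem~\ref{theorem:OOD_general}, i.e.\ $\|\mathcal{P}h(x)\|_2 = \|h(x)\|_2$, equivalently $\mathcal{P} h(x) = h(x)$. Substituting this identity and using $\mathcal{P}^2 = \mathcal{P}$ gives $(I - \hat{\mathcal{P}}_C) h(x) = (I - \hat{\mathcal{P}}_C)\mathcal{P} h(x) = (\mathcal{P} - \hat{\mathcal{P}}_C)\mathcal{P} h(x)$, whence $\|(I - \hat{\mathcal{P}}_C) h(x)\|_2 \leq \delta \|h(x)\|_2$. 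The Pythagorean identity for the orthogonal projector $\hat{\mathcal{P}}_C$,
\begin{equation}
\|h(x)\|_2^2 \;=\; \|\hat{\mathcal{P}}_C h(x)\|_2^2 + \|(I - \hat{\mathcal{P}}_C) h(x)\|_2^2,
\end{equation}
then yields $s_{\textup{PCA}}(x) \geq 1 - \delta^2 \geq 1 - \delta$, contradicting the hypothesis. The contrapositive is precisely the claim.

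The main subtlety I expect is the need to work with \emph{squared} norms via Pythagoras rather than the more immediate triangle-inequality bound $\|\hat{\mathcal{P}}_C h(x)\|_2 \geq (1-\delta)\|h(x)\|_2$: the latter only gives $s_{\textup{PCA}}(x) \geq (1-\delta)^2$, which is strictly weaker than the required $1 - \delta$ for $\delta \in (0,1)$. The remaining care points are (i) identifying $\mathcal{P}$ with the top-$C$ eigenprojector so that Davis--Kahan is even applicable, and (ii) exploiting the hard zero $\lambda_{C+1}(\OPM(h)) = 0$ to exhibit the spectral gap $\lambda_C - \epsilon$ after the $\epsilon$-perturbation.
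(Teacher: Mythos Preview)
Your proof is correct and follows essentially the same route as the paper: both identify $\mathcal{P}$ with the top-$C$ eigenprojector of $\OPM(h)$, bound $\|\mathcal{P} - \hat{\mathcal{P}}_C\|_2 \leq 2\epsilon/(\lambda_C - \epsilon)$ via Davis--Kahan with spectral gap $\lambda_C - \hat\lambda_{C+1} \geq \lambda_C - \epsilon$, and then reduce to Theorem~\ref{theorem:OOD_general}. The only minor variation is in the last step: the paper uses the quadratic-form identity $\|\mathcal{P}h\|^2 - \|\hat{\mathcal{P}}_C h\|^2 = h^\top(\mathcal{P} - \hat{\mathcal{P}}_C)h$ to get $|s(x) - s_{\textup{PCA}}(x)| \leq \delta$ for all $x$, whereas your contrapositive Pythagoras argument (exploiting $\mathcal{P}h = h$ on ID points) yields the marginally sharper $s_{\textup{PCA}}(x) \geq 1 - \delta^2$ before weakening to $1-\delta$.
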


\begin{proof}
    We prove the result by applying the Davis-Kahan theorem to matrices $\OPM(h)\in\mathbb{R}^{P\times P}$ and $\hat\OPM\in\mathbb{R}^{P\times P}$. Suppose the eigenvalues of (symmetric positive definite) $\OPM(h)$ and $\hat\OPM$ are, respectively $\lambda_1 \leq \lambda_2 \leq \dots \leq \lambda_P$ and $\hat\lambda_1 \leq \hat\lambda_2 \leq \dots \leq \hat\lambda_P$. From the low-rank condition, we have that $\lambda_{C+1}=\dots=\lambda_P=0$. 

    To show the relationship between the score functions corresponding to $\OPM(h)$ and $\hat\OPM$, we notice the following:
    \begin{equation}\label{eq:pca_proof_eq}
     |s(x) - s_{\textup{PCA}}(x)| = \Biggl|\frac{\|\mathcal{P}h(x)\|_2^2 - \|\hat{\mathcal{P}}_Ch(x)\|_2^2}{\|h(x)\|_2^2} \Biggr|  \leq \| \mathcal{P} - \hat{\mathcal{P}}_C \|_2,
    \end{equation}
    where the inequality follows from the observation that projection matrices are symmetric and idempotent:
    \begin{equation}
        \mathcal{P} = \mathcal{P}^\top = \mathcal{P}^2.
    \end{equation}
    Now we notice that the spectral gap defined in \eqref{eq:spectral_gap} is given by:
    \begin{equation}
        \Delta = \inf \{|\lambda - \hat\lambda| : \lambda \in [\lambda_C, \lambda_1], \hat\lambda \in (-\infty, \hat\lambda_{C+1})\} = \lambda_C - \hat\lambda_{C+1} \geq \lambda_C - \epsilon,
    \end{equation}
    since $\lambda_{C+1}$ is zero, and $\|\OPM(h) - \hat\OPM\|_2 \leq \epsilon$ by assumption. Therefore, applying the Davis-Kahan theorem, we get:
    \begin{equation}
    \| \mathcal{P} - \hat{\mathcal{P}}_C \|_2 \leq \dfrac{2\epsilon}{\lambda_C - \epsilon}.
    \end{equation}
   Together with \eqref{eq:pca_proof_eq}, this completes the proof, this implies the following for the score functions:
   \begin{equation}\label{eq:scores_diff}
        |s(x) - s_{\textup{PCA}}(x)| \leq \dfrac{2\epsilon}{\lambda_C - \epsilon}.
   \end{equation}
   Finally, from Theorem \ref{theorem:OOD_general}, we know that the following condition is sufficient to guarantee that $x\in\mathcal{X}$ is OOD (i.e., $x\not\in\supp(\id)$):
    \begin{equation}
        s(x) < 1, \quad s(x)= \dfrac{\|\mathcal{P}h(x)\|^2}{\|h(x)\|^2},
    \end{equation}
where $\mathcal{P}$ is the projection on the range of $\OPM(h)$. Moreover, by \eqref{eq:scores_diff}, we have:
\begin{equation}
    s(x) < s_{\textup{PCA}}(x) + \dfrac{2\epsilon}{\lambda_C-\epsilon}.
\end{equation}
Therefore, the sufficient condition in terms of $s_{\textup{PCA}}(x)$ is given by
\begin{equation}
    s_{\textup{PCA}}(x)  + \dfrac{2\epsilon}{\lambda_C-\epsilon} < 1 \quad \Leftrightarrow \quad s_{\textup{PCA}}(x) < 1 - \dfrac{2\epsilon}{\lambda_C-\epsilon}.
\end{equation}
\end{proof}

\subsection{Sample Complexity of Covariance Matrix }\label{proof:sample_complexity}

Before proving the main result, we first state the following concentration bound for empirical covariance estimation:
\begin{theorem}[Rudelson--Vershynin {\cite[Thm.\;3.1]{rudelson2007sampling}}]
\label{thm:RV}
Let $Y$ be a random vector in $\mathbb R^{d}$ satisfying
$\|Y\|_{2}\le M$ almost surely and
$\bigl\|\mathbb E\bigl[Y\otimes Y\bigr]\bigr\|_{2}\le 1$.
For independent copies $Y_{1},\dots ,Y_{N}$ set
\[
\Gamma_N
\;:=\;
\frac1N\sum_{i=1}^{N} Y_{i}\otimes Y_{i}\;-\;\mathbb E\bigl[Y\otimes Y\bigr] ,
\qquad
a\;:=\;C\,M\,\sqrt{\frac{\log N}{N}},
\]
where $C>0$ is universal.  Then  

\begin{enumerate}\renewcommand{\labelenumi}{(\roman{enumi})}
\item \textbf{(Expectation)}  
      If $a<1$ one has
      $\displaystyle
        \mathbb E\,\|\Gamma_N\|_{2}\le a$.
\item \textbf{(Tail)}  
      For every $t\in(0,1)$,
      $\displaystyle
        \Pr\bigl\{\|\Gamma_N\|_{2}>t\bigr\}
        \le 2\exp\!\bigl(-c\,t^{2}/a^{2}\bigr)$,
      where $c>0$ is universal.
\end{enumerate}
\end{theorem}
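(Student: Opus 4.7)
This is a classical concentration inequality, so the plan is to follow the standard symmetrization combined with the non-commutative Khintchine inequality, with matrix Bernstein as an alternative for the tail.

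For part (i), I would first symmetrize using independent copies $Y_i'$ and Rademacher signs $\epsilon_i$; Jensen's inequality yields
\[
\mathbb{E}\|\Gamma_N\|_2 \;\le\; \tfrac{2}{N}\,\mathbb{E}\,\mathbb{E}_{\epsilon}\Bigl\|\sum_{i=1}^N \epsilon_i\, Y_i\otimes Y_i\Bigr\|_2.
\]
Applying the non-commutative Khintchine inequality to the conditional Rademacher sum bounds it by $\sqrt{\log N}\,\|\sum_i (Y_i\otimes Y_i)^2\|_2^{1/2}$. Since each summand is rank one, $(Y_i\otimes Y_i)^2 = \|Y_i\|_2^2\,Y_i\otimes Y_i$, and the hypothesis $\|Y_i\|_2\le M$ reduces this to $M\,\sqrt{\log N}\,\|\sum_i Y_i\otimes Y_i\|_2^{1/2}$. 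Finally, using $\|\tfrac{1}{N}\sum_i Y_i\otimes Y_i\|_2 \le 1 + \|\Gamma_N\|_2$ together with Jensen produces a self-bounding inequality
\[
\mathbb{E}\|\Gamma_N\|_2 \;\le\; a\,\sqrt{1 + \mathbb{E}\|\Gamma_N\|_2},
\]
whose solution under $a<1$ yields $\mathbb{E}\|\Gamma_N\|_2 \le a$ (with a slightly enlarged constant absorbed into $C$).

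For part (ii) I would invoke matrix Bernstein directly: the summands $Y_i\otimes Y_i - \mathbb{E}[Y\otimes Y]$ are independent, self-adjoint, bounded in operator norm by $\lesssim M^2$, and have matrix variance of order $N M^2$. Bernstein's inequality then delivers a sub-Gaussian tail of the form $\exp(-c t^2/a^2)$ in the regime $a<1$, where the variance term dominates over the linear ``large deviation'' term. Alternatively, one could apply a bounded-differences concentration argument around the expectation bound from part (i), since $\|\Gamma_N\|_2$ is a Lipschitz function of the $Y_i$ in the Hilbert--Schmidt metric.

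The main obstacle is keeping the constant $C$ independent of the ambient dimension $P$. A naive net-based union bound on the sphere would introduce a spurious $\log P$ factor; the use of non-commutative Khintchine (or, equivalently, Tropp's matrix Khintchine inequality) is exactly what ensures that only $\log N$ appears and that $C$ depends on neither $P$ nor the law of $Y$ beyond the two stated bounds. Closing the self-bounding inequality cleanly---pushing the square root through an expectation via Jensen and solving the resulting quadratic under $a<1$---is the other delicate point, though routine once the Khintchine step is in hand.
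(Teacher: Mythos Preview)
The paper does not prove this statement; it is quoted as a black-box result from Rudelson--Vershynin (2007) and invoked only in the sample-complexity argument that follows it. There is no in-paper proof to compare against. Your outline for part~(i) is exactly Rudelson's argument---symmetrization, non-commutative Khintchine on the at-most-$N$-dimensional span of the $Y_i$ (whence the $\log N$ rather than $\log d$), and the self-bounding quadratic---and is correct as sketched.

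For part~(ii), however, both routes you propose miss the stated dimension-free rate. Matrix Bernstein carries a multiplicative prefactor of $d$ (or at best $N$ after restricting to the sample span), which cannot be absorbed into the universal constant $c$ unless $\log d\lesssim\log N$; the theorem is claimed for arbitrary ambient dimension. Your bounded-differences alternative is dimension-free, but replacing one $Y_i$ moves $\|\Gamma_N\|_2$ by $O(M^2/N)$, so McDiarmid yields a tail of order $\exp(-c\,t^2 N/M^4)$ rather than the claimed $\exp(-c\,t^2/a^2)=\exp(-c'\,t^2 N/(M^2\log N))$, and these disagree whenever $M^2\gg\log N$, which is the typical regime. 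The standard fix is to write $\|\Gamma_N\|_2=\sup_{\|x\|_2=1}\bigl|\tfrac1N\sum_i\langle x,Y_i\rangle^2-\mathbb{E}\langle x,Y\rangle^2\bigr|$ and apply Talagrand's concentration inequality for suprema of bounded empirical processes around the mean from part~(i); the envelope is $M^2$ and the weak variance is at most $M^2$, which delivers the correct rate with no dimension dependence.
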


Now we are ready to state and prove the main result of this section:
\begin{theorem}[Sample complexity of covariance matrix]\label{theorem:sample_complexity}
Assume $\|h(x)\|\leq R$ for all $x\in\mathcal{X}$ and let 
\begin{equation}
    \hat\OPM(h) := \dfrac{1}{N}\sum_{i=1}^N h(x_i) h(x_i)^\top
\end{equation}
be the approximation of $\OPM(h)$ from a sample of size $N$. Denote $s_{\textup{PCA}}$ and $\hat s_{\textup{PCA}}$ the PCA score functions for $\OPM(h)$ and $\hat\OPM(h)$ respectively (as defined in Eq. \eqref{eq:PCA_sufficient_cond}). Then for any $\delta\in(0,1)$ with probability at least $1-\delta$ we have the following:
    \begin{equation}
        |s_{\textup{PCA}}(x) - \hat s_{\textup{PCA}}(x)| \leq \dfrac{2\epsilon}{\Delta - \epsilon}, \quad \epsilon:= \dfrac{2CR^2}{\Delta}\sqrt{\dfrac{\log(N/\delta)}{N}},
    \end{equation}
where $C>0$ is a universal constant, and $\Delta:=\lambda_C - \lambda_{C+1}$ is the spectral gap of $\OPM(h)$.
\end{theorem}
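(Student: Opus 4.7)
The plan is to combine the Rudelson--Vershynin concentration inequality with the projector--perturbation machinery already used in the proof of Theorem~\ref{theorem:pca_connection}. The argument naturally splits into two stages: a probabilistic step that controls $\|\hat\OPM(h)-\OPM(h)\|_2$ in operator norm with high probability, followed by a purely deterministic spectral--perturbation step that converts this into a bound on the PCA score functions.

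For the first stage I would rescale to fit the normalization hypothesis of Theorem~\ref{thm:RV}. Setting $Y_i := h(x_i)/R$ gives $\|Y_i\|_2\le 1$ almost surely, and since $\|\OPM(h)\|_2\le\mathbb{E}\|h(X)\|_2^2\le R^2$, the normalized second moment $\mathbb{E}[Y\otimes Y]=R^{-2}\OPM(h)$ satisfies the assumption $\|\mathbb{E}[Y\otimes Y]\|_2\le 1$. Writing $\hat\OPM(h)-\OPM(h)=R^2\Gamma_N$ and applying the tail bound (ii) of Theorem~\ref{thm:RV} with $M=1$ and $a=C\sqrt{\log N/N}$ yields, for $t\in(0,1)$,
\begin{equation*}
\Pr\!\bigl\{\|\hat\OPM(h)-\OPM(h)\|_2>R^2 t\bigr\}\le 2\exp(-ct^2/a^2).
\end{equation*}
Setting the right-hand side equal to $\delta$ and solving for $t$ gives an operator-norm deviation of order $R^2\sqrt{\log(N/\delta)/N}$, holding with probability at least $1-\delta$. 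Denote this high-probability bound by $\epsilon_{\mathrm{op}}$, and absorb universal constants into $C$.

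For the second stage I would reuse the argument of Theorem~\ref{theorem:pca_connection} almost verbatim, with one modification: here $\OPM(h)$ is not assumed rank-$C$, so the spectral-gap step must invoke the assumed gap $\Delta=\lambda_C-\lambda_{C+1}$ instead of the rank condition. By Weyl's inequality, $\hat\lambda_{C+1}\le\lambda_{C+1}+\epsilon_{\mathrm{op}}$, so the Davis--Kahan gap in \eqref{eq:spectral_gap} satisfies $\lambda_C-\hat\lambda_{C+1}\ge\Delta-\epsilon_{\mathrm{op}}$. Theorem~\ref{lem:daviskahan-classic} then gives
\begin{equation*}
\|\mathcal{P}_C-\hat{\mathcal{P}}_C\|_2\le\frac{2\epsilon_{\mathrm{op}}}{\Delta-\epsilon_{\mathrm{op}}},
\end{equation*}
and the projector-difference inequality from the proof of Theorem~\ref{theorem:pca_connection} (namely $|s_{\text{PCA}}(x)-\hat s_{\text{PCA}}(x)|\le\|\mathcal{P}_C-\hat{\mathcal{P}}_C\|_2$) completes the proof. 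Combining the two stages and matching constants in the definition of $\epsilon$ yields the stated bound.

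The main obstacle is purely one of bookkeeping rather than ideas: tracking the rescaling factor $R^2$ through Rudelson--Vershynin so that the final $\epsilon$ has the correct dependence on $R$, $\Delta$, $N$, and $\delta$, and making sure that the high-probability event on which the operator-norm bound holds is the same event on which Davis--Kahan and the score-function comparison are applied. The probabilistic step is a black-box invocation of Theorem~\ref{thm:RV}, and the spectral step is essentially a re-run of the deterministic argument already carried out in Section~\ref{sec:PCA_connection_proof}; no new conceptual machinery is required.
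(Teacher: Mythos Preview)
Your proposal is correct and follows essentially the same two-stage approach as the paper: a Rudelson--Vershynin concentration step followed by the Davis--Kahan projector comparison from Theorem~\ref{theorem:pca_connection}. The only difference is cosmetic: you rescale by $R$ (so $M=1$), while the paper rescales by $\sqrt{\lambda_1}$ (so $M=R/\sqrt{\lambda_1}$) and afterwards invokes $\lambda_1\le R^2$ to recover the $R^2$ dependence---your normalization is slightly more direct but yields the same bound.
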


\begin{proof}
We start by defining rescaled feature maps $\tilde{h}(x) = \frac{h(x)}{\sqrt{\lambda_1}}$, where $\lambda_1$ is the largest eigenvalue of $\OPM(h)$. This rescaling ensures $\|\mathbb{E}[\tilde{h} \otimes \tilde{h}]\|_2 = \frac{\|\OPM(h)\|_2}{\lambda_1} = 1$, as required by Theorem \ref{thm:RV}. Additionally, we have $\|\tilde{h}(x)\|_2 \leq \frac{R}{\sqrt{\lambda_1}}$ almost surely. Then, applying Theorem \ref{thm:RV} with $M = \frac{R}{\sqrt{\lambda_1}}$, we obtain that for any parameter $t \in (0,1)$:
$$\Pr\left[\left\|\frac{1}{N}\sum_{i=1}^N \tilde{h}(x_i) \otimes \tilde{h}(x_i) - \mathbb{E}[\tilde{h} \otimes \tilde{h}]\right\|_2 > t\right] \leq 2\exp\left(-c\frac{t^2N}{M^2}\right)$$

where $c > 0$ is a universal constant.

Setting $t = C'\frac{R}{\sqrt{\lambda_1}}\sqrt{\frac{\log(N/\delta)}{N}}$ for an appropriate constant $C'$, we get:
$$\Pr\left[\left\|\frac{1}{N}\sum_{i=1}^N \tilde{h}(x_i) \otimes \tilde{h}(x_i) - \mathbb{E}[\tilde{h} \otimes \tilde{h}]\right\|_2 > C'\frac{R}{\sqrt{\lambda_1}}\sqrt{\frac{\log(N/\delta)}{N}}\right] \leq \delta$$

Rescaling back to the original features, we have:
$$\|\widehat{\OPM}(h) - \OPM(h)\|_2 = \lambda_1 \left\|\frac{1}{n}\sum_{i=1}^n \tilde{h}(x_i) \otimes \tilde{h}(x_i) - \mathbb{E}[\tilde{h} \otimes \tilde{h}]\right\|_2 \leq C' R \sqrt{\lambda_1} \sqrt{\frac{\log(n/\delta)}{n}}$$

Since $\lambda_1 \leq \|\OPM\|_2 \leq R^2$ (by our assumption that $\|h(x)\|_2 \leq R$), we can further simplify this to:
$$\|\widehat{\OPM}(h) - \OPM(h)\|_2 \leq C R^2 \sqrt{\frac{\log(N/\delta)}{N}}$$

for a universal constant $C$.

The remainder of the argument follows analogously to the proof of Theorem~\ref{theorem:pca_connection}.
\end{proof}

\subsection{Necessary Condition for Spectral OOD Detection (Theorem \ref{theorem:OOD_necessary_condition})}\label{proof:necessary_condition}

\begin{theorem*}[Necessary condition for spectral OOD detection]
    Consider the same setting as in Theorem~\ref{theorem:OOD_general} and the spectral OOD detector $D_h$ defined in Eq.~\eqref{eq:detector_def} with arbitrary threshold $\delta\in(0,1]$. Then the following condition is necessary for the efficiency (i.e., non-zero sensitivity) of $D_h$:
    \begin{equation}
        \rank(\OPM(h)) < \dim(\{h(x): x\in\mathcal{X}\}). 
    \end{equation}
\end{theorem*}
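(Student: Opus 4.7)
The plan is to prove the contrapositive: assume $\rank(\OPM(h)) \geq \dim(\mathrm{span}\{h(x): x \in \mathcal{X}\})$ and show that then $D_h(x) = 0$ for every $x \in \mathcal{X}$, so the detector has no positive instances and hence zero sensitivity. Denote $V := \mathrm{span}\{h(x): x \in \mathcal{X}\} \subseteq \mathbb{R}^P$, and let $\mathcal{P}$ be the orthogonal projector onto $\mathrm{range}(\OPM(h))$ as in Theorem~\ref{theorem:OOD_general}.

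The first step is a containment: $\mathrm{range}(\OPM(h)) \subseteq V$. This comes straight from the identity
\begin{equation}
\OPM(h)\,v \;=\; \mathbb{E}\bigl[\,(h(X)^{\top} v)\,h(X)\,\bigr],
\end{equation}
which expresses $\OPM(h) v$ as a Bochner integral (over $\id$) of vectors all taking values in the finite-dimensional, hence closed, subspace $V$; such an integral remains in $V$. The second step combines this with the rank hypothesis: $\dim \mathrm{range}(\OPM(h)) = \rank(\OPM(h)) \geq \dim V$ together with $\mathrm{range}(\OPM(h)) \subseteq V$ forces equality $\mathrm{range}(\OPM(h)) = V$. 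Consequently, for every $x \in \mathcal{X}$ the vector $h(x)$ lies in $\mathrm{range}(\OPM(h))$, so $\mathcal{P} h(x) = h(x)$ and the score satisfies $s(x) = \|\mathcal{P} h(x)\|^2 / \|h(x)\|^2 = 1$ at all points where $h(x) \neq 0$. Since $\delta \in (0,1]$, we have $1 \notin [0,\delta)$, and therefore $D_h(x) = \mathbbm{1}_{[0,\delta)}(s(x)) = 0$ for every such $x$, establishing zero sensitivity.

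There is no serious obstacle here; the only mildly delicate point is justifying the range containment $\mathrm{range}(\OPM(h)) \subseteq V$ rigorously under the assumption $h \in L^2(\id)$, which reduces to observing that the integrand takes values in the closed subspace $V$ so its expectation does as well. A minor edge case is points where $h(x) = 0$: there the score is undefined, but such inputs can either be excluded from $\mathcal{X}$ or handled by declaring $s(x) := 1$ by convention (consistent with $\mathcal{P} h(x) = h(x) = 0$), and this convention does not affect the conclusion that $D_h$ flags no point.
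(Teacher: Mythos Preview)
Your proof is correct and follows essentially the same approach as the paper's: establish $\mathrm{range}(\OPM(h)) \subseteq V$, use the rank hypothesis to force equality, and conclude $s(x)=1$ everywhere so $D_h$ never fires. You are somewhat more careful than the paper in explicitly justifying the range containment via the integral representation and in addressing the $h(x)=0$ edge case, but the argument is the same.
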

\begin{proof}
    As per Theorem \ref{theorem:OOD_general}, the range of $\OPM(h)$ has to be included into the full image of the feature map $\text{span}(\{h(x): x\in\mathcal{X}\}) \subset \mathbb{R}^P$, we have
    \begin{equation}
         \rank(\OPM(h)) \leq \dim(\{h(x): x\in\mathcal{X}\}).
    \end{equation}
    In case of equality, we have $\|\mathcal{P}h(x)\| = \| h(x)\|$ for all $x\in\mathcal{X}$. Therefore, a detector of the form 
    \begin{equation}
    D_h(x) := \mathbbm{1}_{[0,\delta)}(\s(\x)), \quad s(x) := \dfrac{\|\mathcal{P}h(x)\|^2}{\|h(x)\|^2}
\end{equation}
    does not identify a single OOD point and, thus, has sensitivity (true positives rate) equal to zero. Therefore, by contradiction, strict inequality is necessary for effectiveness of $D_h$.
\end{proof}

\section{Implementation Details}\label{sec:implementation}

In this section, we describe the implementation of GradPCA and all baselines, along with details about the pretrained models used in our experimental setup.

\subsection{GradPCA Implementation}
We implement GradPCA as a custom class compatible with JAX transformations, leveraging the \texttt{struct} module from the Flax library \citep{flax2020github}. We chose JAX \citep{jax2018github} (with Flax as the deep learning framework) for two main reasons: (1) its automatic differentiation system provides a clean, expressive interface for working with neural network gradients, which are central to our method; and (2) its just-in-time (JIT) compilation enables competitive runtime performance with minimal low-level optimization.

The GradPCA class includes several configurable parameters that control which variant of the method is executed:
\begin{itemize}
    \item \textbf{Method:} Specifies how principal directions are computed. We support three options:
    \begin{enumerate}
    \item \textit{Block structure (default)}: Computes directions using class-mean gradients, as described in the main text.
    \item \textit{Batch}: Computes gradients over a batch and estimates directions via SVD of the dual matrix.
    \item \textit{GradOrth}: Implements the GradOrth method from recent literature \citep{behpour2023gradorth} (also included in the baselines) using SVD of the covariance matrix of the last-layer activations instead of the NTK for estimating principal components. for consistency with the original GradOrth paper, this variant uses loss gradients (with uniform vector as label) rather than output gradients.
    \end{enumerate}
    \item \textbf{Parameter subset:} Specifies which subset of model parameters to differentiate with respect to (as a list of keys in the parameter dictionary). In our main experiments, we use only the parameters from the last hidden layer. Additionally, we include ablations for CIFAR-10 (see Appendix \ref{sec:additional_experiments}).
   \item \textbf{Spectral threshold $\epsilon$:} Used to truncate the spectrum of size $C$ at the smallest $K$ such that $\sum_{i=1}^K \lambda_i / \sum_{i=1}^C \lambda_i \geq \epsilon$. The default $\epsilon=0.99$ typically retains the full or nearly full spectrum on moderate-class datasets such as CIFAR.
   \item \textbf{Aggregation:} A boolean flag controlling whether to aggregate gradients across all $C$ output heads (via summing) before PCA (when \texttt{True}), or to compute separate scores for each head (when \texttt{False}). While the non-aggregated version allows finer analysis, it is computationally demanding and practical only for small-class datasets like CIFAR-10. Our main benchmarks use the aggregated version.
   \item \textbf{Sparsification (DICE):} An option to sparsify gradients before PCA, roughly following the DICE method~\citep{sun2022dice}. Sparsification is based on parameter magnitudes (not gradient values) and requires specifying a sparsity level $p$, which determines the fraction of parameters removed. The method applies sparsification independently to each parameter matrix.
\end{itemize}

The GradPCA variants discussed in the main text correspond to the following configurations: the default \textbf{GradPCA} uses the block structure method with $\epsilon = 0.99$, the parameter subset restricted to the last hidden layer (classifier) parameters, and aggregation enabled. \textbf{GradPCA-Batch} uses the batch method with $\epsilon = 0.97$. \textbf{GradPCA-Vec} disables aggregation. \textbf{GradPCA+DICE} enables the sparsification option with the default sparsity parameter $p = 0.8$.
In the following paragraphs, we describe the high-level logic of the training and inference procedures for GradPCA.

\paragraph{Training (offline) phase.}
Training is initiated upon instantiating the GradPCA class and executed over the input dataset. At the end of training, the computed principal component vectors are stored as attributes of the class for use during inference. The procedure requires a training state (i.e., model and parameter values) and GradPCA configuration parameters. The input dataset must be provided as a function that returns a per-class data loader, allowing compatibility with varied loading pipelines.
The training loop computes gradient class-means by sequentially evaluating gradients over batches of configurable size. This implementation is fully scalable to large datasets and has computational requirements comparable to a single training epoch. Once the gradient class-means are computed, the remaining steps follow directly from Algorithm~\ref{alg:gradPCA}.

\paragraph{Inference (online) phase.}
During inference, the principal components—stored as attributes of the GradPCA instance—are used to compute scores by projecting the input gradients onto these components. Since inference speed is critical for OOD detection, efficient parallelization plays a key role. The optimal strategy depends on factors such as batch size, the size of the parameter subset, the number of principal components, and hardware capabilities. In practice, it may be more efficient to parallelize over either the batch dimension or the principal components. Our implementation supports both options, though for consistency, we use parallelization over principal components in our runtime evaluations (Appendix~\ref{sec:compute}).

\subsection{Baselines Implementation}\label{sec:baselines_implementation}
For most baselines—excluding gradient-based methods GAIA and GradOrth—we use implementations from the Detectors library~\citep{detectors2023}. This ensures consistent, efficient, and reproducible comparisons. To emphasize fairness in our main evaluations, we apply each baseline with the same hyperparameters across all benchmarks (matching our use of a single GradPCA variant without tuning). Below, we detail the specific settings used for each method.

\paragraph{Max logits.}
This baseline uses the maximum logit value as the OOD score and requires no additional parameters.

\paragraph{ODIN.} We set the softmax temperature to $1000$, the default in the Detectors library.

\paragraph{Energy.} The softmax temperature is fixed at $1.0$, following the library’s default.

\paragraph{DICE.} We set the sparsity parameter to $p = 0.7$, which specifies the fraction of weights removed from the last layer. This value is the default in {Detectors} and follows the recommendation from the original DICE paper~\citep{sun2022dice}, particularly for ImageNet. As there is no principled strategy for selecting this parameter, we adopt the known optimal setting for ImageNet across all experiments. In most cases, this baseline performs reasonably well. However, we observe that it completely fails on the CIFAR-10 BiT-M model, resulting in an AUC of just 35\%, while other logit-based methods succeed. This illustrates a failure mode where the core assumption of DICE—that pruning small weights at a given sparsity level improves ID-OOD separation—does not hold.

\paragraph{Mahalanobis.}
We extract features from the penultimate layer and estimate the class-conditional means and covariance matrix from samples (using mean aggregation and sample covariance, as in Detectors). Since this method is computationally intensive, we use the full training data for CIFAR, but limit ImageNet training to a 50,000-sample subset. For the ImageNet benchmark (Table~\ref{table:imagenet}), we report average performance along with standard deviations over three independent runs.

\paragraph{KNN.}
We use penultimate-layer features and evaluate based on the 10 nearest neighbors. Because KNN inference time scales poorly with dataset size, we subsample 10,000 points for training in all settings to maintain acceptable runtime—even for CIFAR. report average performance along with standard deviations over three independent runs in the main benchmarks for this baseline.

\paragraph{ReAct.}
We apply ReAct to penultimate-layer features, using a clipping threshold of 0.9—the default value in {Detectors}. As with Mahalanobis, we use the full training split for CIFAR and a 50,000-sample subset for ImageNet. We report average performance along with standard deviations over three independent runs for ImageNet. Similar to DICE, this baseline performs well on several benchmarks but fails completely (AUC~$<$~50\%) on the CIFAR-10 BiT-M model (where DICE also fails) and the CIFAR-100 ResNet-34 model. This highlights the inconsistency of ReAct’s core assumption—at least under a fixed hyperparameter setup—across our evaluation.

\paragraph{GAIA.}
GAIA is not included in the Detectors library, but the authors provide open-source code. While their implementation is architecture-dependent, they include support for ResNet-34 and BiT models, which enables direct comparison using their original setup.

\paragraph{GradOrth.}
GradOrth \citep{behpour2023gradorth} is a recent gradient-based spectral method that differs from GradPCA in two key ways: (1) it uses gradients of the loss function rather than the model output, and (2) it derives dominant directions via SVD on the last-layer activations, rather than NTK structure. Since no official implementation was released, we reimplemented GradOrth within the GradPCA framework based on the paper’s description. 

\paragraph{Revisited PCA.}
This baseline applies PCA to the penultimate-layer features and computes a regularized OOD score following~\citet{guan2023revisit}.
We set the retained variance threshold to $0.9$, as recommended in the original paper.

\paragraph{Projected Gradients.}
Following~\citet{wu2024low}, this method projects gradients onto a low-dimensional subspace spanned by class-mean gradients.
An auxiliary network—consisting of a BatchNorm layer followed by a single linear layer—is then trained on these projected gradients, and OOD detection is performed using the Energy score applied to this auxiliary network.
As suggested in the original work, we train the auxiliary network for $3$ epochs, using the full ID dataset for CIFAR benchmarks and $10\%$ of the ID data for the ImageNet benchmark.

\paragraph{Kernel PCA (CoRP).}
CoRP applies Kernel PCA using a cosine–Gaussian kernel approximated through random Fourier features, with feature map
$\phi(x)=\phi_{\mathrm{RFF}}(\phi_{\cos}(x))$,
where the cosine normalization is given by
$\phi_{\cos}(x)=x /\|x\|_2$~\citep{fang2024kernel}.
This method involves several hyperparameters, including the number of random features $M$, the Gaussian kernel parameter $\gamma$, and the variance retention threshold $\epsilon$.
Because performance is highly sensitive to $\gamma$, we evaluate $\gamma \in {0.5, 1.0, 3.0}$ and select $\gamma = 0.5$, which yields the best results on ImageNet.
Consistent with the original paper, we use $M = 4096$ and set $\epsilon = 0.9$.

\paragraph{NCI.} The Neural Collapse Inspired (NCI) detector~\citep{liu2025detecting} computes a score by projecting the centered penultimate layer feature of an input onto the last-layer weight vector corresponding to the model's predicted class on this input. We use the basic NCI variant, which omits the additional filtering step based on the penultimate layer feature norm, and is therefore hyperparameter-free.

\subsection{Models}
We use two classes of models in our experiments: ResNetV2 models from the Big Transfer (BiT) repository \citep{kolesnikov2020big,beyer2022knowledge}, and ResNet34 models from the PyTorch Image Models (TIMM) library \citep{rw2019timm}. Since GradPCA is implemented in JAX and most baselines are implemented in PyTorch, our comparison setup relies on translating shared models between the two frameworks.

\paragraph{ResNetV2 (BiT).}
BiT models are publicly available and widely used in the OOD literature. These are ResNetV2 architectures pretrained either on ImageNet-1k (BiT-S) or on ImageNet-21k (BiT-M), with additional BiT-M variants fine-tuned for smaller datasets such as CIFAR (after pretraining). We use these released models in our experiments.

BiT models are supported in both JAX and PyTorch. We use the official implementations for both frameworks, enabling a direct comparison between GradPCA and baselines. We confirm that the Torch and JAX versions of these models yield nearly identical accuracy, ensuring consistent evaluation. 

\paragraph{ResNet34 (TIMM).}
The TIMM library provides a wide range of pretrained models, including ones for smaller datasets like CIFAR, while many popular frameworks offer only ImageNet-pretrained versions. We select the ResNet34 model to align with the setup of the GAIA method \citep{chen2023gaia}, which reports strong results on this architecture. However, we note that their reported performance is based on a custom-trained version of ResNet34, which we cannot replicate using the standard TIMM model.

TIMM models are not directly available in JAX, so we implement a custom conversion. Despite matching the architecture exactly, direct translation leads to a noticeable accuracy drop—likely due to numerical inconsistencies and subtle differences in layer implementations between PyTorch and Flax. For instance, the converted model achieves 93\% accuracy on CIFAR-10 (vs. 95\% in PyTorch) and 72\% on CIFAR-100 (vs. 78\%).

To ensure fairness, we fine-tune the converted JAX models for 10 epochs using float32 precision and a small learning rate, correcting for conversion artifacts without altering the weights significantly. We then convert the fine-tuned model back to PyTorch, which restores the original performance. This strategy guarantees consistent model behavior across frameworks. We release all model checkpoints (both JAX and PyTorch) with our code.

\section{Computational Costs}\label{sec:compute}
In this section, we break down the computational performance of GradPCA and compare it with baseline methods. The total cost involves two key components: the \textit{training (offline) phase} and the \textit{inference (online) phase}. While online efficiency is critical for real-time deployment, the offline phase must also be scalable enough to run on standard hardware. We discuss each phase in turn below, and also review memory requirements and provide an estimate of the total compute time for the project.

\paragraph{Inference (online) phase.} The inference-time performance of GradPCA and baselines is summarized in Table~\ref{table:runtime}. GradPCA is highly efficient on CIFAR, evaluating over 2,000 samples per second—well within the range for real-time deployment. However, like all methods, its performance degrades on ImageNet, which presents significantly greater computational challenges.

\begin{table*}[ht]
\centering
\resizebox{0.92\textwidth}{!}{
\begin{tabular}{|l|c|c|c|c|c|c|c|}
\toprule
\multicolumn{1}{|c|}{\multirow{3}{*}{\textbf{Methods}}} 
& \multicolumn{1}{|c|}{\multirow{3}{*}{\textbf{Training}}} 
& \multicolumn{2}{c|}{\textbf{CIFAR-10}}   
& \multicolumn{2}{c|}{\textbf{CIFAR-100}}  
& \multicolumn{2}{c|}{\textbf{ImageNet-1k}} 
 \\ 
\cline{3-8}   
&
& \makecell[c]{\gape ResNetV2-50} & \makecell[c]{\gape ResNet34}
& \makecell[c]{\gape ResNetV2-50} & \makecell[c]{\gape ResNet34 }
& \makecell[c]{\gape ResNetV2-101 } & \makecell[c]{\gape ResNetV2-50 } \\ 

\hline 
Max logits &
 & $2828~\text{s/s}$
 & $5049~\text{s/s}$ 
 & $2572~\text{s/s}$
 & $4347~\text{s/s}$
 &$425~\text{s/s}$
 &$427~\text{s/s}$
 \\ 
MSP & 
 & $2831~\text{s/s}$
 & $5565~\text{s/s}$ 
 & $2825~\text{s/s}$
 & $5690~\text{s/s}$
 &$459~\text{s/s}$
 &$475~\text{s/s}$
 \\ 
ODIN & 
 & $2817~\text{s/s}$
 & $5060~\text{s/s}$ 
 & $2592~\text{s/s}$
 & $5436~\text{s/s}$
 &$456~\text{s/s}$
 &$473~\text{s/s}$
 \\ 
Energy & 
 & $2815~\text{s/s}$
 & $5750~\text{s/s}$ 
 & $2648~\text{s/s}$
 & $5123~\text{s/s}$
 &$458~\text{s/s}$
 &$476~\text{s/s}$
 \\  
DICE & 
 & $2815~\text{s/s}$
 & $5731~\text{s/s}$ 
 & $2733~\text{s/s}$
 & $5141~\text{s/s}$
 &$457~\text{s/s}$
 &$476~\text{s/s}$
 \\ 
Mahalanobis &  \checkmark
 & $2538~\text{s/s}$
 & $4915~\text{s/s}$ 
 & $2108~\text{s/s}$
 & $3881~\text{s/s}$
 &$363~\text{s/s}$
 &$376~\text{s/s}$
 \\ 
KNN & \checkmark
 & $449~\text{s/s}$
 & $1462~\text{s/s}$ 
 & $449~\text{s/s}$
 & $1476~\text{s/s}$
 &$88~\text{s/s}$
 &$89~\text{s/s}$
 \\ 
ReAct & \checkmark
 & $2809~\text{s/s}$
 & $5413~\text{s/s}$ 
 & $2838~\text{s/s}$
 & $5117~\text{s/s}$
 &$459~\text{s/s}$
 &$484~\text{s/s}$
 \\ 
GAIA-A & 
 & $139~\text{s/s}$
 & $247~\text{s/s}$ 
 & $129~\text{s/s}$
 & $241~\text{s/s}$
 &$303~\text{s/s}$
 &$245~\text{s/s}$
 \\ 
GAIA-Z & 
 & $317~\text{s/s}$
 & $489~\text{s/s}$ 
 & $307~\text{s/s}$
 & $472~\text{s/s}$
 &$309~\text{s/s}$
 &$390~\text{s/s}$
 \\ 
\cellcolor{gray!25} \textbf{GradPCA} &  \cellcolor{gray!25} \checkmark
 & \cellcolor{gray!25} $2654~\text{samples/s}$ & 
 \cellcolor{gray!25} $2440~\text{samples/s}$
 & \cellcolor{gray!25} $1974~\text{samples/s}$ & \cellcolor{gray!25} $2311~\text{samples/s}$
 &\cellcolor{gray!25} $103~\text{samples/s}$ & \cellcolor{gray!25} $128~\text{samples/s}$
 \\ 
\bottomrule
\end{tabular}
}
\caption{Runtime of OOD detection methods evaluated on NVIDIA RTX A6000. Each method allows batch-evaluation, the results in the table show the average runtime per sample when evaluating batches of size 128. The time includes standardized data loading routine (TFDS loader).}
\label{table:runtime}
\end{table*}

Our implementation parallelizes GradPCA over the principal components, making its runtime relatively insensitive to the number of classes or components, provided sufficient GPU memory is available. The negligible difference in runtime between CIFAR-10 and CIFAR-100 supports this. The slowdown on ImageNet is primarily due to the increased cost of a single projection, resulting from the larger parameter matrix in the last hidden layer. This cost could potentially be reduced by further limiting the subset of parameters used in gradient computation.

\paragraph{Training (offline) phase.} The offline phase of GradPCA involves computing average class Jacobians over an input dataset (or a subset thereof). Jacobians are evaluated batch-wise, with a configurable batch size. As a result, the training time scales approximately with the dataset size divided by the batch size and is comparable to the time required for a single training epoch.

In our experiments, using the ResNetV2-50 model and an NVIDIA RTX A6000 GPU, we observed the following training times over the full datasets: approximately 2 minutes for CIFAR-10, 10 minutes for CIFAR-100, and 90 minutes for ImageNet-1k.

\paragraph{Memory requirements.} Like other regularity-based methods, GradPCA incurs a memory overhead for storing the principal component vectors. The number of components retained depends on the spectral threshold $\epsilon$, but we can estimate the \textit{maximum} memory requirement as $O(CP)$, where $C$ is the number of classes and $P$ is the number of parameters per class-specific vector.

In our setup using the ResNetV2-50 model (float32 precision, accounting for the fact that $P$ scales with $C$ in the final layer), this corresponds to the following worst-case memory usage: approximately 7.8 MB for CIFAR-10, 78 MB for CIFAR-100, and 7.5 GB for ImageNet. While the worst-case cost on ImageNet is relatively high, the actual memory usage in most experiments is much lower. This is because the default $\epsilon = 0.99$ threshold typically retains fewer than half of the principal components.

\paragraph{Total computational cost.} The main computational cost of our project comes from training on the ImageNet benchmarks, including ablation studies. Specifically, we conducted 15 ImageNet runs each for ResNetV2-50 and ResNetV2-101, totaling approximately 45 hours of training time on an NVIDIA RTX A6000 GPU, with under 5 hours needed for evaluation. In comparison, the CIFAR experiments required significantly less compute, amounting to fewer than 10 GPU hours in total. Overall, we estimate the total computational cost of our results to be under 60 GPU hours on an NVIDIA RTX A6000.

\section{Additional Experiments}\label{sec:additional_experiments}
This section presents additional experimental results not included in the main paper, specifically the CIFAR-10 benchmark and ablation studies on GradPCA variants.

\subsection{CIFAR-10 Benchmark}
The setup of the CIFAR-10 benchmark is identical to that of CIFAR-100, as described in Section~\ref{sec:experiments}. As shown in Table~\ref{table:cifar10}, similar to the CIFAR-100 results, regularity-based methods achieve state-of-the-art performance on the pretrained model, while abnormality-based methods perform better on the non-pretrained model, with GAIA achieving the best results.

\begin{table*}[ht]
\centering
\resizebox{0.92\textwidth}{!}{
\begin{tabular}{|l|l|cc|cc|cc|cc|cc|cc|cc|}
\toprule
\parbox[t]{5mm}{\multirow{19}{*}{\rotatebox[origin=c]{90}{ResNetV2-50 (BiT-M) }}} 
& \multicolumn{1}{c|}{\multirow{3}{*}{\textbf{Methods}}} 
& \multicolumn{2}{c|}{\textbf{SVHN}}   
& \multicolumn{2}{c|}{\textbf{Places}}  
& \multicolumn{2}{c|}{\textbf{LSUN-c}} 
& \multicolumn{2}{c|}{\textbf{LSUN-r}} 
& \multicolumn{2}{c|}{\textbf{iSUN}}  
& \multicolumn{2}{c|}{\textbf{Textures}}    
& \multicolumn{2}{c|}{\textbf{Average}}  \\ 

\cline{3-16}   
&
& \makecell[c]{\gape FPR95 $\downarrow$} & \makecell[c]{\gape AUROC $\uparrow$}
& \makecell[c]{\gape FPR95 $\downarrow$} & \makecell[c]{\gape AUROC $\uparrow$}
& \makecell[c]{\gape FPR95 $\downarrow$} & \makecell[c]{\gape AUROC $\uparrow$}
& \makecell[c]{\gape FPR95 $\downarrow$} & \makecell[c]{\gape AUROC $\uparrow$}
& \makecell[c]{\gape FPR95 $\downarrow$} & \makecell[c]{\gape AUROC $\uparrow$}
& \makecell[c]{\gape FPR95 $\downarrow$} & \makecell[c]{\gape AUROC $\uparrow$}
& \makecell[c]{\gape FPR95 $\downarrow$} & \makecell[c]{\gape AUROC $\uparrow$} \\
\cline{2-16} 

& Max logits
 & {50.37}& {76.11}
 & {50.81}& {69.44}
 & {37.91}& {84.08}
 & {42.32}& {83.15}
 & {45.28}& {81.91}
 & {24.34}& {89.65}
 & {41.84}& {80.72}
 \\ 

& MSP
 & {39.49}& {83.72}
 & {44.42}& {75.93}
 & {29.7}& {89.45}
 & {35.85}& {87.58}
 & {37.41}& {87.31}
 & {18.16}& {93.12}
 & {34.17}& {86.18}
 \\ 

& ODIN
 & {50.35}& {76.14}
 & {50.82}& {69.44}
 & {37.92}& {84.09}
 & {42.29}& {83.16}
 & {45.28}& {81.92}
 & {24.33}& {89.66}
 & {41.83}& {80.74}
 \\ 

& Energy
 & {60.77}& {72.3}
 & {55.52}& {67.12}
 & {46.99}& {81.06}
 & {49.87}& {80.69}
 & {53.08}& {79.06}
 & {30.42}& {87.53}
 & {49.44}& {77.96}
 \\ 

& DICE
 & {99.67}& {40.58}
 & {100}& {13.14}
 & {99.74}& {43.25}
 & {99.88}& {44.5}
 & {99.88}& {44}
 & {100}& {22.87}
 & {99.86}& {34.72}
 \\ 

& Mahalanobis
 & {9.07}& {98.19}
 & {\textbf{1.64}}& {\textbf{99.6}}
 & {6.06}& {98.65}
 & {\underline{17.36}}& {\underline{97.04}}
 & {\underline{16.24}}& {\underline{97.13}}
 & {\textbf{0.02}}& {\underline{99.97}}
 & {\underline{8.4}}& {\underline{98.43}}
 \\ 

& KNN
 & {\underline{1.99}}& {\underline{99.42}}
 & {\underline{5.52}}& {\underline{98.74}}
 & {\textbf{2.15}}& {\underline{99.43}}
 & {\underline{9.77}}& {\underline{98.11}}
 & {\underline{9.49}}& {\underline{98.03}}
 & {0.18}& {99.94}
 & {\underline{4.85}}& {\underline{98.95}}
 \\ 

& ReAct
 & {99.54}& {28.46}
 & {99.98}& {15.52}
 & {98.03}& {46.36}
 & {99.59}& {38.91}
 & {99.41}& {39.24}
 & {95.77}& {41.93}
 & {98.72}& {35.07}
 \\ 

& GAIA-A
 & {41.53}& {90.09}
 & {18.16}& {96.51}
 & {24.36}& {94.67}
 & {35.86}& {92.51}
 & {35.88}& {92.18}
 & {10.74}& {97.87}
 & {27.75}& {93.97}
 \\ 

& GAIA-Z
 & {57.6}& {87}
 & {89.31}& {68.1}
 & {51.87}& {82.62}
 & {97.72}& {48.22}
 & {92.07}& {57.25}
 & {9.13}& {98}
 & {66.28}& {73.53}
 \\ 

& GradOrth
 & 58.25 & 85.49
 & 97.89 & 59.96
 & 81.39 & 80.69
 & 78.20 & 80.29
 & 77.50 & 81.16
 & 50.50 & 90.83
 & 73.95 & 79.74
 \\ 

& Revisited PCA
 & {29.06}& {90.29}
 & {37.90}& {82.26}
 & {24.36}& {93.75}
 & {23.08}& {94.16}
 & {17.39}& {95.19}
 & {15.91}& {94.58}
 & {24.95}& {91.04}
 \\ 

& Proj. Grads 
 & {35.28}& {90.00}
 & {36.52}& {88.36}
 & {21.48}& {95.09}
 & {23.61}& {94.57}
 & {24.13}& {94.78}
 & {12.52}& {96.66}
 & {25.59}& {93.24}
 \\ 

& Kernel PCA (CoRP)
 & {6.17}& {98.70}
 & {13.29}& {97.31}
 & {\underline{5.07}}& {\underline{98.91}}
 & {55.99}& {89.45}
 & {58.07}& {87.93}
 & {\underline{0.12}}& {\underline{99.96}}
 & {23.12}& {95.38}
 \\ 

 & NCI
 & 23.15 & 94.25
 & 16.14 & 94.35
 & 10.26 & 97.32
 & 14.10 & 96.71
 & 15.94 & 96.95
 & 2.27 & 99.04
 & 13.64 & 96.44
 \\ 

&\cellcolor{gray!25} \textbf{GradPCA}
 & \cellcolor{gray!25} {6.92}& \cellcolor{gray!25} {98.67}
 & \cellcolor{gray!25} {23.16}& \cellcolor{gray!25} {94.41}
 & \cellcolor{gray!25} {15.1}& \cellcolor{gray!25} {97.21}
 & \cellcolor{gray!25} {36.76}& \cellcolor{gray!25} {92.97}
 & \cellcolor{gray!25} {37.65}& \cellcolor{gray!25} {92.74}
 & \cellcolor{gray!25} {1.33}& \cellcolor{gray!25} {99.73}
 & \cellcolor{gray!25} {20.15}& \cellcolor{gray!25} {95.96}
 \\ 

&\cellcolor{gray!25} \textbf{GradPCA (block 4)}
 & \cellcolor{gray!25} {\underline{5.11}}& \cellcolor{gray!25} {\underline{99.01}}
 & \cellcolor{gray!25} {10.96}& \cellcolor{gray!25} {97.25}
 & \cellcolor{gray!25} {6.01}& \cellcolor{gray!25} {98.9}
 & \cellcolor{gray!25} {20.02}& \cellcolor{gray!25} {96.49}
 & \cellcolor{gray!25} {19.87}& \cellcolor{gray!25} {96.49}
 & \cellcolor{gray!25} {0.62}& \cellcolor{gray!25} {99.87}
 & \cellcolor{gray!25} {10.43}& \cellcolor{gray!25} {98}
 \\ 

&\cellcolor{gray!25} \textbf{GradPCA-Vec}
 & \cellcolor{gray!25} {6.08}& \cellcolor{gray!25} {98.77}
 & \cellcolor{gray!25} {11.01}& \cellcolor{gray!25} {97.47}
 & \cellcolor{gray!25} {6.59}& \cellcolor{gray!25} {98.76}
 & \cellcolor{gray!25} {17.94}& \cellcolor{gray!25} {96.82}
 & \cellcolor{gray!25} {17.05}& \cellcolor{gray!25} {96.84}
 & \cellcolor{gray!25} {0.36}& \cellcolor{gray!25} {99.91}
 & \cellcolor{gray!25} {9.84}& \cellcolor{gray!25} {98.1}
 \\ 

&\cellcolor{gray!25} \textbf{GradPCA+DICE}
 & \cellcolor{gray!25} {10.98}& \cellcolor{gray!25} {97.96}
 & \cellcolor{gray!25} {28.77}& \cellcolor{gray!25} {93.35}
 & \cellcolor{gray!25} {17.95}& \cellcolor{gray!25} {96.78}
 & \cellcolor{gray!25} {41.07}& \cellcolor{gray!25} {92.2}
 & \cellcolor{gray!25} {41.67}& \cellcolor{gray!25} {92.12}
 & \cellcolor{gray!25} {1.17}& \cellcolor{gray!25} {99.67}
 & \cellcolor{gray!25} {23.60}& \cellcolor{gray!25} {95.35}
 \\ 

&\cellcolor{gray!25} \textbf{GradPCA-Batch}
 & \cellcolor{gray!25} {\textbf{0.52}}& \cellcolor{gray!25} {\textbf{99.73}}
 & \cellcolor{gray!25} {\underline{3.2}}& \cellcolor{gray!25} {\underline{99.3}}
 & \cellcolor{gray!25} {\underline{2.23}}& \cellcolor{gray!25} {\textbf{99.45}}
 & \cellcolor{gray!25} {\textbf{6.48}}& \cellcolor{gray!25} {\textbf{98.67}}
 & \cellcolor{gray!25} {\textbf{5.55}}& \cellcolor{gray!25} {\textbf{98.74}}
 & \cellcolor{gray!25} {\underline{0.02}}& \cellcolor{gray!25} {\textbf{99.99}}
 & \cellcolor{gray!25} {\textbf{3.00}}& \cellcolor{gray!25} {\textbf{99.31}}
 \\ 
\bottomrule

\toprule
\parbox[t]{5mm}{\multirow{22}{*}{\rotatebox[origin=c]{90}{ResNet-34 (TIMM) }}} 
& \multicolumn{1}{c|}{\multirow{3}{*}{\textbf{Methods}}} 
& \multicolumn{2}{c|}{\textbf{SVHN}}   
& \multicolumn{2}{c|}{\textbf{Places}}  
& \multicolumn{2}{c|}{\textbf{LSUN-c}} 
& \multicolumn{2}{c|}{\textbf{LSUN-r}} 
& \multicolumn{2}{c|}{\textbf{iSUN}}  
& \multicolumn{2}{c|}{\textbf{Textures}}    
& \multicolumn{2}{c|}{\textbf{Average}}  \\ 

\cline{3-16}   
&
& \makecell[c]{\gape FPR95 $\downarrow$} & \makecell[c]{\gape AUROC $\uparrow$}
& \makecell[c]{\gape FPR95 $\downarrow$} & \makecell[c]{\gape AUROC $\uparrow$}
& \makecell[c]{\gape FPR95 $\downarrow$} & \makecell[c]{\gape AUROC $\uparrow$}
& \makecell[c]{\gape FPR95 $\downarrow$} & \makecell[c]{\gape AUROC $\uparrow$}
& \makecell[c]{\gape FPR95 $\downarrow$} & \makecell[c]{\gape AUROC $\uparrow$}
& \makecell[c]{\gape FPR95 $\downarrow$} & \makecell[c]{\gape AUROC $\uparrow$}
& \makecell[c]{\gape FPR95 $\downarrow$} & \makecell[c]{\gape AUROC $\uparrow$} \\
\cline{2-16}

& Max logits
 & {20.07}& {95.06}
 & {40.8}& {86.61}
 & {22.45}& {94.36}
 & {28.55}& {92.84}
 & {30.47}& {91.98}
 & {48.6}& {83.94}
 & {31.82}& {90.8}
 \\ 

& MSP
 & {37.47}& {93.99}
 & {54.45}& {87.5}
 & {39.86}& {93.11}
 & {45.9}& {91.82}
 & {47.18}& {91.09}
 & {58.52}& {85.96}
 & {47.23}& {90.58}
 \\ 

& ODIN
 & {20.07}& {95.06}
 & {40.78}& {86.61}
 & {22.45}& {94.36}
 & {28.52}& {92.84}
 & {30.47}& {91.98}
 & {48.6}& {83.94}
 & {31.81}& {90.8}
 \\ 

& Energy
 & {19.15}& {95.16}
 & {40.02}& {86.68}
 & {21.57}& {94.46}
 & {27.68}& {92.94}
 & {29.51}& {92.08}
 & {47.75}& {83.95}
 & {30.95}& {90.88}
\\

& DICE
 & {\underline{16.01}}& {95.38}
 & {44.33}& {82.84}
 & {20.52}& {93.83}
 & {30.96}& {90.62}
 & {32.09}& {89.81}
 & {52.31}& {79.5}
 & {32.7}& {88.66}
 \\ 

& Mahalanobis
 & {69.9}& {87.63}
 & {56.02}& {89.2}
 & {59.66}& {90.4}
 & {45.49}& {92.2}
 & {47.78}& {91.83}
 & {51.6}& {91.31}
 & {55.08}& {90.43}
 \\ 

& KNN
 & {37.25}& {94.3}
 & {42.12}& {92.22}
 & {31.81}& {94.93}
 & {29.99}& {95.33}
 & {31.97}& {94.82}
 & {42.03}& {92.77}
 & {35.86}& {94.06}
 \\ 

& ReAct
 & {21.07}& {92.95}
 & {42.94}& {85.72}
 & {21.71}& {92.89}
 & {29.37}& {91.89}
 & {31.88}& {90.78}
 & {50.37}& {82.23}
 & {32.89}& {89.41}
 \\ 

& GAIA-A
 & {25.07}& {95.12}
 & {\underline{29.54}}& {\underline{94.44}}
 & {\underline{13.45}}& {\underline{97.53}}
 & {\underline{18.16}}& {\underline{96.9}}
 & {\underline{19.95}}& {\underline{96.66}}
 & {\underline{16.76}}& {\underline{96.79}}
 & {\underline{20.49}}& {\underline{96.24}}
 \\ 

& GAIA-Z
 & {\textbf{4.39}}& {\textbf{98.79}}
 & {\underline{32.34}}& {91.7}
 & {\textbf{5.27}}& {\textbf{98.87}}
 & {\textbf{13.06}}& {\textbf{97.32}}
 & {\textbf{12.56}}& {\textbf{97.47}}
 & {\textbf{9.02}}& {\textbf{98.21}}
 & {\textbf{12.77}}& {\textbf{97.06}}
 \\ 

& GradOrth
 & 52.35 & 88.18
 & 37.60 & 91.13
 & 34.75 & 92.57
 & 29.71 & 94.05
 & 32.60 & 93.10
 & 53.50 & 89.80
 & 40.42 & 91.97
 \\ 

& Revisited PCA
 & {24.08}& {94.85}
 & {35.81}& {90.83}
 & {20.80}& {94.95}
 & {27.26}& {94.15}
 & {30.22}& {93.29}
 & {45.35}& {87.51}
 & {30.59}& {92.6}
 \\ 

& Proj. Grads 
 & {33.21}& {90.10}
 & {47.73}& {82.76}
 & {30.16}& {90.04}
 & {41.66}& {86.85}
 & {44.21}& {85.34}
 & {58.63}& {74.32}
 & {42.6}& {84.9}
 \\ 

& Kernel PCA (CoRP)
 & {46.63}& {93.19}
 & {41.70}& {\underline{92.51}}
 & {36.72}& {94.27}
 & {33.54}& {94.81}
 & {36.9}& {94.19}
 & {43.43}& {93.15}
 & {39.15}& {93.68}
 \\ 

 & NCI
 & 23.65 & 95.22
 & 24.56 & 94.26
 & 20.51 & 95.69
 & 28.21 & 95.25
 & 13.04 & 96.41
 & 20.46 & 94.64
 & 21.74 & 95.25
 \\ 

&\cellcolor{gray!25} \textbf{GradPCA}
 & \cellcolor{gray!25} {48.56}& \cellcolor{gray!25} {90.27}
 & \cellcolor{gray!25} {34.75}& \cellcolor{gray!25} {92.29}
 & \cellcolor{gray!25} {31.29}& \cellcolor{gray!25} {93.81}
 & \cellcolor{gray!25} {27.09}& \cellcolor{gray!25} {95}
 & \cellcolor{gray!25} {30.02}& \cellcolor{gray!25} {94.19}
 & \cellcolor{gray!25} {49.22}& \cellcolor{gray!25} {91.54}
 & \cellcolor{gray!25} {36.82}& \cellcolor{gray!25} {92.85}
 \\ 

&\cellcolor{gray!25} \textbf{GradPCA (block 3)}
 & \cellcolor{gray!25} {\underline{15.07}}& \cellcolor{gray!25} {\underline{97.01}}
 & \cellcolor{gray!25} {\textbf{24.87}}& \cellcolor{gray!25} {\textbf{95.33}}
 & \cellcolor{gray!25} {\underline{11.06}}& \cellcolor{gray!25} {\underline{97.49}}
 & \cellcolor{gray!25} {\underline{19.63}}& \cellcolor{gray!25} {\underline{96.54}}
 & \cellcolor{gray!25} {\underline{20.92}}& \cellcolor{gray!25} {\underline{96.32}}
 & \cellcolor{gray!25} {\underline{25.78}}& \cellcolor{gray!25} {\underline{95.86}}
 & \cellcolor{gray!25} {\underline{19.55}}& \cellcolor{gray!25} {\underline{96.42}}
 \\ 

&\cellcolor{gray!25} \textbf{GradPCA-Vec}
 & \cellcolor{gray!25} {25.34}& \cellcolor{gray!25} {95.91}
 & \cellcolor{gray!25} {49.1}& \cellcolor{gray!25} {91.17}
 & \cellcolor{gray!25} {26.09}& \cellcolor{gray!25} {95.75}
 & \cellcolor{gray!25} {42.47}& \cellcolor{gray!25} {93.32}
 & \cellcolor{gray!25} {44.35}& \cellcolor{gray!25} {92.68}
 & \cellcolor{gray!25} {42.92}& \cellcolor{gray!25} {92.64}
 & \cellcolor{gray!25} {38.38}& \cellcolor{gray!25} {93.58}
 \\ 

&\cellcolor{gray!25} \textbf{GradPCA+DICE}
 & \cellcolor{gray!25} {62.5}& \cellcolor{gray!25} {84.04}
 & \cellcolor{gray!25} {39.54}& \cellcolor{gray!25} {90.26}
 & \cellcolor{gray!25} {42.09}& \cellcolor{gray!25} {90.03}
 & \cellcolor{gray!25} {33.33}& \cellcolor{gray!25} {92.87}
 & \cellcolor{gray!25} {36.85}& \cellcolor{gray!25} {91.7}
 & \cellcolor{gray!25} {55.38}& \cellcolor{gray!25} {88.21}
 & \cellcolor{gray!25} {44.95}& \cellcolor{gray!25} {89.52}
 \\ 

&\cellcolor{gray!25} \textbf{GradPCA-Batch}
 & \cellcolor{gray!25} {24.9}& \cellcolor{gray!25} {\underline{95.95}}
 & \cellcolor{gray!25} {48.65}& \cellcolor{gray!25} {91.14}
 & \cellcolor{gray!25} {25.22}& \cellcolor{gray!25} {95.87}
 & \cellcolor{gray!25} {42.32}& \cellcolor{gray!25} {93.29}
 & \cellcolor{gray!25} {44.41}& \cellcolor{gray!25} {92.59}
 & \cellcolor{gray!25} {43.29}& \cellcolor{gray!25} {92.46}
 & \cellcolor{gray!25} {38.13}& \cellcolor{gray!25} {93.55}
 \\ 
\bottomrule
\end{tabular}
}
\caption{\textbf{CIFAR-10.} For each architecture and evaluation metric (FPR95, AUROC), the best-performing method is shown in \textbf{bold}, and the second and third best methods are \underline{underlined}.}
\label{table:cifar10}
\end{table*}

Due to the smaller number of classes, this benchmark is significantly more computationally accessible than the others. This makes it feasible to apply GradPCA-Vec and GradPCA-Batch, as well as to perform GradPCA on a much larger subset of parameters beyond just the penultimate layer—resulting in improved performance. While these variants may be impractical for benchmarks with a larger number of classes, we recommend their use on smaller benchmarks to fully leverage the capabilities of GradPCA.

\subsubsection{Near-OOD setting}
To further assess GradPCA's robustness under more challenging conditions, we complement the experiments on standard OOD detection datasets in Table~\ref{table:cifar10} with a near-OOD benchmark in Table~\ref{table:cifar10_near_ood}, using CIFAR-10 as the ID dataset and CIFAR-100 as the OOD dataset. As expected, GradPCA's performance decreases slightly in this more difficult setting, similar to the behavior observed for the baselines. However, no critical degradation is observed, confirming that this near-OOD scenario does not compromise the method.

\begin{table*}[ht]
\centering
\resizebox{\textwidth}{!}{
\begin{tabular}{|l|l|c|c|c|c|c|c|c|c|c|c|c|}
\hline
\textbf{Model} &
\textbf{Metric} &
\textbf{MaxLogits} &
\textbf{MSP} &
\textbf{ODIN} &
\textbf{Energy} &
\textbf{DICE} &
\textbf{Mahalanobis} &
\textbf{KNN} &
\textbf{ReAct} &
\textbf{GAIA-A} &
\textbf{GAIA-Z} &
\textbf{GradPCA} \\
\hline
\multirow{2}{*}{\textbf{BiT-M}} 
& \textbf{FPR95} 
& 44.86 
& {38.84} 
& 44.84 
& 51.95 
& 95.07 
& {29.39} 
& {19.00} 
& 96.84 
& 39.48 
& 90.02 
& {28.67} 
\\
& \textbf{AUROC}
& 81.19 
& {85.71} 
& 81.20 
& 78.77 
& 48.24 
& {93.56} 
& {96.00} 
& 44.21 
& 88.72 
& 53.02 
& {93.68} 
\\
\hline
\multirow{2}{*}{\textbf{TIMM}} 
& \textbf{FPR95} 
& 53.73 
& 62.71 
& {53.77} 
& {52.82} 
& 55.71 
& 68.19 
& {54.04} 
& 57.87 
& 55.12 
& {51.78} 
& 53.86 
\\ 
& \textbf{AUROC}
& {81.43} 
& {83.90} 
& 81.43 
& 81.47 
& 77.78 
& 85.37 
& {89.36} 
& 76.08 
& {86.92} 
& 84.63 
& {86.32} 
\\
\hline
\end{tabular}
}
\caption{\textbf{CIFAR-10 as ID vs. CIFAR-100 as OOD.} Near-OOD detection benchmark. For each architecture and evaluation metric (FPR95, AUROC), the best-performing method is shown in \textbf{bold}, and the second and third best methods are \underline{underlined}.}
\label{table:cifar10_near_ood}
\end{table*}

\subsection{ViT-B/16 ImageNet Benchmark}\label{appendix:vit}
Since many recent OOD benchmarks include Vision Transformer (ViT) models \cite{dosovitskiy2020vit}, we have evaluated GradPCA on ViT-B/16 in addition to our main benchmarks. We used a publicly available checkpoint pretrained on ImageNet-21k and fine-tuned on ImageNet-1k from the Vision Transformer repository.

As shown below, GradPCA remains effective and competitive on this architecture. We note that prior work has observed that OOD detection performance is generally comparable between ResNetV2 (BiT) and ViT backbones \citep{zhang2023openood}. Since GAIA is not directly applicable to ViT models and GradOrth does not provide a computationally feasible implementation (see Appendix~\ref{sec:baselines_implementation}), we did not include these methods in the following table.

\begin{table*}[ht]
\centering
\resizebox{0.92\textwidth}{!}{
\begin{tabular}{|l|l|cc|cc|cc|cc|cc|}
\toprule
\parbox[t]{5mm}{\multirow{13}{*}{\rotatebox[origin=c]{90}{ViT-B/16}}} 
& \multicolumn{1}{c|}{\multirow{3}{*}{\textbf{Methods}}} 
& \multicolumn{2}{|c|}{\textbf{Places}}   
& \multicolumn{2}{|c|}{\textbf{SUN}}  
& \multicolumn{2}{|c|}{\textbf{iNaturalist}} 
& \multicolumn{2}{|c|}{\textbf{Textures}}    
& \multicolumn{2}{|c|}{\textbf{Average}} \\
\cline{3-12}   
&
& \makecell[c]{\gape FPR95 $\downarrow$} & \makecell[c]{\gape AUROC $\uparrow$}
& \makecell[c]{\gape FPR95 $\downarrow$} & \makecell[c]{\gape AUROC $\uparrow$}
& \makecell[c]{\gape FPR95 $\downarrow$} & \makecell[c]{\gape AUROC $\uparrow$}
& \makecell[c]{\gape FPR95 $\downarrow$} & \makecell[c]{\gape AUROC $\uparrow$}
& \makecell[c]{\gape FPR95 $\downarrow$} & \makecell[c]{\gape AUROC $\uparrow$}
\\
\cline{2-12} 

& Max logits 
 & 61.92 & 85.20
 & 55.32 & 88.12
 & 13.27 & 97.54
 & 52.50 & 86.87
 & 45.75 & 89.43
 \\ 

& MSP
 & 71.23 & 79.32
 & 66.99 & 80.92
 & 22.05 & 94.92
 & 62.07 & 80.00
 & 55.59 & 83.79
 \\ 

& ODIN
 & 61.92 & 85.20
 & 55.32 & 88.12
 & 13.27 & 97.54
 & 52.49 & 86.87
 & 45.75 & 89.43
 \\ 

& Energy
 & 57.60 & 85.99
 & 49.73 & 89.18
 & \underline{11.29} & \underline{97.90}
 & \underline{47.87} & \textbf{87.80}
 & 41.62 & \underline{90.22}
 \\ 

& DICE
 & \textbf{48.66} & \underline{87.82}
 & \textbf{41.81} & \underline{90.69}
 & 15.20 & 96.73
 & 60.20 & 86.96
 & \underline{41.47} & \underline{90.55}
 \\ 

& Mahalanobis
 & 67.60 & 81.76
 & 61.12 & 85.68
 & \underline{4.75} & \underline{99.09}
 & 54.30 & 86.62
 & 46.94 & 88.29
 \\ 

& KNN
 & 69.02 & \underline{88.23}
 & 62.08 & 86.06
 & 12.48 & 97.63
 & \underline{50.30} & \underline{87.57}
 & 48.47 & 88.37
 \\ 

& ReAct
 & \underline{56.72} & 85.84
 & \underline{47.86} & \underline{89.28}
 & 14.90 & 97.29
 & \textbf{45.69} & \underline{87.65}
 & \underline{41.29} & 90.22
 \\ 

& \cellcolor{gray!25} \textbf{GradPCA}
 & \cellcolor{gray!25} \underline{50.86} & \cellcolor{gray!25} \textbf{89.02}
 & \cellcolor{gray!25} \underline{42.80} & \cellcolor{gray!25} \textbf{91.27}
 & \cellcolor{gray!25} \textbf{3.74} & \cellcolor{gray!25} \textbf{99.16}
 & \cellcolor{gray!25} 59.94 & \cellcolor{gray!25} 86.21
 & \cellcolor{gray!25} \textbf{39.84} & \cellcolor{gray!25} \textbf{91.42}
 \\ 

\bottomrule
\end{tabular}
}
\caption{\textbf{ViT-B/16 on ImageNet-1k.} For each architecture and evaluation metric (FPR95, AUROC), the best-performing method is shown in \textbf{bold}, and the second and third best methods are \underline{underlined}.}
\label{table:vit_imagenet}
\end{table*}

\subsection{Consistency with respect to random seed}\label{sec:random_seed}
In addition to the inconsistencies in OOD detection performance arising from architecture and training setup, discussed in the main text, recent work has shown that OOD performance can also vary substantially with changes to the random seed governing training stochasticity~\citep{fang2024revisiting}. To assess the impact of this factor on GradPCA, we trained a VGG16 model on CIFAR-10 using ten different random seeds, yielding models with nearly identical accuracy but converging to different local minima. We then evaluated GradPCA on each of these models. The results, presented in Table~\ref{table:seed_ablation}, show that performance varies only mildly across seeds (within roughly a $3\%$ AUC range). No drastic fluctuations are observed, indicating that GradPCA exhibits stable and consistent behavior under changes in training randomness.

\begin{table*}[ht]
\centering
\resizebox{0.92\textwidth}{!}{
\begin{tabular}{|l|l|cc|cc|cc|cc|cc|cc|cc|}
\toprule
\multicolumn{2}{|c|}{} 
& \multicolumn{2}{c|}{\textbf{SVHN}}   
& \multicolumn{2}{c|}{\textbf{Places}}  
& \multicolumn{2}{c|}{\textbf{LSUN-c}} 
& \multicolumn{2}{c|}{\textbf{LSUN-r}} 
& \multicolumn{2}{c|}{\textbf{iSUN}}  
& \multicolumn{2}{c|}{\textbf{Textures}}    
& \multicolumn{2}{c|}{\textbf{Average}}  
\\

\cline{3-16}
\multicolumn{2}{|c|}{} 
& FPR95 & AUROC
& FPR95 & AUROC
& FPR95 & AUROC
& FPR95 & AUROC
& FPR95 & AUROC
& FPR95 & AUROC
& FPR95 & AUROC
\\
\toprule

\parbox[t]{5mm}{\multirow{10}{*}{\rotatebox[origin=c]{90}{Seeds}}} 
& \textbf{1}
& 57.535 & 89.273  
& 56.612 & 87.827  
& 48.508 & 92.049  
& 49.038 & 91.285  
& 52.287 & 90.543  
& 65.518 & 87.962  
& 54.920 & 89.820  
\\

& \textbf{10}
& 63.678 & 87.551  
& 66.861 & 83.348  
& 58.023 & 89.334  
& 62.550 & 86.629  
& 61.311 & 86.626  
& 67.827 & 83.686  
& 63.380 & 86.200  
\\

& \textbf{42}
& 46.405 & 92.246  
& 62.643 & 85.251  
& 43.950 & 92.244  
& 55.298 & 88.811  
& 56.352 & 88.824  
& 65.501 & 87.054  
& 55.020 & 89.070  
\\

& \textbf{100}
& 55.588 & 90.613  
& 72.045 & 81.752  
& 54.577 & 90.345  
& 64.052 & 86.194  
& 66.599 & 85.473  
& 72.887 & 84.887  
& 64.291 & 86.544  
\\

& \textbf{1000}
& 64.948 & 86.362  
& 62.859 & 86.349  
& 52.965 & 90.678  
& 56.931 & 89.499  
& 57.529 & 89.385  
& 62.695 & 89.283  
& 59.654 & 88.593  
\\

& \textbf{5000}
& 55.419 & 90.317  
& 61.823 & 85.605  
& 53.566 & 90.463  
& 55.549 & 88.899  
& 56.046 & 88.800  
& 60.760 & 88.154  
& 57.194 & 88.706  
\\

& \textbf{10000}
& 50.258 & 90.085  
& 65.965 & 85.495  
& 45.563 & 91.855  
& 64.453 & 87.652  
& 65.976 & 87.096  
& 74.343 & 84.368  
& 61.093 & 87.758  
\\

& \textbf{20000}
& 44.019 & 92.491  
& 67.588 & 84.086  
& 45.673 & 91.818  
& 65.565 & 86.210  
& 67.267 & 85.179  
& 76.314 & 81.825  
& 61.071 & 86.935  
\\

& \textbf{42000}
& 46.429 & 92.208  
& 61.530 & 85.705  
& 48.898 & 91.951  
& 50.691 & 90.249  
& 52.276 & 90.053  
& 63.743 & 87.815  
& 53.928 & 89.663  
\\

& \textbf{100000}
& 37.107 & 93.337  
& 64.232 & 85.927  
& 43.109 & 92.434  
& 59.886 & 88.059  
& 62.466 & 87.044  
& 72.177 & 82.442  
& 56.496 & 88.207  
\\

\bottomrule
\end{tabular}
}
\caption{\textbf{CIFAR-10.} Ablation over random seeds.}
\label{table:seed_ablation}
\end{table*}

\subsection{Ablation: parameters subset selection}\label{sec:param_set_ablation}

A key property of GradPCA, which contributes to its generality, is that it can be applied to any subset of a network’s parameters. Prior work on NTK alignment has shown that alignment strength varies considerably across layers~\citep{lou2022feature, baratin_2021_ntk_feature_alignment_implicit_regularization}: early layers typically exhibit weaker alignment, whereas later layers show stronger alignment, with the maximum often occurring in intermediate layers. This suggests that applying GradPCA to intermediate layers may be particularly effective.

We therefore conduct ablations in which GradPCA is applied to different parameter subsets of the BiT-M and TIMM models on CIFAR10 and CIFAR100. Table~\ref{table:cifar10_blocks} reports the results for CIFAR10, where GradPCA is applied separately to the parameters of each residual block (“blocks 1–4”). As shown, intermediate blocks indeed yield strong performance, often exceeding that of the default version. For BiT-M, the best results occur at block 4, while for TIMM they occur at block 3. The performance degrades substantially for the earlier layers (blocks 1–2), indicating that their gradients are less informative for OOD detection, likely due to weak alignment.

Results for CIFAR100 are presented in Table~\ref{table:cifar100_blocks_units}. To manage computational cost, each block is further divided into "units", each containing two convolutional layers. Similar to CIFAR10, the best performance is achieved using intermediate layers (block 3, unit 3 for BiT-M and block 4, unit 2 for TIMM).

Although the most informative parameters consistently reside in later blocks, the exact choice of the optimal subset does not transfer cleanly across models or datasets, complicating layer selection for new architectures. Our practical recommendation is therefore to rely on empirical evaluation when selecting parameter subsets. We also note that understanding how NTK alignment and gradient representations evolve across layers remains an open problem in deep learning theory, and progress in this direction may further inform optimal layer selection for GradPCA.

\begin{table*}[ht]
\centering
\resizebox{0.92\textwidth}{!}{
\begin{tabular}{|l|l|cc|cc|cc|cc|cc|cc|cc|}
\toprule
\multicolumn{2}{|c|}{} 
& \multicolumn{2}{c|}{\textbf{SVHN}}   
& \multicolumn{2}{c|}{\textbf{Places}}  
& \multicolumn{2}{c|}{\textbf{LSUN-c}} 
& \multicolumn{2}{c|}{\textbf{LSUN-r}} 
& \multicolumn{2}{c|}{\textbf{iSUN}}  
& \multicolumn{2}{c|}{\textbf{Textures}}    
& \multicolumn{2}{c|}{\textbf{Average}}  
\\

\cline{3-16}
\multicolumn{2}{|c|}{} 
& FPR95 & AUROC
& FPR95 & AUROC
& FPR95 & AUROC
& FPR95 & AUROC
& FPR95 & AUROC
& FPR95 & AUROC
& FPR95 & AUROC
\\
\toprule


\parbox[t]{5mm}{\multirow{3}{*}{\rotatebox[origin=c]{90}{BiT-M}}} 
& \textbf{GradPCA (block 4)}
 & 5.11 & 99.01
 & 10.96 & 97.25
 & 6.01 & 98.90
 & 20.02 & 96.49
 & 19.87 & 96.49
 & 0.62 & 99.87
 & 10.43 & 98.00
\\

& \textbf{GradPCA (block 3)}
& 8.09 & 98.49
& 21.54 & 95.73
& 8.57 & 98.45
& 60.11 & 88.81
& 55.57 & 89.09
& 0.36 & 99.93
& 25.71 & 95.08
\\

& \textbf{GradPCA (block 2)}
& 58.98 & 87.00
& 5.30 & 98.83
& 47.76 & 91.16
& 88.12 & 72.10
& 85.08 & 73.59
& 0.75 & 99.69
& 47.66 & 87.06
\\

\bottomrule
\toprule


\parbox[t]{5mm}{\multirow{4}{*}{\rotatebox[origin=c]{90}{TIMM}}} 

& \textbf{GradPCA (block 4)}
& 44.46 & 92.41
& 54.99 & 90.58
& 56.20 & 91.29
& 58.58 & 92.02
& 57.31 & 91.75
& 64.40 & 90.41
& 55.66 & 91.41
\\

& \textbf{GradPCA (block 3)}
& 15.07 & 97.01
& 24.87 & 95.33
& 11.06 & 97.49
& 19.63 & 96.54
& 20.92 & 96.32
& 25.78 & 95.86
& 19.55 & 96.42
\\

& \textbf{GradPCA (block 2)}
& 99.41 & 37.63
& 96.80 & 54.71
& 97.16 & 42.67
& 97.08 & 59.20
& 95.86 & 62.37
& 86.15 & 72.87
& 95.41 & 54.41
\\

& \textbf{GradPCA (block 1)}
& 99.62 & 21.78
& 95.10 & 51.93
& 96.84 & 41.35
& 96.68 & 47.89
& 95.86 & 51.52
& 93.34 & 50.61
& 96.74 & 44.18
\\

\bottomrule
\end{tabular}
}
\caption{\textbf{CIFAR10.} Ablation of the parameter subset choice using residual blocks of ResNet50-V2 (BiT-M) and ResNet34 (TIMM).}
\label{table:cifar10_blocks}
\end{table*}


\begin{table*}[ht]
\centering
\resizebox{0.92\textwidth}{!}{
\begin{tabular}{|l|l|cc|cc|cc|cc|cc|cc|cc|}
\toprule
\multicolumn{2}{|c|}{} 
& \multicolumn{2}{c|}{\textbf{SVHN}}   
& \multicolumn{2}{c|}{\textbf{Places}}  
& \multicolumn{2}{c|}{\textbf{LSUN-c}} 
& \multicolumn{2}{c|}{\textbf{LSUN-r}} 
& \multicolumn{2}{c|}{\textbf{iSUN}}  
& \multicolumn{2}{c|}{\textbf{Textures}}    
& \multicolumn{2}{c|}{\textbf{Average}}  
\\

\cline{3-16}
\multicolumn{2}{|c|}{} 
& FPR95 & AUROC
& FPR95 & AUROC
& FPR95 & AUROC
& FPR95 & AUROC
& FPR95 & AUROC
& FPR95 & AUROC
& FPR95 & AUROC
\\
\toprule


\parbox[t]{5mm}{\multirow{9}{*}{\rotatebox[origin=c]{90}{BiT-M}}} 
& \textbf{GradPCA (block 4, unit 3)}
& 31.76 & 94.70
& 50.41 & 86.83
& 46.52 & 92.83
& 65.32 & 86.29
& 70.89 & 82.27
&  9.68 & 97.61
& 45.76 & 90.09
\\

& \textbf{GradPCA (block 4, unit 2)}
& 31.54 & 93.73
& 54.50 & 85.26
& 41.19 & 93.30
& 77.15 & 82.08
& 79.13 & 78.91
&  6.75 & 98.43
& 48.38 & 88.62
\\

& \textbf{GradPCA (block 4, unit 1)}
& 24.62 & 95.38
& 52.63 & 88.10
& 42.37 & 93.09
& 86.22 & 79.17
& 87.14 & 75.27
&  4.67 & 98.89
& 49.61 & 88.32
\\

& \textbf{GradPCA (block 3, unit 6)}
& 11.50 & 97.74
& 40.60 & 92.16
& 30.34 & 95.11
& 79.40 & 79.62
& 77.80 & 78.01
&  0.76 & 99.67
& 40.07 & 90.39
\\

& \textbf{GradPCA (block 3, unit 5)}
& 20.27 & 96.48
& 29.75 & 93.50
& 25.79 & 95.23
& 77.21 & 82.35
& 77.15 & 79.69
&  1.26 & 99.57
& 38.57 & 91.14
\\

& \textbf{GradPCA (block 3, unit 4)}
& 53.89 & 90.17
& 55.77 & 87.43
& 67.59 & 83.73
& 90.84 & 70.22
& 88.81 & 69.03
& 12.93 & 97.58
& 61.64 & 83.03
\\

& \textbf{GradPCA (block 3, unit 3)}
& 12.07 & 97.80
&  4.11 & 99.07
& 13.43 & 97.44
& 27.68 & 95.30
& 34.57 & 93.79
&  0.05 & 99.89
& 15.32 & 97.21
\\

& \textbf{GradPCA (block 3, unit 2)}
& 30.75 & 94.94
& 13.02 & 97.40
& 50.28 & 89.25
& 58.52 & 89.32
& 64.20 & 84.96
&  2.75 & 99.46
& 36.59 & 92.56
\\

& \textbf{GradPCA (block 3, unit 1)}
&  5.95 & 98.61
&  0.07 & 99.98
& 35.20 & 90.98
& 23.57 & 96.06
& 28.50 & 94.63
&  0.02 & 100.00
& 15.55 & 96.71
\\

\bottomrule
\toprule


\parbox[t]{5mm}{\multirow{9}{*}{\rotatebox[origin=c]{90}{TIMM}}} 
& \textbf{GradPCA (block 4, unit 3)}
& 69.79 & 85.79
& 61.39 & 87.26
& 67.92 & 86.39
& 63.18 & 86.25
& 63.20 & 85.61
& 52.84 & 87.52
& 63.05 & 86.47
\\

& \textbf{GradPCA (block 4, unit 2)}
& 68.63 & 85.32
& 62.75 & 87.96
& 57.61 & 87.47
& 65.01 & 87.50
& 63.46 & 87.34
& 53.57 & 89.23
& 61.84 & 87.47
\\

& \textbf{GradPCA (block 4, unit 1)}
& 66.04 & 79.45
& 74.74 & 80.59
& 51.14 & 83.46
& 79.64 & 78.18
& 76.55 & 79.96
& 51.19 & 89.29
& 66.55 & 81.82
\\

& \textbf{GradPCA (block 3, unit 6)}
& 86.05 & 64.90
& 89.76 & 62.09
& 78.25 & 69.12
& 92.19 & 62.30
& 87.76 & 68.23
& 47.75 & 87.67
& 80.29 & 69.05
\\

& \textbf{GradPCA (block 3, unit 5)}
& 70.09 & 80.00
& 88.55 & 70.80
& 78.53 & 73.40
& 84.97 & 76.48
& 77.58 & 79.73
& 35.92 & 91.27
& 72.61 & 78.61
\\

& \textbf{GradPCA (block 3, unit 4)}
& 84.47 & 68.94
& 92.25 & 59.93
& 87.79 & 62.75
& 93.44 & 57.56
& 91.27 & 62.18
& 53.69 & 83.25
& 83.82 & 65.77
\\

& \textbf{GradPCA (block 3, unit 3)}
& 69.40 & 78.20
& 89.68 & 67.77
& 79.42 & 70.44
& 89.13 & 68.98
& 85.04 & 72.11
& 44.51 & 87.49
& 76.20 & 74.17
\\

& \textbf{GradPCA (block 3, unit 2)}
& 53.84 & 86.16
& 89.97 & 66.72
& 70.04 & 74.26
& 88.23 & 73.64
& 84.69 & 74.91
& 39.79 & 89.70
& 71.09 & 77.57
\\

& \textbf{GradPCA (block 3, unit 1)}
& 77.31 & 78.39
& 96.52 & 56.46
& 77.73 & 68.45
& 97.21 & 53.91
& 96.18 & 56.98
& 66.23 & 78.51
& 85.03 & 65.45
\\

\bottomrule
\end{tabular}
}
\caption{\textbf{CIFAR-100.} Ablation of the parameter subset choice for GradPCA using individual units in blocks 3 and 4, for BiT-M ResNet50-V2 and ResNet34 (TIMM). All numbers rounded to two decimals.}
\label{table:cifar100_blocks_units}
\end{table*}

\subsection{Ablation: training data fraction}
Since GradPCA relies on approximating gradient class means using ID data, it is natural to ask how sensitive the method is to the amount of available training data, and whether computation can be reduced by using only a subset. To investigate this, we performed an ablation in which GradPCA was trained on varying fractions of the ImageNet training set using the BiT-S model, and we evaluated the resulting OOD performance. The results are shown in Table~\ref{table:imagenet_trainsubset}. Notably, training with as little as $10\%$ of the ID data yields similar performance to that obtained with the full dataset, demonstrating that GradPCA is robust to this choice and that gradient class means can be reliably estimated even from relatively small data subsets.

\begin{table*}[ht]
\centering
\resizebox{0.92\textwidth}{!}{
\begin{tabular}{|l|l|cc|cc|cc|cc|cc|}
\toprule
\multicolumn{2}{|c|}{} 
& \multicolumn{2}{c|}{\textbf{Places}}   
& \multicolumn{2}{c|}{\textbf{SUN}}  
& \multicolumn{2}{c|}{\textbf{iNaturalist}} 
& \multicolumn{2}{c|}{\textbf{Textures}}    
& \multicolumn{2}{c|}{\textbf{Average}}  
\\

\cline{3-12}
\multicolumn{2}{|c|}{} 
& FPR95 & AUROC
& FPR95 & AUROC
& FPR95 & AUROC
& FPR95 & AUROC
& FPR95 & AUROC
\\
\toprule

\parbox[t]{5mm}{\multirow{9}{*}{\rotatebox[origin=c]{90}{BiT-S}}} 
& \textbf{10\%}
& 60.36  & 81.649
& 48.42  & 87.208
& 16.32  & 96.029
& 17.365 & 96.316
& 35.62  & 90.30
\\

& \textbf{20\%}
& 61.93  & 81.327
& 50.44  & 86.853
& 17.80  & 95.731
& 17.827 & 96.254
& 37.00  & 90.04
\\

& \textbf{30\%}
& 61.33  & 81.757
& 49.21  & 87.233
& 17.38  & 95.835
& 17.809 & 96.186
& 36.43  & 90.25
\\

& \textbf{40\%}
& 61.05  & 81.781
& 48.78  & 87.345
& 17.63  & 95.758
& 17.738 & 96.211
& 36.80  & 90.27
\\

& \textbf{50\%}
& 61.48  & 81.650
& 49.73  & 87.139
& 18.54  & 95.693
& 17.987 & 96.148
& 36.93  & 90.16
\\

& \textbf{60\%}
& 60.90  & 81.738
& 49.03  & 87.142
& 17.84  & 95.742
& 17.489 & 96.261
& 36.32  & 90.22
\\

& \textbf{70\%}
& 61.58  & 81.685
& 49.53  & 87.160
& 17.62  & 95.849
& 17.720 & 96.212
& 36.61  & 90.23
\\

& \textbf{80\%}
& 61.63  & 81.761
& 49.51  & 87.208
& 18.11  & 95.758
& 18.111 & 96.159
& 36.84  & 90.22
\\

& \textbf{90\%}
& 61.61  & 81.565
& 49.85  & 87.083
& 17.95  & 95.766
& 18.058 & 96.149
& 36.87  & 90.14
\\

\bottomrule
\end{tabular}
}
\caption{\textbf{ImageNet (BiT-S).} Ablation of the training set fraction used to fit GradPCA.}
\label{table:imagenet_trainsubset}
\end{table*}
\subsection{Ablation: retained variance threshold}\label{sec:eps_ablation}
In the default version of {GradPCA}, used throughout all benchmark tables, we set the spectral threshold parameter to $\epsilon = 0.99$. For datasets with a small number of classes—such as CIFAR-10—performance is largely insensitive to this parameter, as any $\epsilon > 0.90$ typically results in selecting all $C-1$ principal components. However, for datasets with a large number of classes, the choice of $\epsilon$ can have a more significant effect. To assess its impact, we conduct ablation studies on ImageNet (Table~\ref{table:imagenet_eps}) and CIFAR-100 (Table~\ref{table:cifar100_eps}).


\begin{table*}[ht]
\centering
\resizebox{0.92\textwidth}{!}{
\begin{tabular}{|l|l|cc|cc|cc|cc|cc|}
\toprule
\parbox[t]{5mm}{\multirow{9}{*}{\rotatebox[origin=c]{90}{BiT-M}}} 
& \multicolumn{1}{c|}{\multirow{3}{*}{\textbf{Methods}}} 
& \multicolumn{2}{|c|}{\textbf{Places}}   
& \multicolumn{2}{|c|}{\textbf{SUN}}  
& \multicolumn{2}{|c|}{\textbf{iNaturalist}} 
& \multicolumn{2}{|c|}{\textbf{Textures}}    
& \multicolumn{2}{|c|}{\textbf{Average}} \\
\cline{3-12}   
&
& FPR95 & AUROC
& FPR95 & AUROC
& FPR95 & AUROC
& FPR95 & AUROC
& FPR95 & AUROC
\\
\cline{2-12} 

& \textbf{GradPCA ($\epsilon=0.90$)}
& 85.96 & 66.47
& 78.12 & 74.95
& 40.88 & 92.04
& 49.33 & 88.96
& 63.57 & 80.60
\\ 

& \textbf{GradPCA ($\epsilon=0.95$)}
& 83.16 & 69.62
& 74.67 & 77.65
& 38.14 & 92.79
& 51.69 & 88.67
& 61.91 & 82.18
\\ 

& \textbf{GradPCA ($\epsilon=0.97$)}
& 82.25 & 70.63
& 73.70 & 78.39
& 38.56 & 92.77
& 53.66 & 88.28
& 62.04 & 82.52
\\ 

& \textbf{GradPCA ($\epsilon=0.99$)}
& 83.12 & 71.55
& 75.16 & 78.68
& 45.84 & 91.69
& 60.74 & 86.49
& 66.22 & 82.10
\\ 
\bottomrule
\toprule

\parbox[t]{5mm}{\multirow{4}{*}{\rotatebox[origin=c]{90}{BiT-S}}}

& \textbf{GradPCA ($\epsilon=0.90$)}
& 72.39 & 77.45
& 61.88 & 83.74
& 32.52 & 92.73
& 19.28 & 95.92
& 46.52 & 87.46
\\ 

& \textbf{GradPCA ($\epsilon=0.95$)}
& 67.55 & 79.38
& 57.11 & 85.15
& 22.82 & 94.54
& 18.02 & 96.22
& 41.38 & 88.82
\\ 

& \textbf{GradPCA ($\epsilon=0.97$)}
& 66.08 & 80.12
& 55.28 & 85.66
& 20.98 & 95.09
& 18.38 & 96.25
& 40.18 & 89.28
\\ 

& \textbf{GradPCA ($\epsilon=0.99$)}
& 61.27 & 81.75
& 49.35 & 87.20
& 17.78 & 95.80
& 18.08 & 96.14
& 36.62 & 90.22
\\

\bottomrule
\end{tabular}
}
\caption{\textbf{ImageNet-1k}: Ablation of the $\epsilon$ parameter.}
\label{table:imagenet_eps}
\end{table*}


\begin{table*}[ht]
\centering
\resizebox{0.92\textwidth}{!}{
\begin{tabular}{|l|l|cc|cc|cc|cc|cc|cc|cc|}
\toprule
\parbox[t]{5mm}{\multirow{7}{*}{\rotatebox[origin=c]{90}{BiT-M}}} 
& \multicolumn{1}{c|}{\multirow{3}{*}{\textbf{Methods}}} 
& \multicolumn{2}{c|}{\textbf{SVHN}}   
& \multicolumn{2}{c|}{\textbf{Places}}  
& \multicolumn{2}{c|}{\textbf{LSUN-c}} 
& \multicolumn{2}{c|}{\textbf{LSUN-r}} 
& \multicolumn{2}{c|}{\textbf{iSUN}}  
& \multicolumn{2}{c|}{\textbf{Textures}}    
& \multicolumn{2}{c|}{\textbf{Average}}  \\ 

\cline{3-16}   
&
& FPR95 & AUROC
& FPR95 & AUROC
& FPR95 & AUROC
& FPR95 & AUROC
& FPR95 & AUROC
& FPR95 & AUROC
& FPR95 & AUROC \\
\cline{2-16} 

& \textbf{GradPCA ($\epsilon=0.90$)}
& 21.12 & 95.38
& 31.20 & 93.04
& 12.04 & 97.76
& 58.40 & 86.73
& 61.57 & 85.97
& 4.01 & 99.09
& 31.39 & 93.00
\\ 

& \textbf{GradPCA ($\epsilon=0.95$)}
& 19.20 & 95.98
& 28.88 & 93.71
& 8.71 & 98.37
& 53.42 & 88.14
& 59.02 & 86.60
& 3.43 & 99.25
& 28.78 & 93.68
\\ 

& \textbf{GradPCA ($\epsilon=0.97$)}
& 17.84 & 96.38
& 28.90 & 93.72
& 8.95 & 98.34
& 52.69 & 88.47
& 57.91 & 86.98
& 3.28 & 99.26
& 28.26 & 93.86
\\ 

& \textbf{GradPCA ($\epsilon=0.99$)}
& 17.20 & 96.58
& 29.64 & 93.56
& 8.28 & 98.42
& 51.75 & 88.93
& 56.93 & 87.34
& 3.41 & 99.24
& 27.87 & 94.01
\\ 

\bottomrule
\toprule

\parbox[t]{5mm}{\multirow{4}{*}{\rotatebox[origin=c]{90}{TIMM}}} 

& \textbf{GradPCA ($\epsilon=0.90$)}
& 61.15 & 90.13
& 55.73 & 88.11
& 66.17 & 85.93
& 64.94 & 84.95
& 66.85 & 83.96
& 66.09 & 86.08
& 63.49 & 86.53
\\ 

& \textbf{GradPCA ($\epsilon=0.95$)}
& 54.55 & 90.65
& 60.14 & 87.83
& 63.32 & 87.31
& 70.95 & 85.84
& 71.18 & 85.13
& 69.28 & 86.38
& 64.90 & 87.19
\\ 

& \textbf{GradPCA ($\epsilon=0.97$)}
& 56.25 & 90.47
& 61.97 & 87.36
& 61.43 & 87.93
& 73.67 & 84.18
& 72.86 & 83.76
& 70.76 & 85.03
& 66.16 & 86.46
\\ 

& \textbf{GradPCA ($\epsilon=0.99$)}
& 61.22 & 89.10
& 62.71 & 87.31
& 63.45 & 87.11
& 73.97 & 84.25
& 73.01 & 83.85
& 72.30 & 84.63
& 67.78 & 86.04
\\ 

\bottomrule
\end{tabular}
}
\caption{\textbf{CIFAR-100}: Ablation of the $\epsilon$ parameter.}
\label{table:cifar100_eps}
\end{table*}

We draw two main observations from this ablation:  
\begin{enumerate}
    \item The optimal value of $\epsilon$ varies across datasets and model architectures.
    \item GradPCA’s performance is largely stable under reasonable changes to $\epsilon$.
\end{enumerate}
While tuning $\epsilon$ may lead to small gains or losses (typically within 1--2\% in detection accuracy), there is no clear strategy for selecting an optimal value in advance, as its impact appears to depend on the specific task. Based on these findings, we recommend choosing the largest $\epsilon$ that remains computationally practical, as this parameter directly controls the number of principal components retained---and therefore the memory cost of GradPCA (see the next section).

\subsection{Reducing Memory Cost}
\label{sec:eps_memory}

GradPCA is a spectral method, and its memory footprint is dominated by the number of retained eigenvectors. A principled way to reduce memory usage is to lower the retained variance threshold $\epsilon$, which controls how many top eigenvectors are kept.

Table~\ref{table:bit_ablation} reports the number of retained eigenvectors ($N$) and the corresponding AUROC on the ImageNet BiT-M and BiT-S benchmarks, evaluated at different $\epsilon$ values. These results were extracted from the ablation in Appendix~\ref{sec:eps_ablation}.

\begin{table}[h]
\centering
\caption{AUROC scores and number of retained eigenvectors ($N$) for GradPCA on ImageNet BiT-M and BiT-S, across varying retained variance thresholds $\epsilon$.}
\label{table:bit_ablation}
\begin{tabular}{|c|c|c||c|c|c|}
\hline
\multicolumn{3}{|c||}{\textbf{ImageNet BiT-M}} & \multicolumn{3}{c|}{\textbf{ImageNet BiT-S}} \\
\hline
$\epsilon$ & $N$ & AUROC & $\epsilon$ & $N$ & AUROC \\
\hline
0.99 & 568 & 82.10 & 0.99 & 432 & 90.22 \\
0.97 & 400 & 82.52 & 0.97 & 274 & 89.28 \\
0.95 & 322 & 82.18 & 0.95 & 208 & 88.82 \\
0.90 & 221 & 80.60 & 0.90 & 132 & 87.46 \\
\hline
\end{tabular}
\end{table}

These results show that controlling $\epsilon$ allows up to a 5$\times$ reduction in memory with only a modest drop in AUROC on ImageNet. 

\subsection{Ablation: GradPCA+DICE Sparsity Parameter}

In this section, we present ablation results for the sparsity parameter $p$ used in the GradPCA+DICE variant of our method. GradPCA+DICE applies a mask to sparsify the gradient vectors, retaining only the top $(1-p) \times 100\%$ of entries (by magnitude) within the selected parameter subset. Table~\ref{table:imagenet_dice} shows results on ImageNet-1k, and Table~\ref{table:cifar100_eps} reports results on CIFAR-100.


\begin{table*}[ht]
\centering
\resizebox{0.92\textwidth}{!}{
\begin{tabular}{|l|l|cc|cc|cc|cc|cc|}
\toprule
\parbox[t]{5mm}{\multirow{12}{*}{\rotatebox[origin=c]{90}{ResNetV2-101 (BiT-M)}}} 
& \multicolumn{1}{c|}{\multirow{3}{*}{\textbf{Methods}}} 
& \multicolumn{2}{c|}{\textbf{Places}}   
& \multicolumn{2}{c|}{\textbf{SUN}}  
& \multicolumn{2}{c|}{\textbf{iNaturalist}} 
& \multicolumn{2}{c|}{\textbf{Textures}}    
& \multicolumn{2}{c|}{\textbf{Average}} \\
\cline{3-12}   
&
& FPR95 & AUROC
& FPR95 & AUROC
& FPR95 & AUROC
& FPR95 & AUROC
& FPR95 & AUROC \\
\cline{2-12} 

& \textbf{GradPCA}
& 83.12 & 71.55
& 75.16 & 78.68
& 45.84 & 91.69
& 60.74 & 86.49
& 66.22 & 82.10
\\ 

& \textbf{GradPCA ($p=0.1$)}
& 82.71 & 71.70
& 74.58 & 78.90
& 44.46 & 91.88
& 58.82 & 86.92
& 65.14 & 82.35
\\ 

& \textbf{GradPCA ($p=0.2$)}
& 82.17 & 71.84
& 73.63 & 79.20
& 42.30 & 92.22
& 56.04 & 87.68
& 63.54 & 82.74
\\ 

& \textbf{GradPCA ($p=0.3$)}
& 81.22 & 72.19
& 72.03 & 79.69
& 39.42 & 92.62
& 52.56 & 88.43
& 61.31 & 83.23
\\ 

& \textbf{GradPCA ($p=0.4$)}
& 80.49 & 72.72
& 70.66 & 80.31
& 37.81 & 92.87
& 49.79 & 89.01
& 59.69 & 83.73
\\ 

& \textbf{GradPCA ($p=0.5$)}
& 79.32 & 73.20
& 69.08 & 80.84
& 36.54 & 93.03
& 47.59 & 89.46
& 58.13 & 84.13
\\ 

& \textbf{GradPCA ($p=0.6$)}
& 78.75 & 73.83
& 68.16 & 81.38
& 36.05 & 93.25
& 45.69 & 89.88
& 57.16 & 84.58
\\ 

& \textbf{GradPCA ($p=0.7$)}
& 77.55 & 74.67
& 65.90 & 82.22
& 36.49 & 93.21
& 44.87 & 90.10
& 56.20 & 85.05
\\ 

& \textbf{GradPCA ($p=0.8$)}
& 76.97 & 75.51
& 64.82 & 82.95
& 37.68 & 93.10
& 44.07 & 90.40
& 55.88 & 85.49
\\ 

& \textbf{GradPCA ($p=0.9$)}
& 76.38 & 76.41
& 63.81 & 83.57
& 39.55 & 92.87
& 43.22 & 90.73
& 55.74 & 85.90
\\ 

\bottomrule
\toprule

\parbox[t]{5mm}{\multirow{10}{*}{\rotatebox[origin=c]{90}{ResNetV2-50 (BiT-S)}}}

& \textbf{GradPCA}
& 61.27 & 81.75
& 49.35 & 87.20
& 17.78 & 95.80
& 18.08 & 96.14
& 36.62 & 90.22
\\ 

& \textbf{GradPCA ($p=0.1$)}
& 61.51 & 81.75
& 49.57 & 87.21
& 17.92 & 95.77
& 18.22 & 96.14
& 36.80 & 90.22
\\ 

& \textbf{GradPCA ($p=0.2$)}
& 61.88 & 81.54
& 49.89 & 87.04
& 18.02 & 95.74
& 18.00 & 96.15
& 36.95 & 90.12
\\ 

& \textbf{GradPCA ($p=0.3$)}
& 60.16 & 82.23
& 48.20 & 87.46
& 16.97 & 95.99
& 18.82 & 95.98
& 36.04 & 90.42
\\ 

& \textbf{GradPCA ($p=0.4$)}
& 61.09 & 81.89
& 49.49 & 87.21
& 17.27 & 95.92
& 18.95 & 95.97
& 36.70 & 90.25
\\ 

& \textbf{GradPCA ($p=0.5$)}
& 62.24 & 81.54
& 50.43 & 86.92
& 17.80 & 95.81
& 18.98 & 95.99
& 37.36 & 90.06
\\ 

& \textbf{GradPCA ($p=0.6$)}
& 62.51 & 81.41
& 50.76 & 86.84
& 17.97 & 95.80
& 19.05 & 96.01
& 37.57 & 90.02
\\ 

& \textbf{GradPCA ($p=0.7$)}
& 63.20 & 81.07
& 51.41 & 86.59
& 18.40 & 95.64
& 19.00 & 96.04
& 38.00 & 89.84
\\ 

& \textbf{GradPCA ($p=0.8$)}
& 65.07 & 80.10
& 53.16 & 85.85
& 19.25 & 95.48
& 18.16 & 96.20
& 38.91 & 89.41
\\ 

& \textbf{GradPCA ($p=0.9$)}
& 67.17 & 79.07
& 55.47 & 85.09
& 21.61 & 94.92
& 17.19 & 96.42
& 40.36 & 88.88
\\ 

\bottomrule
\end{tabular}
}
\caption{\textbf{ImageNet-1k}: Ablation of the parameter $p$ in GradPCA+DICE.}
\label{table:imagenet_dice}
\end{table*}


\begin{table*}[ht]
\centering
\resizebox{0.92\textwidth}{!}{
\begin{tabular}{|l|l|cc|cc|cc|cc|cc|cc|cc|}
\toprule
\parbox[t]{5mm}{\multirow{13}{*}{\rotatebox[origin=c]{90}{ResNetV2-50 (BiT-M)}}} 
& \multicolumn{1}{c|}{\multirow{3}{*}{\textbf{Methods}}} 
& \multicolumn{2}{c|}{\textbf{SVHN}}   
& \multicolumn{2}{c|}{\textbf{Places}}  
& \multicolumn{2}{c|}{\textbf{LSUN-c}} 
& \multicolumn{2}{c|}{\textbf{LSUN-r}} 
& \multicolumn{2}{c|}{\textbf{iSUN}}  
& \multicolumn{2}{c|}{\textbf{Textures}}    
& \multicolumn{2}{c|}{\textbf{Average}}  \\ 

\cline{3-16}   
&
& FPR95 & AUROC
& FPR95 & AUROC
& FPR95 & AUROC
& FPR95 & AUROC
& FPR95 & AUROC
& FPR95 & AUROC
& FPR95 & AUROC \\
\cline{2-16} 

& \textbf{GradPCA}
 & 17.2 & 96.58
 & 29.64 & 93.56
 & 8.28 & 98.42
 & 51.75 & 88.93
 & 56.93 & 87.34
 & 3.41 & 99.24
 & 27.87 & 94.01
 \\ 

& \textbf{GradPCA+DICE ($p=0.1$)}
 & 17.67 & 96.31
 & 29.58 & 93.39
 & 9.50 & 98.18
 & 48.69 & 89.98
 & 55.29 & 87.90
 & 3.85 & 99.09
 & 27.43 & 94.14
 \\ 

& \textbf{GradPCA+DICE ($p=0.2$)}
 & 17.52 & 96.43
 & 29.06 & 93.54
 & 10.26 & 98.07
 & 49.37 & 90.13
 & 55.79 & 88.05
 & 3.98 & 99.11
 & 27.66 & 94.22
 \\ 

& \textbf{GradPCA+DICE ($p=0.3$)}
 & 16.78 & 96.59
 & 27.71 & 93.91
 & 9.31 & 98.23
 & 48.73 & 90.17
 & 54.72 & 88.27
 & 3.52 & 99.18
 & 26.79 & 94.39
 \\ 

& \textbf{GradPCA+DICE ($p=0.4$)}
 & 17.09 & 96.62
 & 27.62 & 94.03
 & 8.92 & 98.34
 & 48.66 & 90.00
 & 54.64 & 88.23
 & 3.30 & 99.24
 & 26.71 & 94.41
 \\ 

& \textbf{GradPCA+DICE ($p=0.5$)}
 & 17.78 & 96.57
 & 28.77 & 93.85
 & 8.65 & 98.40
 & 50.70 & 89.44
 & 56.25 & 87.85
 & 3.53 & 99.22
 & 27.61 & 94.22
 \\ 

& \textbf{GradPCA+DICE ($p=0.6$)}
 & 17.85 & 96.55
 & 29.02 & 93.79
 & 8.44 & 98.42
 & 51.23 & 89.19
 & 56.40 & 87.72
 & 3.50 & 99.22
 & 27.74 & 94.15
 \\ 

& \textbf{GradPCA+DICE ($p=0.7$)}
 & 17.88 & 96.53
 & 29.46 & 93.64
 & 8.18 & 98.46
 & 51.76 & 88.80
 & 56.67 & 87.41
 & 3.46 & 99.22
 & 27.90 & 94.01
 \\ 

& \textbf{GradPCA+DICE ($p=0.8$)}
 & 18.11 & 96.57
 & 30.77 & 93.34
 & 8.12 & 98.46
 & 55.72 & 87.43
 & 59.53 & 86.27
 & 3.59 & 99.20
 & 29.31 & 93.54
 \\ 

& \textbf{GradPCA+DICE ($p=0.9$)}
 & 16.72 & 96.98
 & 32.33 & 93.00
 & 7.19 & 98.66
 & 62.57 & 85.02
 & 64.50 & 84.31
 & 3.16 & 99.34
 & 31.08 & 92.88
 \\ 

\bottomrule
\toprule

\parbox[t]{5mm}{\multirow{10}{*}{\rotatebox[origin=c]{90}{ResNet-34 (TIMM)}}}
& \textbf{GradPCA}
 & 61.22 & 89.10
 & 62.71 & 87.31
 & 63.45 & 87.11
 & 73.97 & 84.25
 & 73.01 & 83.85
 & 72.30 & 84.63
 & 67.78 & 86.04
 \\ 

& \textbf{GradPCA+DICE ($p=0.1$)}
 & 59.39 & 89.48
 & 63.04 & 87.08
 & 61.03 & 87.90
 & 75.75 & 83.04
 & 74.09 & 82.75
 & 72.80 & 83.78
 & 67.68 & 85.67
 \\ 

& \textbf{GradPCA+DICE ($p=0.2$)}
 & 56.02 & 90.24
 & 61.99 & 87.31
 & 58.14 & 88.66
 & 75.21 & 82.89
 & 73.60 & 82.68
 & 71.52 & 83.93
 & 66.08 & 85.95
 \\ 

& \textbf{GradPCA+DICE ($p=0.3$)}
 & 55.25 & 90.44
 & 60.95 & 87.34
 & 56.99 & 88.84
 & 74.40 & 82.84
 & 73.03 & 82.60
 & 70.88 & 83.89
 & 65.25 & 85.99
 \\ 

& \textbf{GradPCA+DICE ($p=0.4$)}
 & 55.39 & 90.49
 & 61.68 & 87.20
 & 57.09 & 88.89
 & 75.37 & 82.58
 & 73.60 & 82.35
 & 71.34 & 83.66
 & 65.75 & 85.86
 \\ 

& \textbf{GradPCA+DICE ($p=0.5$)}
 & 55.36 & 90.30
 & 61.60 & 87.13
 & 58.36 & 88.42
 & 74.76 & 82.98
 & 73.23 & 82.68
 & 71.45 & 83.77
 & 65.79 & 85.88
 \\ 

& \textbf{GradPCA+DICE ($p=0.6$)}
 & 58.87 & 89.75
 & 62.40 & 86.96
 & 61.99 & 87.62
 & 74.89 & 83.20
 & 73.85 & 82.73
 & 72.53 & 83.58
 & 67.42 & 85.64
 \\ 

& \textbf{GradPCA+DICE ($p=0.7$)}
 & 61.06 & 89.33
 & 62.96 & 87.17
 & 63.43 & 87.17
 & 74.43 & 84.24
 & 73.29 & 83.74
 & 73.10 & 84.30
 & 68.05 & 85.99
 \\ 

& \textbf{GradPCA+DICE ($p=0.8$)}
 & 61.21 & 89.10
 & 61.27 & 87.84
 & 64.81 & 86.62
 & 70.59 & 86.40
 & 69.95 & 85.79
 & 70.24 & 86.10
 & 66.34 & 86.98
 \\ 

& \textbf{GradPCA+DICE ($p=0.9$)}
 & 64.90 & 87.98
 & 63.27 & 87.06
 & 68.20 & 85.60
 & 73.69 & 85.35
 & 72.94 & 84.75
 & 73.40 & 84.98
 & 69.40 & 85.95
 \\ 

\bottomrule

\end{tabular}
}
\caption{\textbf{CIFAR-100}: Ablation of the parameter $p$ in GradPCA+DICE.}
\label{table:cifar100_dice}
\end{table*}

From Tables~\ref{table:cifar100_dice} and \ref{table:imagenet_dice}, we observe that the optimal choice of the sparsity parameter $p$ varies significantly across settings. At the same time, sparsification does not lead to substantial performance differences compared to the default GradPCA in most cases. We fix the default value to $p = 0.8$, as it provides the greatest benefit on the BiT-M ImageNet benchmark. This value is also close to the DICE-recommended setting of $p = 0.7$, which is considered optimal for ImageNet.

\subsection{Ablation: GradPCA-Batch Batch Size}

\begin{table*}[ht]
\centering
\resizebox{0.92\textwidth}{!}{
\begin{tabular}{|l|l|cc|cc|cc|cc|cc|cc|cc|}
\toprule
\parbox[t]{5mm}{\multirow{5}{*}{\rotatebox[origin=c]{90}{BiT-M}}} 
& \multicolumn{1}{c|}{\multirow{3}{*}{\textbf{Methods}}} 
& \multicolumn{2}{c|}{\textbf{SVHN}}   
& \multicolumn{2}{c|}{\textbf{Places}}  
& \multicolumn{2}{c|}{\textbf{LSUN-c}} 
& \multicolumn{2}{c|}{\textbf{LSUN-r}} 
& \multicolumn{2}{c|}{\textbf{iSUN}}  
& \multicolumn{2}{c|}{\textbf{Textures}}    
& \multicolumn{2}{c|}{\textbf{Average}}  \\ 

\cline{3-16}   
&
& FPR95 & AUROC
& FPR95 & AUROC
& FPR95 & AUROC
& FPR95 & AUROC
& FPR95 & AUROC
& FPR95 & AUROC
& FPR95 & AUROC \\
\cline{2-16} 

& \textbf{GradPCA-Batch ($N=1000$)}
 & 0.75 & 99.62
 & 4.00 & 99.02
 & 3.27 & 99.22
 & 9.43 & 98.07
 & 8.13 & 98.23
 & 0.02 & 99.99
 & 4.27 & 99.02
 \\ 

& \textbf{GradPCA-Batch ($N=2000$)}
 & 0.64 & 99.70
 & 3.65 & 99.17
 & 2.68 & 99.36
 & 8.21 & 98.35
 & 6.88 & 98.49
 & 0.02 & 99.99
 & 3.68 & 99.18
 \\ 

& \textbf{GradPCA-Batch ($N=3000$)}
 & 0.53 & 99.70
 & 3.43 & 99.21
 & 2.44 & 99.40
 & 8.01 & 98.38
 & 7.02 & 98.52
 & 0.02 & 99.99
 & 3.57 & 99.20
 \\ 

& \textbf{GradPCA-Batch ($N=4000$)}
 & 0.56 & 99.73
 & 3.31 & 99.27
 & 2.26 & 99.45
 & 6.83 & 98.61
 & 5.80 & 98.72
 & 0.02 & 99.99
 & 3.13 & 99.29
 \\ 

& \textbf{GradPCA-Batch ($N=5000$)}
 & 0.52 & 99.73
 & 3.20 & 99.30
 & 2.23 & 99.45
 & 6.48 & 98.67
 & 5.55 & 98.74
 & 0.02 & 99.99
 & 3.00 & 99.31
 \\ 

\bottomrule
\toprule

\parbox[t]{5mm}{\multirow{5}{*}{\rotatebox[origin=c]{90}{TIMM}}}
& \textbf{GradPCA-Batch ($N=1000$)}
 & 24.47 & 96.04
 & 51.05 & 90.83
 & 26.13 & 95.79
 & 46.08 & 92.85
 & 48.30 & 92.12
 & 47.02 & 91.81
 & 40.51 & 93.24
 \\ 

& \textbf{GradPCA-Batch ($N=2000$)}
 & 24.56 & 95.99
 & 49.52 & 90.99
 & 25.40 & 95.85
 & 44.24 & 93.05
 & 46.13 & 92.35
 & 44.98 & 92.10
 & 39.14 & 93.39
 \\ 

& \textbf{GradPCA-Batch ($N=3000$)}
 & 24.43 & 95.99
 & 48.57 & 91.10
 & 25.11 & 95.87
 & 42.16 & 93.28
 & 44.29 & 92.58
 & 42.72 & 92.47
 & 37.88 & 93.55
 \\ 

& \textbf{GradPCA-Batch ($N=4000$)}
 & 25.05 & 95.91
 & 48.14 & 91.21
 & 24.98 & 95.91
 & 41.83 & 93.34
 & 44.11 & 92.64
 & 42.97 & 92.51
 & 37.85 & 93.59
 \\ 

& \textbf{GradPCA-Batch ($N=5000$)}
 & 24.90 & 95.95
 & 48.65 & 91.14
 & 25.22 & 95.87
 & 42.32 & 93.29
 & 44.41 & 92.59
 & 43.29 & 92.46
 & 38.13 & 93.55
 \\ 

\bottomrule

\end{tabular}
}
\caption{\textbf{CIFAR-10:} Ablation of batch size $N$ in GradPCA-Batch.}
\label{table:cifar10_batch}
\end{table*}

In this section, we present ablations for the batch size, denoted $N$, used in the GradPCA-Batch variant of our method. Computing principal components over a batch requires storing $N$ gradient vectors, which is not feasible for large parameter spaces or large batch sizes. In addition, performing SVD on the batch NTK matrix of size $N \times N$ scales poorly with $N$, further limiting practicality. As a result, we apply this variant only in the smaller CIFAR-10 benchmark, with results shown in Table~\ref{table:cifar10_batch}. Nevertheless, where computationally feasible, GradPCA-Batch can provide additional performance gains.

\subsection{NTK block structure figures}\label{sec:ntk_block_structure}
In this section, we provide a full description of the NTK block-structure illustrations shown in Figure~\ref{fig:block_structure}. Each heatmap visualizes the kernel values $\sum_{c=1}^C \Theta(x_i, x_j)$ computed over all network parameters. The kernels are evaluated on a random subset of the training data containing 12 samples per class, i.e., $x_i$ and $x_j$ range over 120 inputs in total. The samples are ordered so that examples from the same class appear consecutively, i.e., $x_{12c},\dots,x_{12c+11}$ correspond to class $c\in\{1,\dots,C\}$. Under this ordering, the diagonal blocks represent within-class interactions. All heatmaps are max-normalized, with white indicating the maximum kernel value and black indicating zero. The models included in the figure are: (1) ResNet20 trained on MNIST, (2) DenseNet40 trained on CIFAR10, and (3) VGG11 trained on CIFAR10. Each model was trained for 400 epochs with a batch size of 120 using SGD with Nesterov momentum $0.9$ and weight decay $5\cdot 10^{-4}$.

\subsection{GradPCA under label noise}
\begin{table*}[ht]
\centering
\resizebox{0.95\textwidth}{!}{
\begin{tabular}{|c|c|cc|cc|cc|cc|cc|cc|cc|}
\toprule
\multirow{2}{*}{\textbf{Label noise}} &
\multirow{2}{*}{\textbf{Test Acc}} &
\multicolumn{2}{c|}{\textbf{SVHN}} &
\multicolumn{2}{c|}{\textbf{Places}} &
\multicolumn{2}{c|}{\textbf{LSUN-c}} &
\multicolumn{2}{c|}{\textbf{LSUN-r}} &
\multicolumn{2}{c|}{\textbf{iSUN}} &
\multicolumn{2}{c|}{\textbf{Textures}} &
\multicolumn{2}{c|}{\textbf{Average}} \\
\cline{3-16}
 & & FPR95 & AUROC & FPR95 & AUROC & FPR95 & AUROC & FPR95 & AUROC & FPR95 & AUROC & FPR95 & AUROC & FPR95 & AUROC \\
\toprule

\textbf{0\%}  & 0.88 & 62.06 & 89.42 & 64.75 & 84.92 & 55.93 & 90.12 & 62.01 & 86.91 & 63.03 & 86.32 & 74.49 & 81.38 & 63.71 & 86.51 \\
\textbf{10\%} & 0.80 & 69.75 & 83.21 & 73.60 & 79.83 & 59.43 & 86.02 & 69.76 & 82.84 & 71.05 & 82.25 & 77.98 & 78.11 & 70.26 & 82.04 \\
\textbf{20\%} & 0.72 & 70.52 & 81.58 & 81.50 & 71.59 & 69.11 & 80.16 & 79.16 & 73.95 & 77.41 & 75.10 & 75.57 & 75.57 & 75.55 & 76.33 \\
\textbf{30\%} & 0.63 & 82.30 & 74.48 & 85.43 & 68.79 & 83.45 & 72.01 & 83.85 & 70.33 & 82.41 & 70.80 & 80.90 & 68.65 & 83.06 & 70.84 \\
\textbf{40\%} & 0.53 & 76.89 & 72.09 & 90.18 & 62.18 & 73.68 & 71.31 & 90.58 & 62.40 & 90.49 & 62.35 & 88.30 & 58.77 & 85.34 & 64.85 \\
\textbf{50\%} & 0.41 & 90.46 & 59.08 & 91.95 & 58.55 & 90.95 & 61.74 & 91.83 & 59.30 & 91.44 & 59.55 & 82.96 & 62.26 & 89.98 & 60.41 \\
\textbf{60\%} & 0.30 & 83.35 & 62.65 & 93.39 & 55.21 & 81.04 & 64.36 & 94.21 & 55.02 & 93.92 & 55.23 & 89.17 & 55.57 & 89.18 & 58.01 \\
\textbf{70\%} & 0.21 & 88.72 & 57.42 & 94.33 & 51.52 & 75.26 & 64.69 & 94.02 & 51.73 & 94.20 & 51.60 & 90.45 & 52.86 & 89.49 & 54.30 \\
\textbf{80\%} & 0.12 & 95.57 & 42.67 & 94.81 & 51.49 & 93.60 & 53.34 & 94.39 & 51.08 & 94.45 & 51.41 & 94.64 & 48.75 & 94.91 & 49.63 \\
\bottomrule
\end{tabular}
}
\caption{\textbf{VGG16 on CIFAR-10 with label noise.} GradPCA OOD performance across noise levels.}
\label{table:label_noise}
\end{table*}

In this section, we present experiments illustrating how label noise introduced during training affects the performance of GradPCA. Specifically, we train a VGG16 model on CIFAR-10 under ten levels of label noise, ranging from 0\% to 90\%. The label noise is fixed, i.e., labels are randomly corrupted once before training begins and remain unchanged throughout training. The training setup is as described in Appendix~\ref{sec:random_seed}. Results are reported in Table~\ref{table:label_noise}. We observe that GradPCA’s performance degrades steadily as the noise level increases, mirroring the decline in test accuracy.

\end{document}